\providecommand{\customgenericname}{}
\newcommand{\newcustomtheorem}[2]{%
  \newenvironment{#1}[1]
  {%
   \renewcommand\customgenericname{#2}%
   \renewcommand\theinnercustomgeneric{##1}%
   \innercustomgeneric
  }
  {\endinnercustomgeneric}
}
\newtheorem{theorem}{Theorem}
\newtheorem{lemma}{Lemma}
\newtheorem*{definition*}{Definition}
\newtheorem*{theorem*}{Theorem}
\newtheorem*{lemma*}{Lemma}
\DeclareMathOperator*{\reshape}{vec}
\DeclareMathOperator*{\pad}{pad}
\DeclareMathOperator*{\LipBound}{LipBound}
\newcommand{\ci}{\mathbf{i}}
\newcommand{\leftmat}{\left(} 
\newcommand{\rightmat}{\right)}
\newcommand{\seqidx}{h}
\newcommand{\seqsetN}{N}
\newcommand{\seqsetM}{M}
\newcommand{\cin}{cin}
\newcommand{\cout}{cout}
\newcommand{\eg}{\textit{e.g}\xspace}
\newcommand{\ie}{\textit{i.e.}\xspace}
\title{On Lipschitz Regularization of Convolutional Layers \\ using Toeplitz Matrix Theory}
\author{
  Alexandre Araujo,
  Benjamin Negrevergne,
  Yann Chevaleyre,
  Jamal Atif \\
}
\newenvironment{testenv}{
  \newpage
  \setlength{\parindent}{0pt}
  \newgeometry{
    textheight=9in,
    textwidth=7in,
    top=1in,
    headheight=12pt,
    headsep=25pt,
    footskip=30pt
  }
  \onecolumn
}{\restoregeometry}
\begin{document}

  \maketitle

\begin{abstract}
This paper tackles the problem of Lipschitz regularization of Convolutional Neural Networks.
Lipschitz regularity is now established as a key property of modern deep learning with implications in training stability, generalization, robustness against adversarial examples, etc.
However, computing the exact value of the Lipschitz constant of a neural network is known to be NP-hard.
Recent attempts from the literature introduce upper bounds to approximate this constant that are either efficient but loose or accurate but computationally expensive.
In this work, by leveraging the theory of Toeplitz matrices, we introduce a new upper bound for convolutional layers that is both tight and easy to compute.
Based on this result we devise an algorithm to train Lipschitz regularized Convolutional Neural Networks.
\end{abstract}

\section{Introduction}
\label{section:introduction}

The last few years have witnessed a growing interest in Lipschitz regularization of neural networks, with the aim of improving their generalization~\cite{bartlett2017spectrally}, their robustness to adversarial attacks~\cite{tsuzuku2018lipschitz, farnia2018generalizable}, or their generation abilities (\eg for GANs: \citealt{miyato2018spectral,arjovsky2017wasserstein}).
Unfortunately computing  the exact Lipschitz constant of a neural network is NP-hard~\cite{scaman2018lipschitz} and in practice, existing techniques such as~\citet{scaman2018lipschitz, NIPS2019_9319} or~\citet{latorre2020lipschitz} are difficult to implement for neural networks with more than one or two layers, which hinders their use in deep learning applications.

To overcome this difficulty, most of the work has focused on computing the Lipschitz constant of \emph{individual layers} instead.
The product of the Lipschitz constant of each layer is an upper-bound for the Lipschitz constant of the entire network, and it can be used as a surrogate to perform Lipschitz regularization.
Since most common activation functions (such as ReLU) have a Lipschitz constant equal to one, the main bottleneck is to compute the Lipschitz constant of the underlying linear application which is equal to its maximal singular value.
The work in this line of research mainly relies on the celebrated iterative algorithm by~\citet{golub2000eigenvalue} used to approximate the maximum singular value of a linear function.
Although generic and accurate, this technique is also computationally expensive, which impedes its usage in large training settings. 

In this paper we introduce a new upper bound on the largest singular value of convolution layers that is both tight and easy to compute.
Instead of using the power method to iteratively approximate this value, we rely on Toeplitz matrix theory and its links with Fourier analysis.
Our work is based on the result \cite{gray2006toeplitz} that an upper bounded on the singular value of  Toeplitz matrices can be computed from 
the inverse Fourier transform of the characteristic sequence of these matrices.
We first extend this result to doubly-block Toeplitz matrices (\ie, block Toeplitz matrices where each block is Toeplitz) and then to convolutional operators, which can be represented as stacked sequences of doubly-block Toeplitz matrices. 
From our analysis immediately follows an algorithm for bounding the Lipschitz constant of a convolutional layer, and by extension the Lipschitz constant of the whole network. We theoretically study the approximation of this algorithm and show experimentally that it is more efficient and accurate than competing approaches.

Finally, we illustrate our approach on adversarial robustness. 
Recent work has shown that empirical methods such as adversarial training (AT) offer poor generalization~\cite{schmidt2018adversarially}, and can be improved by applying Lipschitz regularization~\cite{farnia2018generalizable}.
To illustrate the benefit of our new method, we train a large, state-of-the-art Wide ResNet architecture with Lipschitz regularization and show that it offers a significant improvement over adversarial training alone, and over other methods for Lipschitz regularization. 
In summary, we make the three following contributions:

\begin{enumerate}
  \item We devise an upper bound on the singular values of the operator matrix of convolutional layers by leveraging Toeplitz matrix theory and its links with Fourier analysis.
  \item We propose an efficient algorithm to compute this upper bound which enables its use in the context of Convolutional Neural Networks.
  \item We use our method to regularize the Lipschitz constant of neural networks for adversarial robustness and show that it offers a significant improvement over AT alone.
\end{enumerate}

\section{Related Work}
\label{section:related_work}

A popular technique for approximating the maximal singular value of a matrix is the power method~\cite{golub2000eigenvalue}, an iterative algorithm which yields a good approximation of the maximum singular value when the algorithm is able to run for a sufficient number of iterations.

\citet{yoshida2017spectral, miyato2018spectral} have used the power method to normalize the spectral norm of each layer of a neural network, and showed that the resulting models offered improved generalization performance and generated better examples when they were used in the context of GANs. 
\citealt{farnia2018generalizable} built upon the work of ~\citet{miyato2018spectral} and proposed a power method specific for convolutional layers that leverages the deconvolution operation and avoid the computation of the gradient.
They used it in combination with adversarial training. 
In the same vein, \citet{gouk2018regularisation} demonstrated that regularized neural networks using the power method also offered improvements over their non-regularized counterparts. 
Furthermore, \citet{tsuzuku2018lipschitz} have shown that a neural network can be more robust to some adversarial attacks,  if the prediction margin of the network (\ie, the difference between the first and the second maximum logit) is higher than a minimum threshold that depends on the global Lipschitz constant of the network.
Building on this observation, they use the power method to compute an upper bound on the global Lipschitz constant, and maximize the prediction margin during training.
Finally, \citet{scaman2018lipschitz} have used automatic differentiation combined with the power method to compute a tighter bound on the global Lipschitz constant of neural networks.
Despite a number of interesting results, using the power method is expensive and results in prohibitive training times. 

Other approaches to regularize the Lipschitz constant of neural networks have been proposed by~\citet{sedghi2018the} and ~\citet{singla2019bounding}.
The method of~\citet{sedghi2018the} exploits the properties of circulant matrices to approximate the maximal singular value of a convolutional layer.
Although interesting, this method results in a loose approximation of the maximal singular value of a convolutional layer.
Furthermore, the complexity of their algorithm is dependent on the convolution input which can be high for large datasets such as ImageNet.
More recently, \citet{singla2019bounding} have successfully bounded the operator norm of the Jacobian matrix of a convolution layer by the Frobenius norm of the reshaped kernel.
This technique has the advantage to be very fast to compute and to be independent of the input size but it also results in a loose approximation. 

To build robust neural networks, \citet{cisse2017parseval} and ~\citet{NIPS2019_9673} have proposed to constrain the Lipschitz constant of neural networks by using orthogonal convolutions.
\citet{cisse2017parseval} use the concept of \emph{parseval tight frames}, to constrain their networks.
\citet{NIPS2019_9673} built upon the work of~\citet{cisse2017parseval} to propose an efficient construction method of orthogonal convolutions.  
Also, recent work~\cite{NIPS2019_9319,latorre2020lipschitz} has proposed a tight bound on the Lipschitz constant of the full network with the use of semi-definite programming.
These works are theoretically interesting but lack scalability (\ie, the bound can only be computed on small networks).

Finally, in parallel to the development of the results in this paper, we discovered that \citet{yi2020asymptotic} have studied the asymptotic distribution of the singular values of convolutional layers by using a related approach. However, this author does not investigate the robustness applications of Lipschitz regularization.

\section{A Primer on Toeplitz and block Toeplitz matrices}
\label{section:primer_toeplitz_matrix}

In order to devise a bound on the Lipschitz constant of a convolution layer as used by the Deep Learning community, we study the properties of doubly-block Toeplitz matrices.
In this section, we first introduce the necessary background on Toeplitz and block Toeplitz matrices, and introduce a new result on doubly-block Toeplitz matrices.

Toeplitz matrices and block Toeplitz matrices are well-known types of structured matrices.
A Toeplitz matrix  (respectively a block Toeplitz matrix) is a matrix in which each scalar (respectively block) is repeated identically along diagonals.

An $n\times n$ Toeplitz matrix $\mathbf A$ is fully determined by a two-sided sequence of scalars: $\{a_\seqidx\}_{\seqidx \in \seqsetN}$, whereas an $nm\times nm$ block Toeplitz matrix $\mathbf B$ is fully determined by a two-sided sequence of blocks $\{\mathbf B_\seqidx\}_{\seqidx \in \seqsetN}$, where $\seqsetN = \{-n+1,\dots, n-1 \}$ and where each block $\mathbf B_\seqidx$ is an $m \times m$ matrix.

{\small
\begin{equation*}
    \mathbf{A} = \begin{psmallmatrix}
      a_0 & a_{1} & \cdots & a_{n-1} \\ \vspace{0.1cm}
      a_{-1} & a_0 & \ddots & \vdots \\ \vspace{0.3cm}
     \vdots & \ddots & a_{0} & a_{1} \\ 
    a_{-n+1} & \cdots  & a_{-1}    & a_0 
    \end{psmallmatrix} \quad
    \mathbf{B} = \begin{psmallmatrix}
      \mathbf{B}_0 & \mathbf{B}_{1} & \cdots & \mathbf{B}_{n-1} \\ \vspace{0.1cm}
      \mathbf{B}_{-1} & \mathbf{B}_0 & \ddots & \vdots \\ \vspace{0.3cm}
     \vdots & \ddots & \mathbf{B}_0 & \mathbf{B}_{1} \\ 
    \mathbf{B}_{-n+1} & \cdots  & \mathbf{B}_{-1}    & \mathbf{B}_0 
    \end{psmallmatrix}.
\end{equation*}
}

Finally, a doubly-block Toeplitz matrix is a block Toeplitz matrix in which each block is itself a Toeplitz matrix.
In the remainder, we will use the standard notation $\leftmat\ \cdot\ \rightmat_{i,j\in\{0,\ldots,n-1\}}$ to construct (block) matrices.
For example, $\mathbf{A} = \leftmat a_{j-i} \rightmat_{i,j \in \{0, \ldots, n-1 \}}$ and $\mathbf{B} = \leftmat \mathbf{B}_{j-i} \rightmat_{i,j \in \{0, \ldots, n-1\}}$.

\subsection{Bound on the singular value of Toeplitz and block Toeplitz matrices}
\label{subsection:generating_function}

A standard tool for manipulating (block) Toeplitz matrices is the use of Fourier analysis.
Let $\{a_\seqidx\}_{\seqidx \in \seqsetN}$ be the sequence of coefficients of the Toeplitz matrix $\mathbf{A} \in \mathbb{R}^{n\times n}$ and let $\{\mathbf B_\seqidx\}_{\seqidx \in \seqsetN}$ be the sequence of $m\times m$ blocks of the block Toeplitz matrix $\mathbf{B}$.
The complex-valued function $f(\omega) = \sum_{\seqidx \in \seqsetN} a_\seqidx e^{\ci \seqidx \omega}$ and the matrix-valued function $F(\omega) = \sum_{\seqidx \in \seqsetN} \mathbf{B}_\seqidx e^{\ci \seqidx \omega}$ are the \emph{inverse Fourier transforms} of the sequences $\{a_\seqidx\}_{\seqidx \in \seqsetN}$ and $\{\mathbf B_\seqidx\}_{\seqidx \in \seqsetN}$, with $\omega \in \mathbb{R}$.
From these two functions, one can recover these two sequences using the standard Fourier transform:
{\small
\begin{equation}
  a_\seqidx = \frac{1}{2\pi} \int_0^{2\pi} e^{-\ci \seqidx \omega} f(\omega) d\omega \quad \mathbf{B}_\seqidx = \frac{1}{2\pi} \int_0^{2\pi} e^{-\ci \seqidx \omega} F(\omega) d\omega .
\end{equation}}
From there, similarly to the work done by~\citet{gray2006toeplitz} and~\citet{gutierrez2012block}, we can define an operator $\mathbf{T}$ mapping integrable functions to matrices:
\begin{equation}
  \mathbf{T}(g)  \triangleq\leftmat\frac{1}{2\pi}\int_{0}^{2\pi}e^{-\ci(i-j)\omega}g(\omega)\,d\omega\rightmat_{i,j\in\{0,\ldots,n-1\}} .
  \label{equation:expression_toeplitz_matrix}
\end{equation}

Note that if $f$ is the inverse Fourier transform of $\{a_\seqidx\}_{\seqidx \in \seqsetN}$, then $\mathbf{T}(f)$ is equal to $\mathbf{A}$.
Also, if $F$ is the inverse Fourier transform of $\{\mathbf B_\seqidx\}_{\seqidx \in \seqsetN}$ as defined above, then the integral in Equation~\ref{equation:expression_toeplitz_matrix} is matrix-valued, and thus $\mathbf{T}(F) \in \mathbb{R}^{mn \times mn}$ is the block matrix $\mathbf{B}$.
Now, we can state two known theorems which upper bound the maximal singular value of Toeplitz and block Toeplitz matrices with respect to their generating functions.
In the rest of the paper, we refer to $\sigma_1(\ \cdot\ )$ as the maximal singular value. 
\begin{theorem}[Bound on the singular values of Toeplitz matrices] \label{theorem:teoplitz_sup_singular}
  Let $f: \mathbb{R} \rightarrow \mathbb{C}$, be continuous and $2\pi$-periodic. Let $\mathbf{T}(f) \in \mathbb{R}^{n \times n}$ be a Toeplitz matrix generated by the function $f$, then:
  \begin{equation}
    \sigma_1\left(\mathbf{T}(f)\right) \leq \sup_{\omega \in [0, 2\pi]} |f(\omega)|.
  \end{equation}
\end{theorem}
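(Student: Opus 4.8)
The plan is to control the operator $2$-norm $\sigma_1(\mathbf{T}(f)) = \sup_{\|x\|_2 = \|y\|_2 = 1} |y^{*}\mathbf{T}(f)x|$ by moving to the Fourier side, where a Toeplitz matrix behaves like multiplication by its generating function. Given unit vectors $x,y \in \mathbb{C}^{n}$, I would introduce the trigonometric polynomials $p_x(\omega) = \sum_{j=0}^{n-1} x_j e^{\ci j\omega}$ and $p_y(\omega) = \sum_{i=0}^{n-1} y_i e^{\ci i\omega}$, and use the definition of $\mathbf{T}(f)$ in Equation~\ref{equation:expression_toeplitz_matrix} together with a (finite) interchange of sum and integral to rewrite the bilinear form as
\begin{equation*}
\begin{aligned}
  y^{*}\mathbf{T}(f)x &= \sum_{i,j=0}^{n-1} \overline{y_i}\,x_j \cdot \frac{1}{2\pi}\int_0^{2\pi} e^{-\ci(i-j)\omega} f(\omega)\,d\omega \\
  &= \frac{1}{2\pi}\int_0^{2\pi} f(\omega)\,\overline{p_y(\omega)}\,p_x(\omega)\,d\omega .
\end{aligned}
\end{equation*}

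Next I would estimate this integral. Since $f$ is continuous and $2\pi$-periodic it is bounded on $[0,2\pi]$, so taking absolute values and pulling out $\sup_{\omega}|f(\omega)|$ gives $|y^{*}\mathbf{T}(f)x| \le \bigl(\sup_{\omega\in[0,2\pi]}|f(\omega)|\bigr)\,\frac{1}{2\pi}\int_0^{2\pi}|p_y(\omega)|\,|p_x(\omega)|\,d\omega$. Applying Cauchy--Schwarz to the remaining integral and then Parseval's identity for the orthonormal system $\{e^{\ci k\omega}\}_{k}$ on $[0,2\pi]$ --- which yields $\frac{1}{2\pi}\int_0^{2\pi}|p_x(\omega)|^2\,d\omega = \sum_{j=0}^{n-1}|x_j|^2 = \|x\|_2^2 = 1$ and likewise for $p_y$ --- shows that $\frac{1}{2\pi}\int_0^{2\pi}|p_y||p_x|\,d\omega \le 1$. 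Hence $|y^{*}\mathbf{T}(f)x| \le \sup_{\omega}|f(\omega)|$, and taking the supremum over all unit $x,y$ proves the theorem.

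I do not expect a real obstacle: the whole argument is elementary once the Fourier reformulation of $y^{*}\mathbf{T}(f)x$ is written down, and that identity is just bookkeeping with a finite sum against an integrable $f$. The only points worth stating carefully are that continuity and periodicity of $f$ make $\sup_{\omega\in[0,2\pi]}|f(\omega)|$ finite (indeed attained), and that the Parseval step needs nothing more than orthogonality of complex exponentials. A slicker but less self-contained alternative would be to note that $\mathbf{T}(f)$ is the compression $P_n\mathcal{L}(f)P_n$ of the bi-infinite Laurent (multiplication) operator $\mathcal{L}(f)$ on $\ell^2(\mathbb{Z})$, whose norm equals $\sup_{\omega}|f(\omega)|$, so that compressing by an orthogonal projection cannot increase the norm; I would nonetheless keep the direct computation, since the same trigonometric-polynomial device extends essentially verbatim to the block and doubly-block cases treated next.
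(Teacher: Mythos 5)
Your proof is correct. The paper does not give its own proof of Theorem~1; it simply cites Lemma~4.1 of Gray (2006), whose underlying argument is exactly the one you reconstruct: identify $y^{*}\mathbf{T}(f)x$ with $\frac{1}{2\pi}\int_0^{2\pi} f(\omega)\,\overline{p_y(\omega)}\,p_x(\omega)\,d\omega$, then bound by $\sup_\omega|f(\omega)|$ using Cauchy--Schwarz and Parseval. One small but genuine point in your favor: Gray's Lemma~4.1 is stated for eigenvalues of Hermitian Toeplitz matrices generated by real-valued $f$, whereas the theorem here concerns the largest singular value of a (generally non-symmetric) real Toeplitz matrix generated by a complex-valued $f$; passing from the Rayleigh quotient $x^{*}\mathbf{T}(f)x$ to the bilinear form $y^{*}\mathbf{T}(f)x$ is precisely what is needed for that, and your write-up makes that step explicit where the paper's one-line citation glosses over it.
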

\noindent
Theorem~\ref{theorem:teoplitz_sup_singular} is a direct application of Lemma 4.1 in~\citet{gray2006toeplitz} for real Toeplitz matrices. 
\begin{theorem}[Bound on the singular values of Block Toeplitz matrices ~\cite{gutierrez2012block}] \label{theorem:block_teoplitz_sup_singular}
    Let $F: \mathbb{R} \rightarrow \mathbb{C}^{m \times m}$ be a matrix-valued function which is continuous and $2 \pi$-periodic. Let $\mathbf{T}(F) \in \mathbb{R}^{mn \times mn}$ be a block Toeplitz matrix generated by the function $F$, then:
  \begin{equation}
    \sigma_1\left(\mathbf{T}(F)\right) \leq \sup_{\omega \in [0, 2\pi]} \sigma_1(F\left(\omega)\right) .
  \end{equation}
\end{theorem}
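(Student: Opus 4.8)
The plan is to transpose the proof of Theorem~\ref{theorem:teoplitz_sup_singular} (\ie Lemma~4.1 of~\citet{gray2006toeplitz}) to the block setting: the only change is that scalar trigonometric polynomials become $\mathbb{C}^m$-valued ones and $|f(\omega)|$ becomes $\sigma_1(F(\omega))$. First I would write the maximal singular value as a sesquilinear form: since $\sigma_1$ of a real matrix equals its operator norm on $\mathbb{C}^{mn}$, we have $\sigma_1(\mathbf{T}(F)) = \sup |v^* \mathbf{T}(F) u|$ over unit vectors $u, v \in \mathbb{C}^{mn}$. Partition $u$ and $v$ into $n$ consecutive blocks $u_0,\dots,u_{n-1}$ and $v_0,\dots,v_{n-1}$ in $\mathbb{C}^m$, and introduce the vector-valued trigonometric polynomials $P_u(\omega) = \sum_{j=0}^{n-1} e^{\ci j\omega} u_j$ and $P_v(\omega) = \sum_{i=0}^{n-1} e^{\ci i\omega} v_i$.

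Next, plugging the integral representation $\mathbf{B}_{i-j} = \frac{1}{2\pi}\int_0^{2\pi} e^{-\ci(i-j)\omega}F(\omega)\,d\omega$ from Equation~\ref{equation:expression_toeplitz_matrix} into the double sum and factoring the exponentials onto the two sides, I would obtain the block analogue of the scalar identity,
\begin{equation*}
  v^* \mathbf{T}(F)\, u \;=\; \sum_{i,j=0}^{n-1} v_i^* \mathbf{B}_{i-j}\, u_j \;=\; \frac{1}{2\pi}\int_0^{2\pi} P_v(\omega)^* F(\omega)\, P_u(\omega)\; d\omega .
\end{equation*}
From the definition of the spectral norm of $F(\omega)$ one gets the pointwise bound $|P_v(\omega)^* F(\omega) P_u(\omega)| \le \sigma_1(F(\omega))\,\norm{P_v(\omega)}\,\norm{P_u(\omega)} \le \big(\sup_{\omega\in[0,2\pi]} \sigma_1(F(\omega))\big)\norm{P_v(\omega)}\,\norm{P_u(\omega)}$, and Cauchy--Schwarz in $L^2([0,2\pi])$ then yields
\begin{equation*}
  |v^* \mathbf{T}(F)\, u| \;\le\; \Big(\sup_{\omega\in[0,2\pi]} \sigma_1(F(\omega))\Big)\left(\frac{1}{2\pi}\int_0^{2\pi}\!\norm{P_v(\omega)}^2 d\omega\right)^{1/2}\left(\frac{1}{2\pi}\int_0^{2\pi}\!\norm{P_u(\omega)}^2 d\omega\right)^{1/2} .
\end{equation*}

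Finally, I would evaluate the two $L^2$ norms by orthonormality of $\{e^{\ci k\omega}\}_{k}$ on $[0,2\pi]$: the cross terms integrate to zero, leaving $\frac{1}{2\pi}\int_0^{2\pi}\norm{P_u(\omega)}^2 d\omega = \sum_{j=0}^{n-1} \norm{u_j}^2 = \norm{u}^2$ and likewise $\frac{1}{2\pi}\int_0^{2\pi}\norm{P_v(\omega)}^2 d\omega = \norm{v}^2$. Substituting back gives $|v^* \mathbf{T}(F) u| \le \big(\sup_{\omega} \sigma_1(F(\omega))\big)\,\norm{u}\,\norm{v}$, and taking the supremum over unit $u,v$ finishes the proof. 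I do not expect a genuine obstacle here: continuity and $2\pi$-periodicity of $F$ are used only to guarantee that $\omega \mapsto \sigma_1(F(\omega))$ is continuous on the circle — hence the supremum is a finite maximum and the Fourier integrals are well defined — and realness of $\mathbf{T}(F)$ plays no role since we test against complex vectors. The one place to be careful is bookkeeping: keeping the index convention of Equation~\ref{equation:expression_toeplitz_matrix} consistent when collapsing the double sum into a single integral, and checking that the vector-valued Parseval identity and the spectral-norm estimate dovetail exactly as in the scalar case.
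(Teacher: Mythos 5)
Your proof is correct: the sesquilinear-form representation, the identity $v^*\mathbf{T}(F)u = \frac{1}{2\pi}\int_0^{2\pi} P_v(\omega)^* F(\omega) P_u(\omega)\,d\omega$, the pointwise spectral-norm bound, Cauchy--Schwarz, and Parseval all go through exactly as you describe (the only bookkeeping issue is whether $\mathbf{T}(F)_{i,j}$ equals $\mathbf{B}_{j-i}$ or $\mathbf{B}_{i-j}$, which at most replaces $P_u(\omega)$ by $P_u(-\omega)$ and leaves every $L^2$ norm unchanged). Note, however, that the paper does not prove this statement at all: Theorem~2 is imported verbatim from \citet{gutierrez2012block} (just as Theorem~1 is imported from \citet{gray2006toeplitz}), so there is no ``paper's own proof'' to compare against. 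Your argument is the standard Fourier/Parseval proof of this bound, essentially the block-matrix transposition of Gray's Lemma~4.1 that the paper implicitly relies on, and it is sound.
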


\subsection{Bound on the singular value of Doubly-Block Toeplitz matrices}
\label{section:bound_singular_value_doubly_block_toeplitz}

We extend the reasoning from Toeplitz and block Toeplitz matrices to doubly-block Toeplitz matrices (\ie, block Toeplitz matrices where each block is also a Toeplitz matrix).
A doubly-block Toeplitz matrix can be generated by a function $f: \mathbb{R}^2 \rightarrow \mathbb{C}$ using the 2-dimensional inverse Fourier transform.
For this purpose, we define an operator $\mathbf{D}$ which maps a function $f: \mathbb{R}^2 \rightarrow \mathbb{C}$ to a doubly-block Toeplitz matrix of size $nm \times nm$.
For the sake of clarity, the dependence of $\mathbf{D}(f)$  on $m$ and $n$ is omitted.
Let $\mathbf{D}(f) =\leftmat\mathbf{D}_{i,j}(f)\rightmat_{i,j\in\{0,\ldots ,n-1\}}$ where $\mathbf{D}_{i,j}$ is defined as:
{\small
\begin{equation}
  \mathbf{D}_{i,j}(f) = \leftmat\frac{1}{4\pi^{2}}\int_{\Omega^{2}}e^{-\mathbf{i} \psi}f(\omega_{1},\omega_{2})\,d(\omega_{1},\omega_{2})\rightmat_{k,l\in\{0,\ldots,m-1\}}
  \label{equation:doubly_block_toeplitz_operator}
\end{equation}}
where $\Omega = [0, 2\pi]$ and $\psi = (i - j) \omega_{1} + (k - l) \omega_{2}$.
	
We are now able to combine Theorem~\ref{theorem:teoplitz_sup_singular} and Theorem~\ref{theorem:block_teoplitz_sup_singular} to bound the maximal singular value of doubly-block Toeplitz matrices with respect to their generating functions. 

\begin{theorem}[Bound on the Maximal Singular Value of a Doubly-Block Toeplitz Matrix] \label{theorem:doubly_block_teoplitz_sup_singular}
  Let $\mathbf{D}(f) \in \mathbb{R}^{nm \times nm}$ be a doubly-block Toeplitz matrix generated by the function $f$, then:
  \begin{equation}
    \sigma_{1} \left( \mathbf{D}(f) \right) \leq \sup_{\omega_1, \omega_2 \in [0, 2\pi]^2}|f(\omega_1,\omega_2)|
  \end{equation}
  where the function $f: \mathbb{R}^2 \rightarrow \mathbb{C}$, is a multivariate trigonometric polynomial of the form:
  \begin{equation} \label{equation:muli_variate_poly_on_M}
    f(\omega_1, \omega_2) \triangleq \sum_{h_1 \in \seqsetN} \sum_{h_2 \in \seqsetM} d_{h_1, h_2} e^{\ci (h_1 \omega_1 + h_2 \omega_2)},
  \end{equation}
  where $d_{h_{1},h_{2}}$ is the ${h_2}^\textrm{th}$ scalar of the ${h_1}^\textrm{th}$ block of the doubly-Toeplitz matrix $\mathbf{D}(f)$, and where $\seqsetM = \{-m+1,\dots, m-1 \}$.
\end{theorem}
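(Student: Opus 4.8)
The plan is to peel off one Toeplitz layer at a time: I will exhibit $\mathbf{D}(f)$ as an ordinary block Toeplitz matrix whose generating matrix-valued function is in turn built from the one-dimensional Toeplitz operator applied to the ``slices'' of $f$, and then invoke Theorem~\ref{theorem:block_teoplitz_sup_singular} followed by Theorem~\ref{theorem:teoplitz_sup_singular}.

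Concretely, first I would fix $\omega_1 \in [0,2\pi]$ and define the $m\times m$ matrix-valued function
\begin{equation*}
  F(\omega_1) \triangleq \leftmat \frac{1}{2\pi}\int_0^{2\pi} e^{-\ci (k-l)\omega_2}\, f(\omega_1,\omega_2)\, d\omega_2 \rightmat_{k,l\in\{0,\ldots,m-1\}},
\end{equation*}
that is, the Toeplitz matrix generated by the univariate trigonometric polynomial $\omega_2 \mapsto f(\omega_1,\omega_2)$. Regrouping the terms of the polynomial in Equation~\ref{equation:muli_variate_poly_on_M} as $f(\omega_1,\omega_2) = \sum_{h_2\in\seqsetM}\big(\sum_{h_1\in\seqsetN} d_{h_1, h_2}\, e^{\ci h_1\omega_1}\big) e^{\ci h_2\omega_2}$ shows that the entries of $F(\omega_1)$ are themselves trigonometric polynomials in $\omega_1$, hence $F$ is continuous and $2\pi$-periodic, which is exactly the hypothesis of Theorem~\ref{theorem:block_teoplitz_sup_singular}.

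Second, I would verify that $\mathbf{T}(F) = \mathbf{D}(f)$. By the definition of $\mathbf{T}$ in Equation~\ref{equation:expression_toeplitz_matrix}, the $(k,l)$ entry of the $(i,j)$ block of $\mathbf{T}(F)$ is $\frac{1}{2\pi}\int_0^{2\pi} e^{-\ci(i-j)\omega_1}\,\big[F(\omega_1)\big]_{k,l}\, d\omega_1$; substituting the definition of $F$ and applying Fubini's theorem --- legitimate since $f$, being a trigonometric polynomial, is bounded on the compact domain $\Omega^2$ --- to swap the two integrations produces exactly $\frac{1}{4\pi^2}\int_{\Omega^2} e^{-\ci\psi} f(\omega_1,\omega_2)\, d(\omega_1,\omega_2)$ with $\psi = (i-j)\omega_1 + (k-l)\omega_2$, which is the $(k,l)$ entry of $\mathbf{D}_{i,j}(f)$ from Equation~\ref{equation:doubly_block_toeplitz_operator}. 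Hence $\mathbf{T}(F) = \mathbf{D}(f) \in \mathbb{R}^{nm\times nm}$, so Theorem~\ref{theorem:block_teoplitz_sup_singular} applies and yields
\begin{equation*}
  \sigma_1\big(\mathbf{D}(f)\big) = \sigma_1\big(\mathbf{T}(F)\big) \leq \sup_{\omega_1\in[0,2\pi]} \sigma_1\big(F(\omega_1)\big).
\end{equation*}

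Finally, for each fixed $\omega_1$ the matrix $F(\omega_1)$ is by construction a Toeplitz matrix generated by the continuous $2\pi$-periodic scalar function $\omega_2\mapsto f(\omega_1,\omega_2)$, so Theorem~\ref{theorem:teoplitz_sup_singular} gives $\sigma_1(F(\omega_1)) \leq \sup_{\omega_2\in[0,2\pi]} |f(\omega_1,\omega_2)|$. Chaining the two bounds,
\begin{equation*}
  \sigma_1\big(\mathbf{D}(f)\big) \leq \sup_{\omega_1\in[0,2\pi]}\ \sup_{\omega_2\in[0,2\pi]} |f(\omega_1,\omega_2)| = \sup_{(\omega_1,\omega_2)\in[0,2\pi]^2} |f(\omega_1,\omega_2)|,
\end{equation*}
which is the claimed inequality. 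The work is essentially bookkeeping rather than conceptual: the one delicate point is the second step, namely justifying the Fubini interchange and correctly matching the block index $i-j$ with the first frequency and the in-block index $k-l$ with the second, so that the nested one-dimensional transforms genuinely collapse to the two-dimensional transform of Equation~\ref{equation:doubly_block_toeplitz_operator}. A minor sanity check, obtained by applying Fourier inversion to that identity, is that the $(h_1,h_2)$ coefficient of $\mathbf{D}(f)$ is indeed the coefficient $d_{h_1, h_2}$ appearing in Equation~\ref{equation:muli_variate_poly_on_M}, so the statement is self-consistent.
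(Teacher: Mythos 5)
Your proof is correct and follows essentially the same route as the paper's own argument: view $\mathbf{D}(f)$ as $\mathbf{T}(F)$ for a Toeplitz-matrix-valued generating function $F(\omega_1)$ and chain Theorem~\ref{theorem:block_teoplitz_sup_singular} with Theorem~\ref{theorem:teoplitz_sup_singular}. The only difference is presentational — you define $F$ directly as the Toeplitz operator applied to the slice $f(\omega_1,\cdot)$ and verify $\mathbf{T}(F)=\mathbf{D}(f)$ explicitly by Fubini, whereas the paper builds $F$ from the block sequence $\{\mathbf{A}_{h_1}\}$ and asserts the identification; your version is, if anything, slightly more careful on that point.
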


\section{Bound on the Singular Values of Convolutional Layers}
\label{section:bound_lipschitz_cst_conv}

From now on, without loss of generality, we will assume that $n=m$ to simplify notations.
It is well known that a discrete convolution operation with a 2d kernel applied on a 2d signal is equivalent to a matrix multiplication with a doubly-block Toeplitz matrix~\cite{jain1989fundamentals}.
However, in practice, the signal is most of the time 3-dimensional (RGB images for instance).
We call the channels of a signal \emph{channels in} denoted $cin$.
The input signal is then of size $cin \times n \times n$.
Furthermore, we perform multiple convolutions of the same signal which corresponds to the number of channels the output will have after the operation.
We call the channels of the output \emph{channels out} denoted $cout$.
Therefore, the kernel, which must take into account \emph{channels in} and \emph{channels out}, is defined as a 4-dimensional tensor of size: $\cout \times \cin \times s \times s$. 

The operation performed by a 4-dimensional kernel on a 3d signal can be expressed by the concatenation (horizontally and vertically) of doubly-block Toeplitz matrices.
Hereafter, we bound the singular value of multiple vertically stacked doubly-block Toeplitz matrices which corresponds to the operation performed by a 3d kernel on a 3d signal.

\begin{theorem}[Bound on the maximal singular value of stacked Doubly-block Toeplitz matrices] \label{theorem:theorem4} 
  Consider doubly-block Toeplitz matrices $\mathbf{D}(f_1), \dots, \mathbf{D}(f_{\cin})$ where $f_i: \mathbb{R}^2 \rightarrow \mathbb{C}$ is a generating function.
  Construct a matrix $\mathbf{M}$ with $\cin\times n^2$ rows and $n^2$ columns, as follows:
  \begin{equation}
    \mathbf{M} \triangleq \leftmat \mathbf{D}^\top(f_1), \dots, \mathbf{D}^\top(f_{\cin}) \rightmat^\top .
  \end{equation}
  Then, with $f_i$ a multivariate polynomial of the same form as Equation~\ref{equation:muli_variate_poly_on_M}, we have:
  \begin{equation} \label{equation:bound_asymptotic_equiv}
    \sigma_1\left(\mathbf{M} \right) \leq \sup_{\omega_1, \omega_2 \in \left[0, 2\pi\right]^2} \sqrt{ \sum_{i=1}^{\cin} \left|f_i\right (\omega_1, \omega_2)|^2} .
  \end{equation}
\end{theorem}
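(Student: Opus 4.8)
The plan is to reduce the statement to the block-Toeplitz bound of Theorem~\ref{theorem:block_teoplitz_sup_singular}, used twice, by recognizing $\mathbf{M}$ --- after a harmless row permutation --- as a doubly-block Toeplitz matrix whose \emph{scalar} entries have been replaced by $\cin$-dimensional column vectors. Write $d^{(i)}_{h_1,h_2}$ for the coefficient of $f_i$ indexed by $(h_1,h_2)$ as in Equation~\ref{equation:muli_variate_poly_on_M}, and set $\mathbf{d}_{h_1,h_2} \triangleq \leftmat d^{(1)}_{h_1,h_2},\dots,d^{(\cin)}_{h_1,h_2}\rightmat^\top$. The natural generating function of the reorganized matrix is the vector-valued trigonometric polynomial $F(\omega_1,\omega_2) \triangleq \leftmat f_1(\omega_1,\omega_2),\dots,f_{\cin}(\omega_1,\omega_2)\rightmat^\top$, and because $F(\omega_1,\omega_2)$ is a single column, $\sigma_1(F(\omega_1,\omega_2)) = \norm{F(\omega_1,\omega_2)}_2 = \sqrt{\sum_{i=1}^{\cin}\abs{f_i(\omega_1,\omega_2)}^2}$, which is precisely the right-hand side of Equation~\ref{equation:bound_asymptotic_equiv}.

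Concretely I would argue as follows. Since row permutations preserve singular values, reorder the $\cin\cdot n^2$ rows of $\mathbf{M}$ from the grouping ``(channel $i$, block-row $i'$, in-block-row $k$)'' to ``(block-row $i'$, in-block-row $k$, channel $i$)'', yielding $\widetilde{\mathbf{M}}$ with $\sigma_1(\widetilde{\mathbf{M}}) = \sigma_1(\mathbf{M})$. Read at the coarsest level, $\widetilde{\mathbf{M}}$ is block Toeplitz in $(i',j) \in \{0,\dots,n-1\}^2$ with rectangular blocks $\mathbf{E}_{i'-j}$ of size $(\cin\cdot n)\times n$, and each $\mathbf{E}_{h_1}$ is itself block Toeplitz in $(k,l)$ with $\cin\times1$ blocks $\mathbf{d}_{h_1,\,k-l}$. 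Using the linearity of the operator $\mathbf{T}$ of Equation~\ref{equation:expression_toeplitz_matrix}, the $\omega_1$-generating function of the outer structure is $\omega_1 \mapsto \mathbf{T}\left(F(\omega_1,\cdot\,)\right)$, and the $\omega_2$-generating function of the inner block-Toeplitz matrix $\mathbf{T}\left(F(\omega_1,\cdot\,)\right)$ is $\omega_2 \mapsto F(\omega_1,\omega_2)$ --- up to the substitution $\omega\mapsto-\omega$ and transposition, which alter neither singular values nor the supremum. Applying Theorem~\ref{theorem:block_teoplitz_sup_singular} (in its routine extension to rectangular blocks) twice then gives
\begin{equation*}
  \sigma_1(\mathbf{M}) = \sigma_1(\widetilde{\mathbf{M}}) \leq \sup_{\omega_1 \in [0,2\pi]}\sigma_1\left(\mathbf{T}\left(F(\omega_1,\cdot\,)\right)\right) \leq \sup_{\omega_1 \in [0,2\pi]}\ \sup_{\omega_2 \in [0,2\pi]}\sigma_1\left(F(\omega_1,\omega_2)\right) ,
\end{equation*}
and merging the two suprema into one over $[0,2\pi]^2$ together with the identity $\sigma_1(F(\omega_1,\omega_2)) = \sqrt{\sum_{i=1}^{\cin}\abs{f_i(\omega_1,\omega_2)}^2}$ yields Equation~\ref{equation:bound_asymptotic_equiv}.

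The main obstacle is that Theorems~\ref{theorem:block_teoplitz_sup_singular} and~\ref{theorem:doubly_block_teoplitz_sup_singular} are stated only for square blocks and scalar generating functions, while the blocks arising here ($\cin\cdot n\times n$ and $\cin\times1$) are genuinely rectangular. There are two clean ways around this. One is to invoke the rectangular-block version of the result of~\citet{gutierrez2012block}: its proof is verbatim the same, as it only uses the pointwise bound $\norm{F(\omega)\mathbf{x}}\leq\sigma_1(F(\omega))\norm{\mathbf{x}}$ and Parseval's identity. The other, keeping everything self-contained, is to prove the needed inequality directly in the Fourier domain: identify $\mathbb{R}^{n^2}$ with the bivariate trigonometric polynomials of degree ${<}n$ in each variable on the $2$-torus, note that $\mathbf{D}(f_i)\mathbf{x} = P(f_i\,\mathbf{x})$ with $P$ the orthogonal projection onto that subspace, and estimate, for any $\mathbf{x}$,
\begin{equation*}
  \norm{\mathbf{M}\mathbf{x}}_2^2 = \sum_{i=1}^{\cin}\norm{P(f_i\,\mathbf{x})}_2^2 \leq \sum_{i=1}^{\cin}\norm{f_i\,\mathbf{x}}_{L^2}^2 = \int_{[0,2\pi]^2}\Big(\sum_{i=1}^{\cin}\abs{f_i}^2\Big)\abs{\mathbf{x}}^2 \leq \Big(\sup_{[0,2\pi]^2}\sum_{i=1}^{\cin}\abs{f_i}^2\Big)\norm{\mathbf{x}}_2^2 ,
\end{equation*}
whence the bound on $\sigma_1(\mathbf{M})$ follows by taking the supremum over unit vectors $\mathbf{x}$.
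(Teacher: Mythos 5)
Your proposal is correct, but it takes a genuinely different route from the paper. The paper reduces the claim to the operator inequality $\sum_{i}\mathbf{D}^{\top}(f_i)\mathbf{D}(f_i)\preceq \mathbf{D}\bigl(\sum_i |f_i|^2\bigr)$, which it establishes by generalizing Widom's identity to doubly-block Toeplitz matrices: $\mathbf{D}(|f|^2)-\mathbf{D}^{\top}(f)\mathbf{D}(f)$ is written explicitly as a sum of products of Hankel-type matrices $\mathbf{H}^{\alpha_p\top}\mathbf{H}^{\alpha_p}$ (hence PSD), after which Lemma~\ref{appendix-th:diff_positive_semidefinite_matrices} and Theorem~\ref{theorem:doubly_block_teoplitz_sup_singular} finish the argument. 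Your second route proves exactly the same PSD inequality in one line by recognizing $\mathbf{D}(f_i)$ as the compression $P\,M_{f_i}$ of the multiplication operator to the span of $\{e^{-\ci(k_1\omega_1+k_2\omega_2)}: 0\le k_1,k_2<n\}$, so that $\norm{\mathbf{D}(f_i)\mathbf{x}}_2=\norm{P(f_i x)}_{L^2}\le\norm{f_i x}_{L^2}$ by orthogonality of the projection, and Parseval does the rest; this entirely bypasses the Widom/Hankel bookkeeping, is self-contained, and is arguably the most transparent proof of the theorem (the only care needed is the sign convention identifying $\mathbf{x}$ with $\sum_k x_k e^{-\ci k\cdot\omega}$ so that $\mathbf{D}(f)\mathbf{x}$ really is $P(fx)$, and the use of the normalized measure $\tfrac{1}{4\pi^2}d\omega$ in Parseval --- pure bookkeeping). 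Your first route (permuting rows and applying the block-Toeplitz bound twice with rectangular, vector-valued blocks) is also sound, and buys a structural interpretation of $\mathbf{M}$ as a single doubly-block Toeplitz matrix with $\mathbb{C}^{\cin\times 1}$-valued symbol $F$ satisfying $\sigma_1(F)=\bigl(\sum_i|f_i|^2\bigr)^{1/2}$; but it leans on a rectangular-block version of Theorem~\ref{theorem:block_teoplitz_sup_singular} that the paper never states, so if you go that way you should prove that extension (which, as you note, again reduces to the same compression-of-multiplication estimate). Either way, no gap.
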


In order to prove Theorem~\ref{theorem:theorem4}, we have generalized the famous Widom identity~\cite{widom1976asymptotic} expressing the relation between Toeplitz and Hankel matrices to doubly-block Toeplitz matrices.

To have a bound on the full convolution operation, we extend Theorem~\ref{theorem:theorem4} to take into account the number of output channels.
The matrix of a full convolution operation is a block matrix where each block is a doubly-block Toeplitz matrices. Therefore, we will need the following lemma which bound the singular values of a matrix constructed from the concatenation of multiple matrix.

\begin{lemma} \label{theorem:bound_concatenation_matrices}
  Let us define matrices $\mathbf{A}_1, \dots, \mathbf{A}_p$ with $\mathbf{A}_i \in \mathbb{R}^{n \times n}$. Let us construct the matrix $\mathbf{M} \in \mathbb{R}^{n \times pn}$ as follows:
  \begin{equation}
    \mathbf{M} \triangleq \leftmat \mathbf{A}_1, \dots, \mathbf{A}_p \rightmat
  \end{equation}
  where $\leftmat\ \cdot\ \rightmat$ define the concatenation operation. Then, we can bound the singular values of the matrix $\mathbf{M}$ as follows:
  \begin{equation}
    \sigma_1(\mathbf{M}) \leq \sqrt{\sum_{i=1}^p \sigma_1(\mathbf{A}_i)^2}
  \end{equation}
\end{lemma}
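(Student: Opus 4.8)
The plan is to bound $\sigma_1(\mathbf{M})$ by considering the action of $\mathbf{M}$ on a unit vector $\mathbf{x} \in \mathbb{R}^n$ and controlling $\|\mathbf{M}\mathbf{x}\|_2$. By the stacked structure, $\mathbf{M}\mathbf{x} = (\mathbf{A}_1\mathbf{x}, \dots, \mathbf{A}_p\mathbf{x})^\top$ is the concatenation of the $p$ vectors $\mathbf{A}_i\mathbf{x}$, so its squared Euclidean norm splits as a sum: $\|\mathbf{M}\mathbf{x}\|_2^2 = \sum_{i=1}^p \|\mathbf{A}_i\mathbf{x}\|_2^2$. Here I should be careful: the matrix $\mathbf{M}$ as written is $n \times pn$, i.e.\ a \emph{horizontal} concatenation, so strictly $\mathbf{M}\mathbf{x}$ requires $\mathbf{x} \in \mathbb{R}^{pn}$; but $\sigma_1(\mathbf{M}) = \sigma_1(\mathbf{M}^\top)$ and $\mathbf{M}^\top = (\mathbf{A}_1^\top, \dots, \mathbf{A}_p^\top)^\top$ is the vertical stack of the $\mathbf{A}_i^\top$, which is the form actually used in Theorem~\ref{theorem:theorem4}. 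So I would either state the lemma for the vertical stack or simply work with $\mathbf{M}^\top$; the argument is the same after transposing and using $\sigma_1(\mathbf{A}_i) = \sigma_1(\mathbf{A}_i^\top)$.

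Working with the vertical stack $\mathbf{N} = (\mathbf{A}_1^\top, \dots, \mathbf{A}_p^\top)^\top \in \mathbb{R}^{pn \times n}$, for any unit vector $\mathbf{x} \in \mathbb{R}^n$ I would write
\begin{equation}
  \|\mathbf{N}\mathbf{x}\|_2^2 = \sum_{i=1}^p \|\mathbf{A}_i^\top \mathbf{x}\|_2^2 \leq \sum_{i=1}^p \sigma_1(\mathbf{A}_i^\top)^2 \|\mathbf{x}\|_2^2 = \sum_{i=1}^p \sigma_1(\mathbf{A}_i)^2,
\end{equation}
using the defining property $\|\mathbf{A}_i^\top \mathbf{x}\|_2 \leq \sigma_1(\mathbf{A}_i^\top)\|\mathbf{x}\|_2$ and $\sigma_1(\mathbf{A}_i^\top) = \sigma_1(\mathbf{A}_i)$. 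Taking the supremum over unit $\mathbf{x}$ gives $\sigma_1(\mathbf{N})^2 \leq \sum_{i=1}^p \sigma_1(\mathbf{A}_i)^2$, and since $\sigma_1(\mathbf{M}) = \sigma_1(\mathbf{M}^\top) = \sigma_1(\mathbf{N})$, the claim follows by taking square roots.

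An equivalent and perhaps cleaner route is via the Gram matrix: $\mathbf{M}\mathbf{M}^\top = \sum_{i=1}^p \mathbf{A}_i \mathbf{A}_i^\top$, so $\sigma_1(\mathbf{M})^2 = \lambda_{\max}\left(\sum_i \mathbf{A}_i\mathbf{A}_i^\top\right) \leq \sum_i \lambda_{\max}(\mathbf{A}_i\mathbf{A}_i^\top) = \sum_i \sigma_1(\mathbf{A}_i)^2$, where the inequality is subadditivity of $\lambda_{\max}$ for symmetric positive semidefinite matrices (itself a consequence of the Rayleigh-quotient characterization). There is essentially no obstacle here — the only thing to get right is the bookkeeping on horizontal versus vertical concatenation and the corresponding transpose, so that the lemma as stated matches the way it is invoked when extending Theorem~\ref{theorem:theorem4} to multiple output channels.
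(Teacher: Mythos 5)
Your proposal is correct, and its second formulation (via $\mathbf{M}\mathbf{M}^\top = \sum_{i=1}^p \mathbf{A}_i\mathbf{A}_i^\top$ and subadditivity of $\lambda_{\max}$ over positive semi-definite matrices) is exactly the paper's proof; your first, Rayleigh-quotient version is just the same argument unpacked. The transposition bookkeeping you flag is handled correctly and is not an issue, since the Gram-matrix identity already applies directly to the horizontal concatenation as stated.
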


\begin{figure*}[ht]
  \centering
  \begin{minipage}{.24\linewidth}
    \centering
    \includegraphics[scale=0.23]{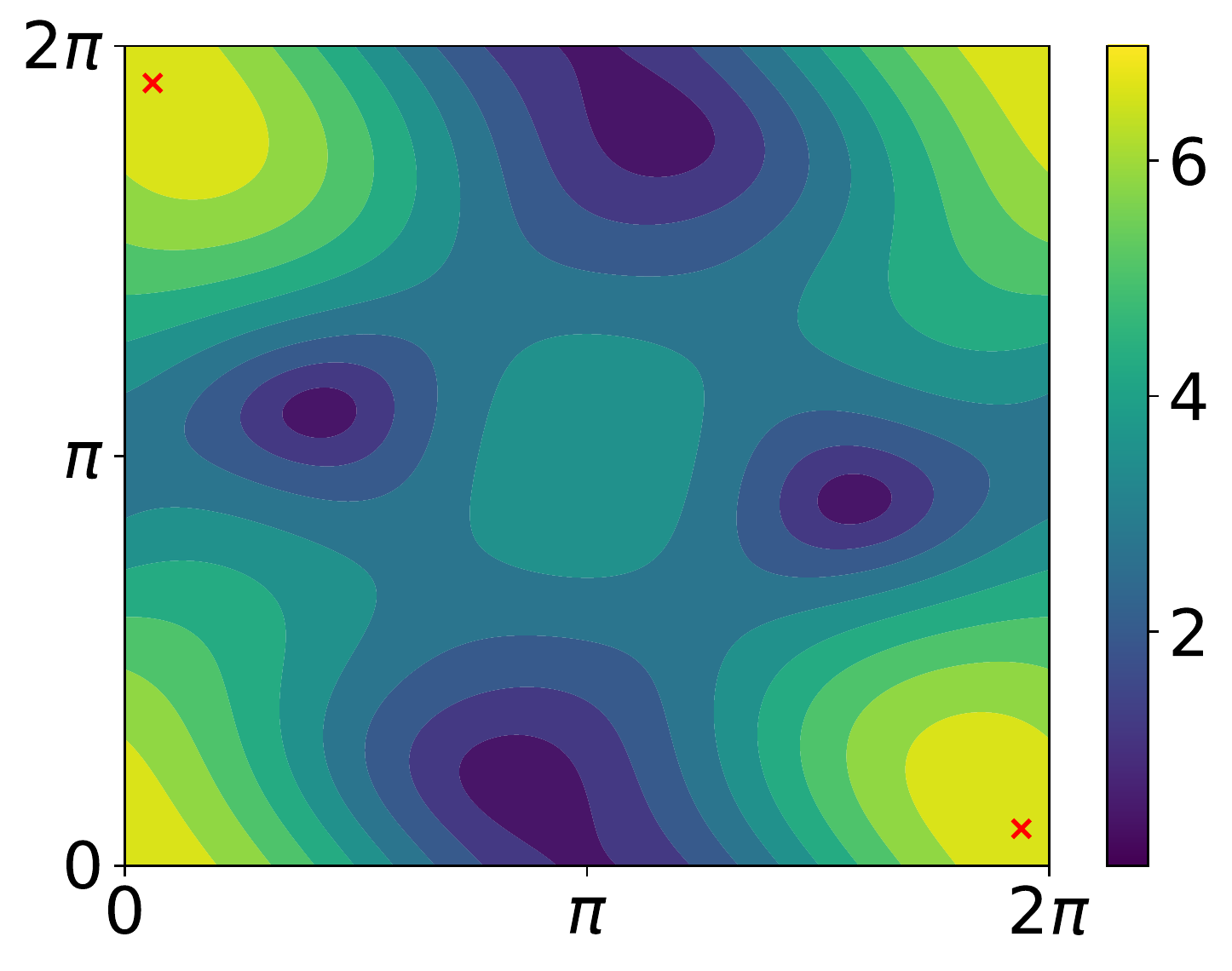}\\kernel $1\times3\times3$
  \end{minipage}
  \begin{minipage}{.24\linewidth}
      \centering
      \includegraphics[scale=0.23]{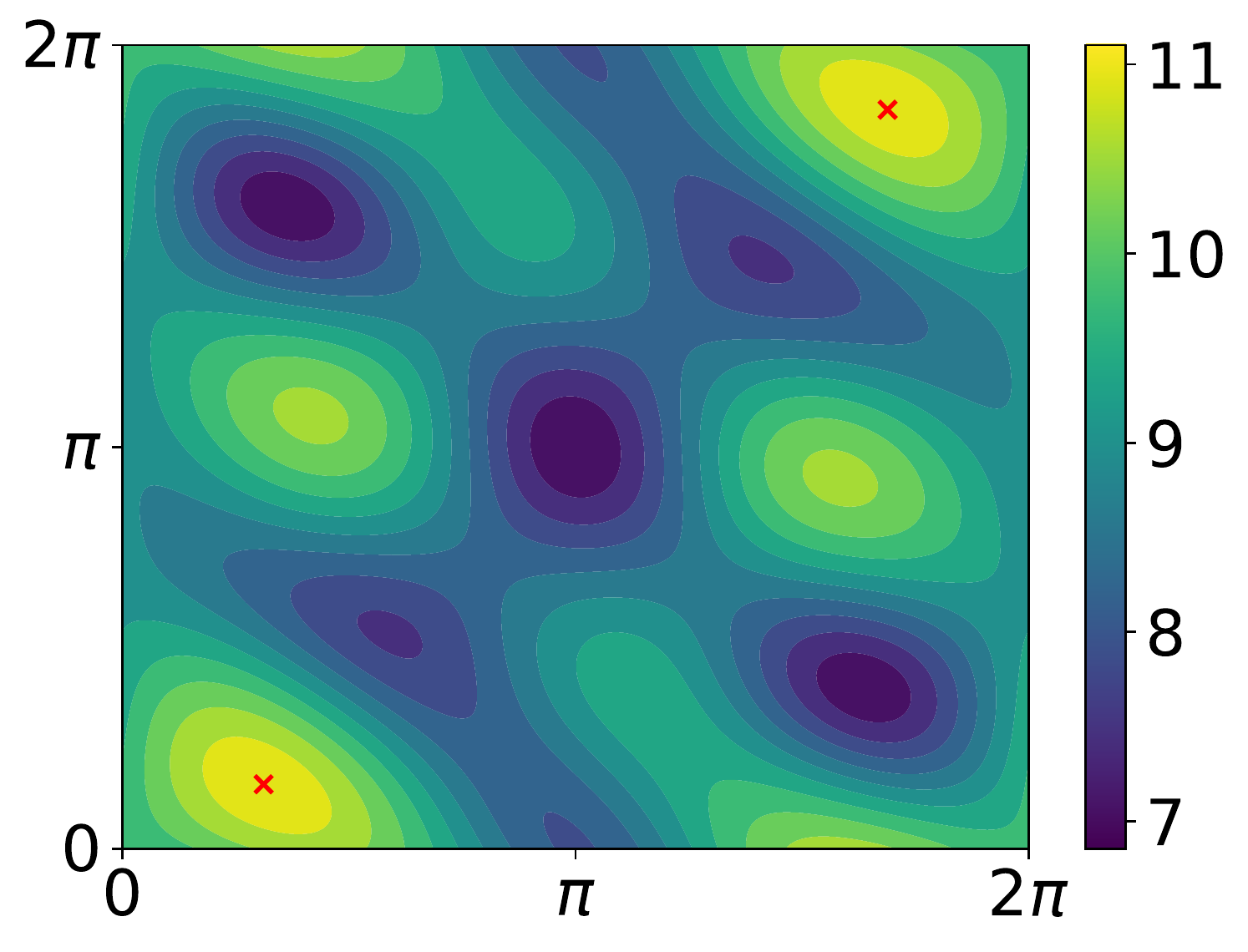}\\kernel $9\times3\times3$
  \end{minipage}
  \begin{minipage}{.24\linewidth}
      \centering
      \includegraphics[scale=0.23]{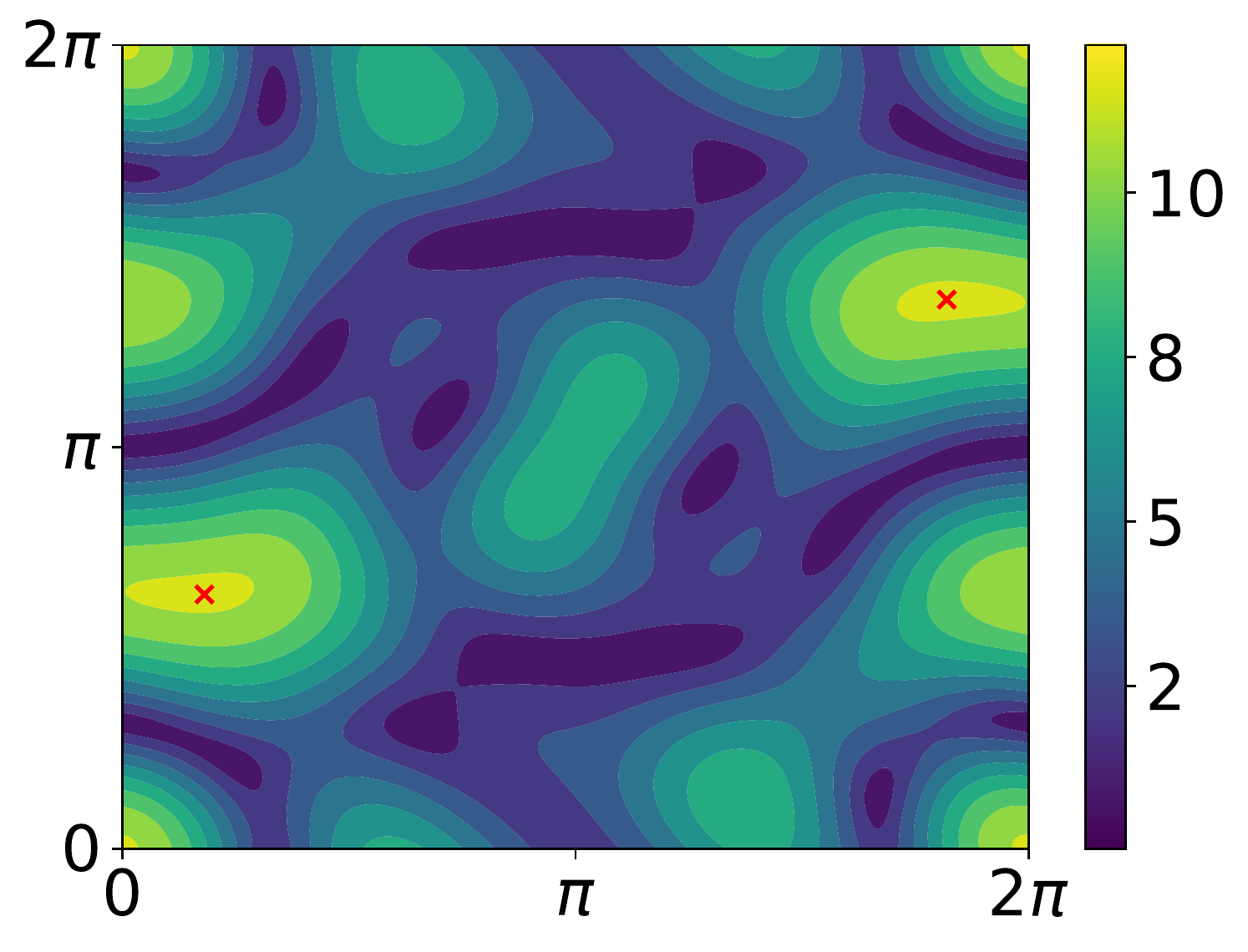}\\kernel $1\times5\times5$
  \end{minipage}
  \begin{minipage}{.24\linewidth}
      \centering
      \includegraphics[scale=0.23]{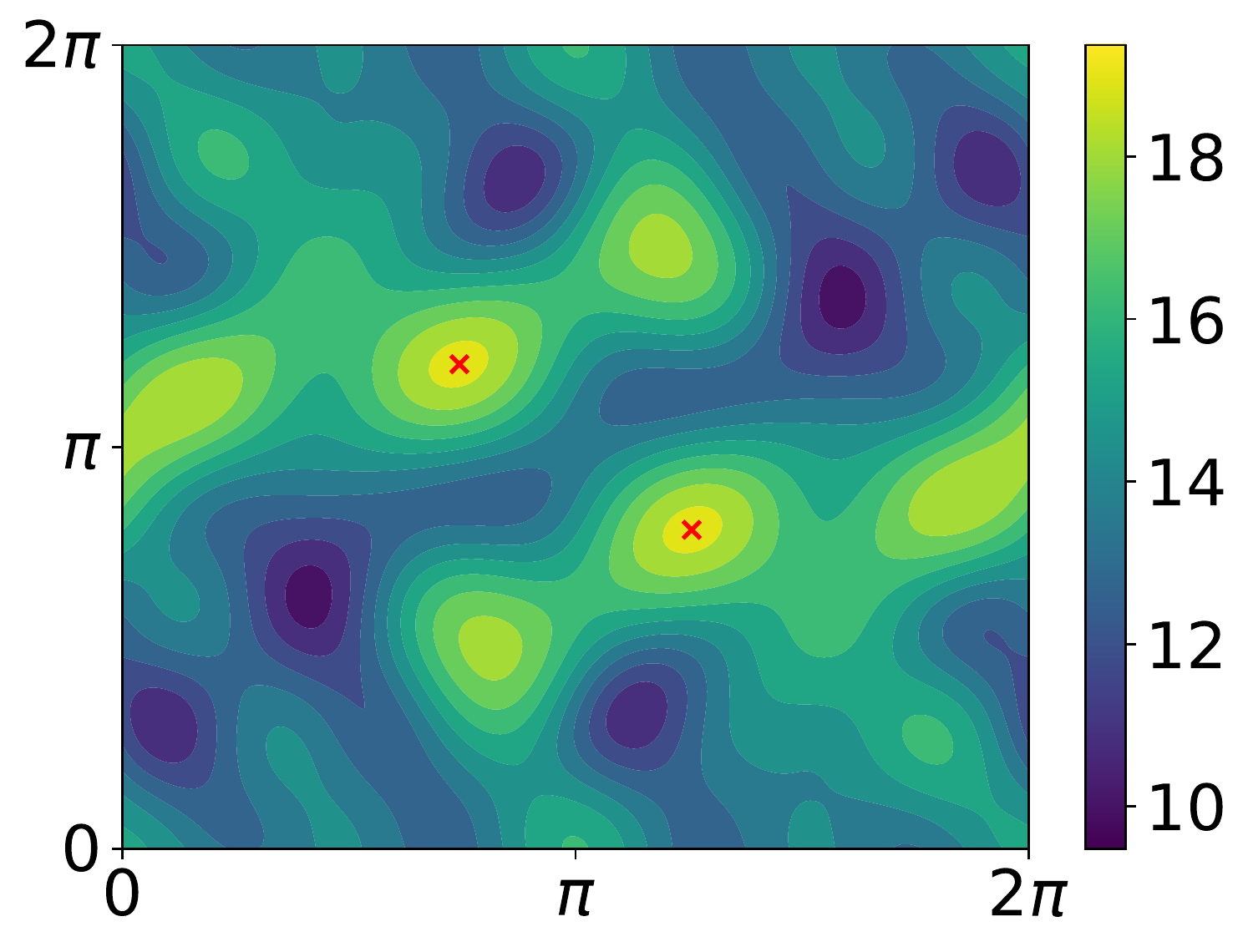}\\kernel $9\times5\times5$
  \end{minipage}
  \caption{These figures represent the contour plot of multivariate trigonometric polynomials where the values of the coefficient are the values of a random convolutional kernel. The red dots in the figures represent the maximum modulus of the trigonometric polynomials.}
  \label{figure:contour_plot_trigonometric_polynomials}
\end{figure*}%

Below, we present our main result:

\begin{theorem}[\textbf{Main Result}: Bound on the maximal singular value on the convolution operation] \label{theorem:theorem5} 
  Let us define doubly-block Toeplitz matrices $\mathbf{D}(f_{11}), \dots, \mathbf{D}(f_{\cin\times \cout})$ where $f_{ij}: \mathbb{R}^2 \rightarrow \mathbb{C}$ is a generating function. Construct a matrix $\mathbf{M}$ with $\cin\times n^2$ rows and $\cout\times n^2$ columns such as
  {\small
  \begin{equation}
    \mathbf{M} \triangleq  \leftmat\begin{array}{ccc}
    \mathbf{D}(f_{11}) & \cdots & \mathbf{D}(f_{1,\cout})   \\
    \vdots & & \vdots   \\
    \mathbf{D}(f_{\cin,1}) & \cdots & \mathbf{D}(f_{\cin,\cout}) \\
    \end{array}\rightmat .
  \end{equation}
  }
  Then, with $f_{ij}$ a multivariate polynomial of the same form as Equation~\ref{equation:muli_variate_poly_on_M}, we have:
  \begin{equation}
    \sigma_1(\mathbf{M}) \leq \sqrt{ \sum_{i=1}^{\cout} \sup_{\omega_1, \omega_2 \in [0, 2\pi]^2} \sum_{j = 1}^{\cin} \left|f_{ij}(\omega_1, \omega_2) \right|^2 } .
  \end{equation}\label{equation:lipbound_sv}
\end{theorem}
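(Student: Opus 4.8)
The plan is to peel $\mathbf{M}$ apart one column-block at a time and reduce to the two results already established: Theorem~\ref{theorem:theorem4} for a single vertically stacked family of doubly-block Toeplitz matrices, and Lemma~\ref{theorem:bound_concatenation_matrices} for gluing several matrices side by side. Write $\mathbf{M} = \leftmat \mathbf{M}_1, \dots, \mathbf{M}_{\cout} \rightmat$, where the $j$-th column-block is
\[
  \mathbf{M}_j \triangleq \leftmat \mathbf{D}^\top(f_{1j}), \dots, \mathbf{D}^\top(f_{\cin, j}) \rightmat^\top ,
\]
a matrix with $\cin \times n^2$ rows and $n^2$ columns. Each $\mathbf{M}_j$ is exactly of the form treated by Theorem~\ref{theorem:theorem4} applied to the $\cin$ generating functions $f_{1j}, \dots, f_{\cin, j}$, so
\[
  \sigma_1(\mathbf{M}_j) \leq \sup_{\omega_1, \omega_2 \in [0,2\pi]^2} \sqrt{ \sum_{i=1}^{\cin} \left| f_{ij}(\omega_1, \omega_2) \right|^2 } .
\]

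Next I would recombine the $\cout$ column-blocks. Since $\mathbf{M} = \leftmat \mathbf{M}_1, \dots, \mathbf{M}_{\cout} \rightmat$, we have the identity $\mathbf{M}\mathbf{M}^\top = \sum_{j=1}^{\cout} \mathbf{M}_j \mathbf{M}_j^\top$, and each summand is positive semidefinite; by subadditivity of the largest eigenvalue on positive semidefinite matrices, $\sigma_1(\mathbf{M})^2 = \lambda_{\max}(\mathbf{M}\mathbf{M}^\top) \leq \sum_{j=1}^{\cout} \lambda_{\max}(\mathbf{M}_j \mathbf{M}_j^\top) = \sum_{j=1}^{\cout} \sigma_1(\mathbf{M}_j)^2$. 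This is exactly the content of Lemma~\ref{theorem:bound_concatenation_matrices}. Substituting the per-block bound from the previous step and taking square roots gives
\[
  \sigma_1(\mathbf{M}) \leq \sqrt{ \sum_{j=1}^{\cout} \sup_{\omega_1, \omega_2 \in [0,2\pi]^2} \sum_{i=1}^{\cin} \left| f_{ij}(\omega_1, \omega_2) \right|^2 } ,
\]
which, after renaming the summation indices, is the stated bound.

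The only point requiring a little care is that Lemma~\ref{theorem:bound_concatenation_matrices} is phrased for square blocks $\mathbf{A}_i \in \mathbb{R}^{n \times n}$, whereas the column-blocks $\mathbf{M}_j$ here are rectangular ($\cin n^2 \times n^2$). This is harmless: the proof of the lemma uses only the identity $\mathbf{M}\mathbf{M}^\top = \sum_i \mathbf{A}_i \mathbf{A}_i^\top$ and the subadditivity of $\lambda_{\max}$ for positive semidefinite matrices, neither of which uses squareness, so it applies verbatim to any blocks sharing the same number of rows. Beyond that, the remaining work is bookkeeping: verifying that the ``transpose, stack, transpose'' convention of Theorem~\ref{theorem:theorem4} indeed realizes the $j$-th column-block of $\mathbf{M}$, and observing that the supremum over $(\omega_1,\omega_2)$ is taken independently inside each column-block — which is precisely why the final expression is a sum of suprema rather than a single global supremum. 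I do not anticipate any genuine obstacle here; the substantive step was proving Theorem~\ref{theorem:theorem4} via the generalized Widom identity, and the present statement is a clean corollary of that together with the concatenation lemma.
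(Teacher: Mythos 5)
Your proposal matches the paper's proof essentially step for step: decompose $\mathbf{M}$ into the $\cout$ column-blocks $\mathbf{M}_j$, bound each $\sigma_1(\mathbf{M}_j)$ by Theorem~\ref{theorem:theorem4}, and combine via Lemma~\ref{theorem:bound_concatenation_matrices}. Your observation that the lemma extends verbatim to rectangular blocks (the paper uses it in exactly that rectangular form without comment) is a fair and accurate remark; the argument is correct.
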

We can easily express the bound in Theorem~\ref{theorem:theorem5} with the values of a 4-dimensional kernel.
Let us define a kernel $\mathbf{k} \in \mathbb{R}^{\cout \times \cin \times s \times s}$, a padding $p \in \mathbb{N}$ and $d = \lfloor s / 2 \rfloor$ the degree of the trigonometric polynomial, then:
\begin{equation}
  f_{ij}(\omega_1, \omega_2) = \sum_{h_1 = -d}^d \sum_{h_2 = -d}^d k_{i, j, h_1,h_2} e^{\ci (h_1 \omega_1 + h_2 \omega_2)}.
\end{equation}
where $k_{i, j, h_1,h_2} = \leftmat \mathbf{k} \rightmat_{i, j, a, b}$ with $a =  s - p - 1 + i$ and $b =  s - p - 1 + j$.

In the rest of the paper, we will refer to the bound in Theorem~\ref{theorem:theorem5} applied to a kernel as $\LipBound$ and we denote $\LipBound(\mathbf{k})$ the Lipschitz upper bound of the convolution performed by the kernel $\mathbf{k}$.

\section{Computation and Performance Analysis of LipBound}
\label{section:computation_performance_lipbound}

This section aims at analyzing the bound on the singular values introduced in Theorem~\ref{theorem:theorem5}.
First, we present an algorithm to efficiently compute the bound, we analyze its tightness by comparing it against the true maximal singular value.
Finally, we compare the efficiency and the accuracy of our bound against the state-of-the-art. 

\subsection{Computing the maximum modulus of a trigonometric polynomial}
\label{subsection:computing_max_modulus_trig_polynomial}

In order to compute $\LipBound$ from Theorem~\ref{theorem:theorem5}, we have to compute the maximum modulus of several trigonometric polynomials.
However, finding the maximum modulus of a trigonometric polynomial has been known to be NP-hard~\cite{pfister2018bounding}, and in practice they exhibit low convexity (see Figure~\ref{figure:contour_plot_trigonometric_polynomials}).
We found that for 2-dimensional kernels, a simple grid search algorithm such as PolyGrid (see Algorithm~\ref{algo:PolyGrid}), works better than more sophisticated approximation algorithms (\eg  ~\citealt{green1999calculating,de2009finding}).
This is because the complexity of the computation depends on the degree of the polynomial which is equal to $\lfloor s / 2 \rfloor$ where $s$ is the size of the kernel and is usually small in most practical settings (\eg $s=3$).
Furthermore, the grid search algorithm can be parallelized effectively on CPUs or GPUs and runs within less time than alternatives with lower asymptotic complexity.

To fix the number of samples $S$ in the grid search, we rely on the work of~\citet{pfister2018bounding}, who has analyzed the quality of the approximation depending on $S$.
Following, this work we first define $\Theta_S$, the set of $S$ equidistant sampling points as follows:
\begin{equation}
  \Theta_S \triangleq \left\{ \omega \mid \omega = k \cdot \frac{2\pi}{S} \mbox{ with }  k = 0,\ldots, S-1 \right\}.
\end{equation}
Then, for $f: [0, 2\pi]^2 \rightarrow \mathbb{C}$, we have:
\begin{equation}
  \max_{\omega_1, \omega_2 \in [0,2\pi]^2} \left| f(\omega_1, \omega_2) \right| \leq (1 - \alpha)^{-1} \max_{\omega_1', \omega_2' \in \Theta_S^2} \left| f(\omega_1', \omega_2') \right|,
\end{equation}
where $d$ is the degree of the polynomial and $\alpha = 2d / S$.
For a $3\times3$ kernel which gives a trigonometric polynomial of degree 1, we use $S = 10$ which gives $\alpha = 0.2$.
Using this result, we can now compute $\LipBound$ for a convolution operator with $cout$ output channels as per Theorem~\ref{theorem:theorem4}.

\begin{algorithm}
  \begin{algorithmic}[1]
    \State{\textbf{input} polynomial $f$, number of samples $S$}
    \State{\textbf{output} approximated maximum modulus of $f$}
    \State $\sigma \gets 0$, $\omega_1 \gets 0$, $\epsilon \leftarrow 2\pi / S$
    \For{$i=0$ \textbf{to} $S-1$}
      \State $\omega_1 \gets \omega_1 + \epsilon$, $\omega_2 \gets 0$
      \For{$j=0$ \textbf{to} $S-1$}
	\State $\omega_2 \gets \omega_2 + \epsilon$
	\State $\sigma \gets \max( \sigma, f(\omega_1, \omega_2))$
      \EndFor
    \EndFor
    \State{\textbf{return} $\sigma$}
  \end{algorithmic}
  \caption{PolyGrid}
  \label{algo:PolyGrid}
\end{algorithm}

\subsection{Analysis of the tightness of the bound}
\label{section:analysis_tightness_bound}

In this section, we study the tightness of the bound with respect to the dimensions of the doubly-block Toeplitz matrices.
For each $n \in \mathbb{N}$, we define the matrix $\mathbf{M}^{(n)}$ of size $kn^2 \times n^2$ as follows:
\begin{equation}
  \mathbf{M}^{(n)} \triangleq \textstyle \leftmat \mathbf{D}^{(n)\top}(f_1), \dots, \mathbf{D}^{(n)\top}(f_k) \textstyle \rightmat^\top
\end{equation}
where the matrices $\mathbf{D}^{(n)}(f_i)$ are of size $n^2 \times n^2$. 
To analyze the tightness of the bound, we define the function $\Gamma$, which computes the difference between $\LipBound$ and the maximal singular value of the function $\mathbf{M}^{(n)}$:
\begin{equation}
  \Gamma(n) = \LipBound(\mathbf{k}_{\mathbf{M}^{(n)}}) - \sigma_1(\mathbf{M}^{(n)})
  \label{equation:function_convergence}
\end{equation}
where $\mathbf{k}_{\mathbf{M}^{(n)}}$ is the convolution kernel of the convolution defined by the matrix $\mathbf{M}^{(n)}$.

To compute the exact largest singular value of $\mathbf{M}^{(n)}$ for a specific $n$, we use the Implicitly Restarted Arnoldi Method (IRAM) ~\cite{lehoucq1996deflation} available in SciPy.
The results of this experiment are presented in Figure~\ref{figure:convergence_bound}.
We observe that the difference between the bound and the actual value (approximation gap) quickly decreases as the input size increases.
For an input size of $50$, the approximation gap is as low as $0.012$ using a standard $6\times3\times3$ convolution kernel.
For a larger input size such as ImageNet ($224$), the gap is lower than $4.10^{-4}$.
Therefore $\LipBound$ gives an almost exact value of the maximal singular value of the operator matrix for most realistic settings.

\begin{table*}[ht]
  \centering
  \caption{The following table compares different approaches for computing an approximation of the maximal singular value of a convolutional layer. It shows the ratio between the approximation and the true maximal singular value. The approximation is better for a ratio close to one.}
    \begin{tabular}{lrccrcc}
    \toprule
      &   & \multicolumn{2}{c}{\textbf{1x3x3}} &   & \multicolumn{2}{c}{\textbf{32x3x3}} \\
    \cmidrule{3-4}\cmidrule{6-7}  &   & \textbf{Ratio} & \textbf{Time (ms)} &   & \textbf{Ratio} & \textbf{Time (ms)} \\
    \midrule
    \citeauthor{sedghi2018the} &   & $\phantom{.}0.431\pm0.042$ & $1088\pm251$ &   & $\phantom{.}0.666\pm0.123$ & $1729\pm399$ \\
    \citeauthor{singla2019bounding} &   & $\phantom{.}1.293\pm0.126$ & $\phantom{..}1.90\pm0.48$ &   & $\phantom{.}1.441\pm0.188$ & $\phantom{..}1.90\pm0.46$ \\
    \citeauthor{farnia2018generalizable} (10 iter) &   & $\phantom{.}0.973\pm0.006$ & $\phantom{..}4.30\pm0.64$ &   & $\phantom{.}0.972\pm0.004$ & $\phantom{..}4.93\pm0.67$ \\
    \midrule
    \textbf{LipBound (Ours)} &   & $\mathbf{0.992}\pm0.012$ & $\phantom{.}\mathbf{0.49}\pm0.05$ &   & $\mathbf{0.984}\pm0.021$ & $\phantom{.}\mathbf{0.63}\pm0.46$ \\
    \bottomrule
    \end{tabular}%
  \label{tab:comparaison}%
\end{table*}

\subsection{Comparison of LipBound with other state-of-the-art approaches}
\label{subsection:comparaison_sota}

In this section we compare our PolyGrid algorithm with the values obtained using alternative approaches.
We consider the 3 alternative techniques by ~\citet{sedghi2018the}, by~\citet{singla2019bounding} and by ~\citet{farnia2018generalizable} which have been described in Section~\ref{section:related_work}.

To compare the different approaches, we extracted 20 kernels from a trained model.
For each kernel we construct the corresponding doubly-block Toeplitz matrix and compute its largest singular value.
Then, we compute the ratio between the approximation obtained with the approach in consideration and the exact singular value obtained by SVD, and average the ratios over the 20 kernels.
Thus good approximations result in approximation ratios that are close to 1.
The results of this experiment are presented in Table~\ref{tab:comparaison}.
The comparison has been made on a Tesla V100 GPU. 
The time was computed with the PyTorch CUDA profiler and we warmed up the GPU before starting the timer.

\begin{figure}[ht]
  \centering
  \includegraphics[scale=0.36]{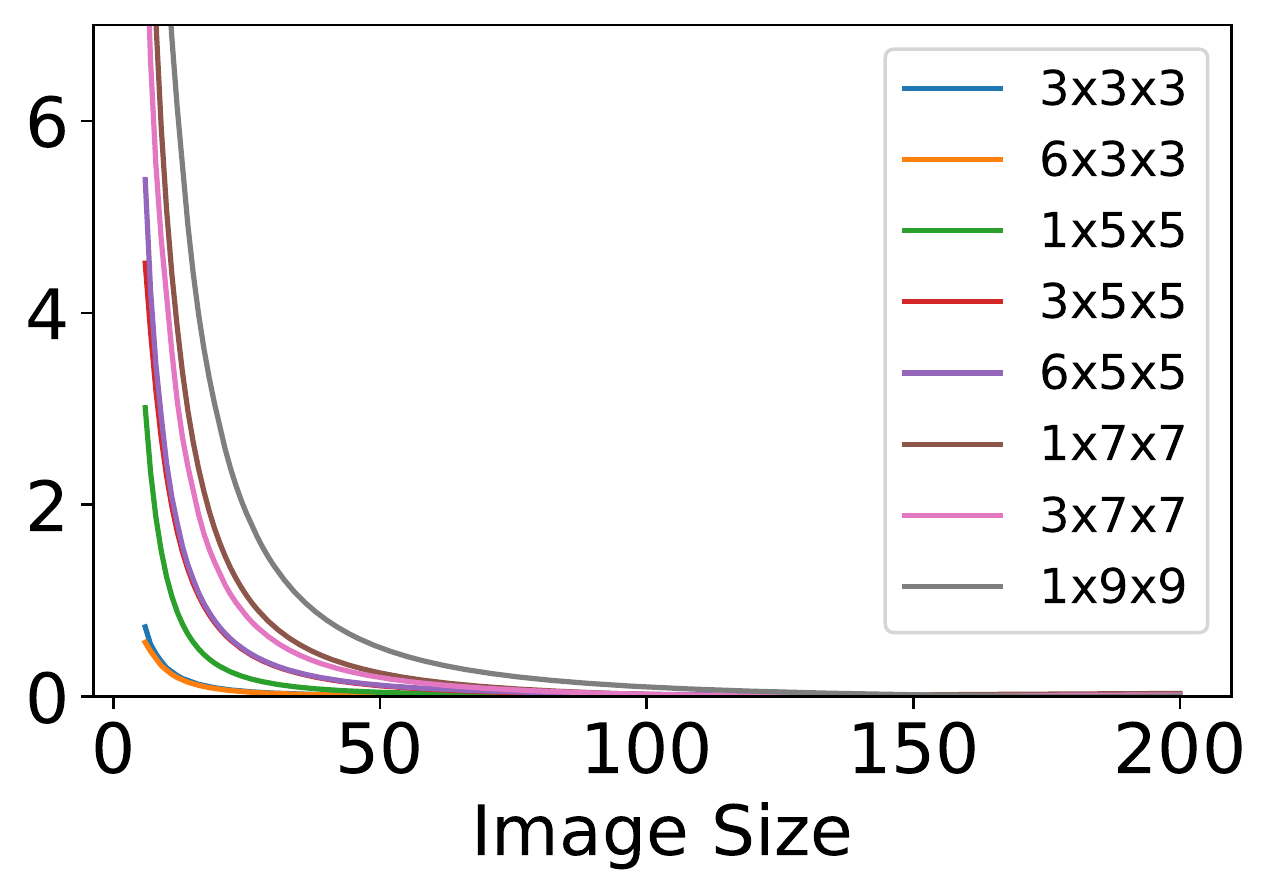}
  \caption{This graph represents the function $\Gamma(n)$ defined in Section~\ref{section:analysis_tightness_bound} for different kernel size.}
  \label{figure:convergence_bound}
\end{figure}%

The method introduced by~\citet{sedghi2018the} computes an  approximation of the singular values of convolutional layers.
We can see in Table~\ref{tab:comparaison} that the value is off by an important margin.
This technique is also computationally expensive as it requires computing the SVD of $n^2$ small matrices where $n$ is the size of inputs.
\citet{singla2019bounding} have shown that the singular value of the reshape kernel is a bound on the maximal singular value of the convolution layer.
Their approach is very efficient but the approximation is loose and overestimate the real value.
As said previously, the power method provides a good approximation at the expense of the efficiency.
We use the special Convolutional Power Method from ~\cite{farnia2018generalizable} with 10 iterations.
The results show that our proposed technique: PolyGrid algorithm can get the best of both worlds.
It achieves a near perfect accuracy while being very efficient to compute. 

We provide in the supplementary material a benchmark on the efficiency of $\LipBound$ on multiple convolutional architectures. 

\section{Application: Lipschitz Regularization for Adversarial Robustness}
\label{section:experiments}

\begin{figure*}[ht]
  \centering
  \begin{minipage}{.24\linewidth}
    \centering
    \includegraphics[scale=0.22]{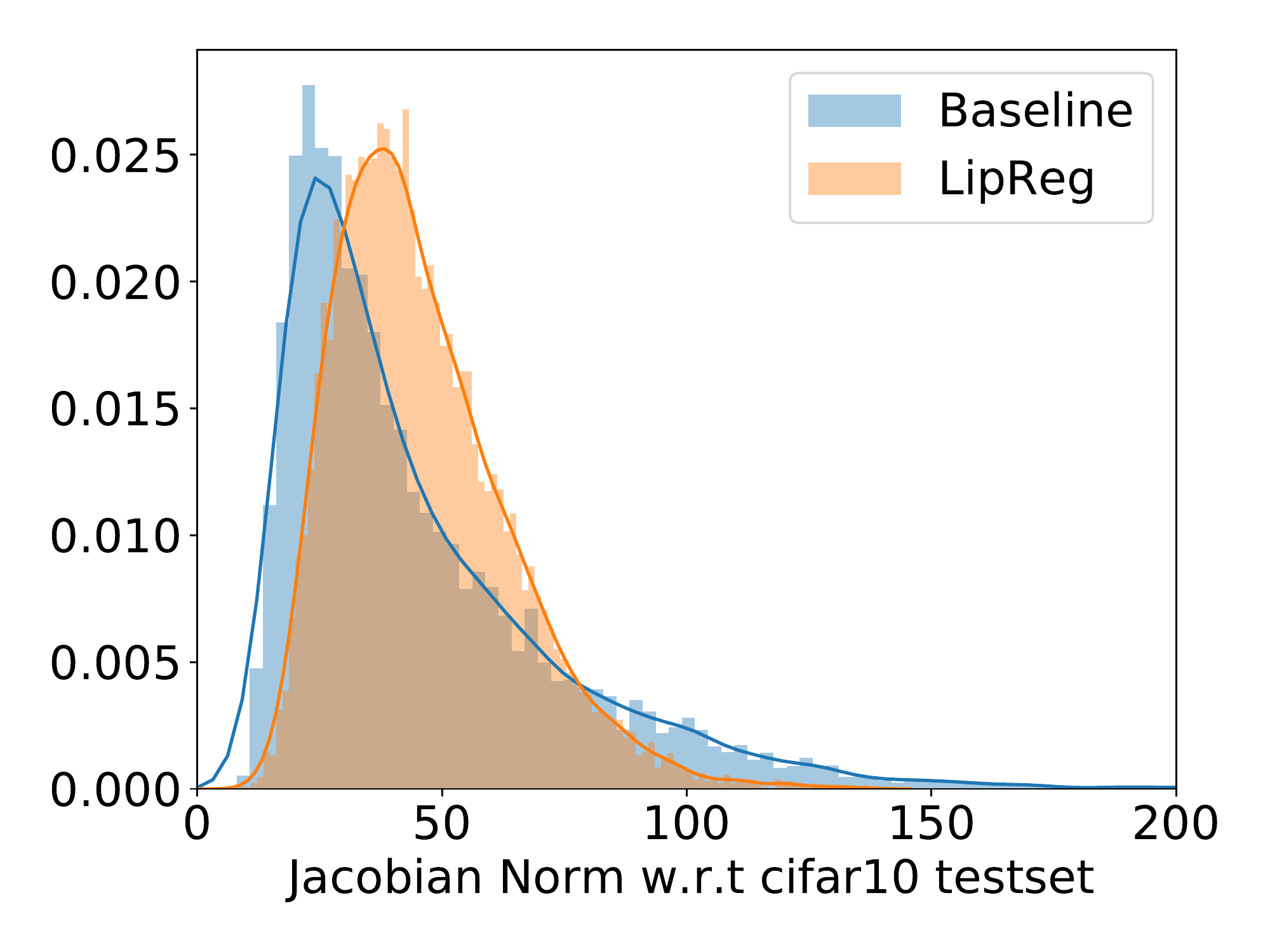}\\{(a)}
  \end{minipage}
  \hfill
  \begin{minipage}{.24\linewidth}
    \centering
    \includegraphics[scale=0.22]{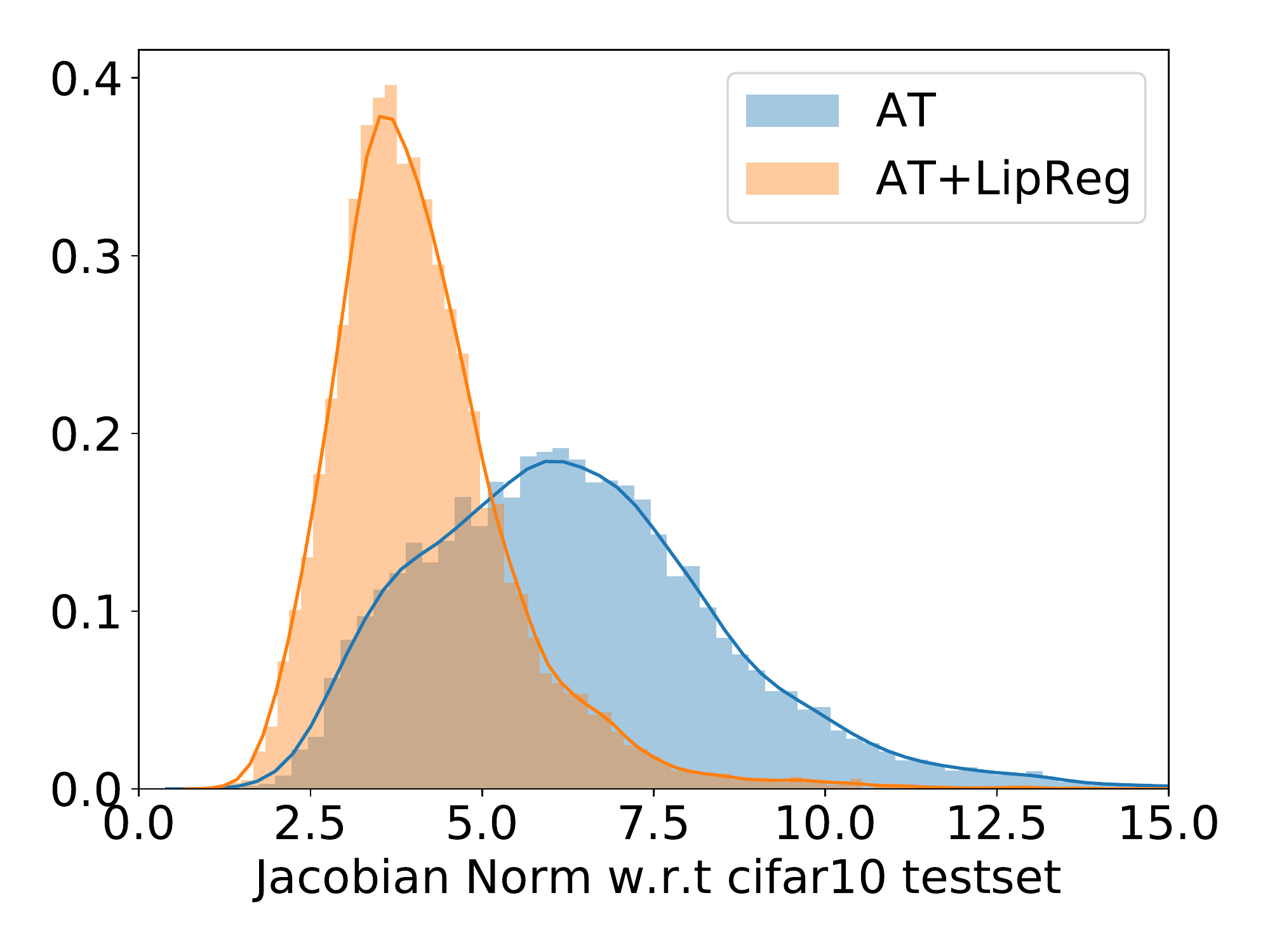}\\{(b)}
  \end{minipage}
  \hfill
  \begin{minipage}{.24\linewidth}
    \centering
    \includegraphics[scale=0.22]{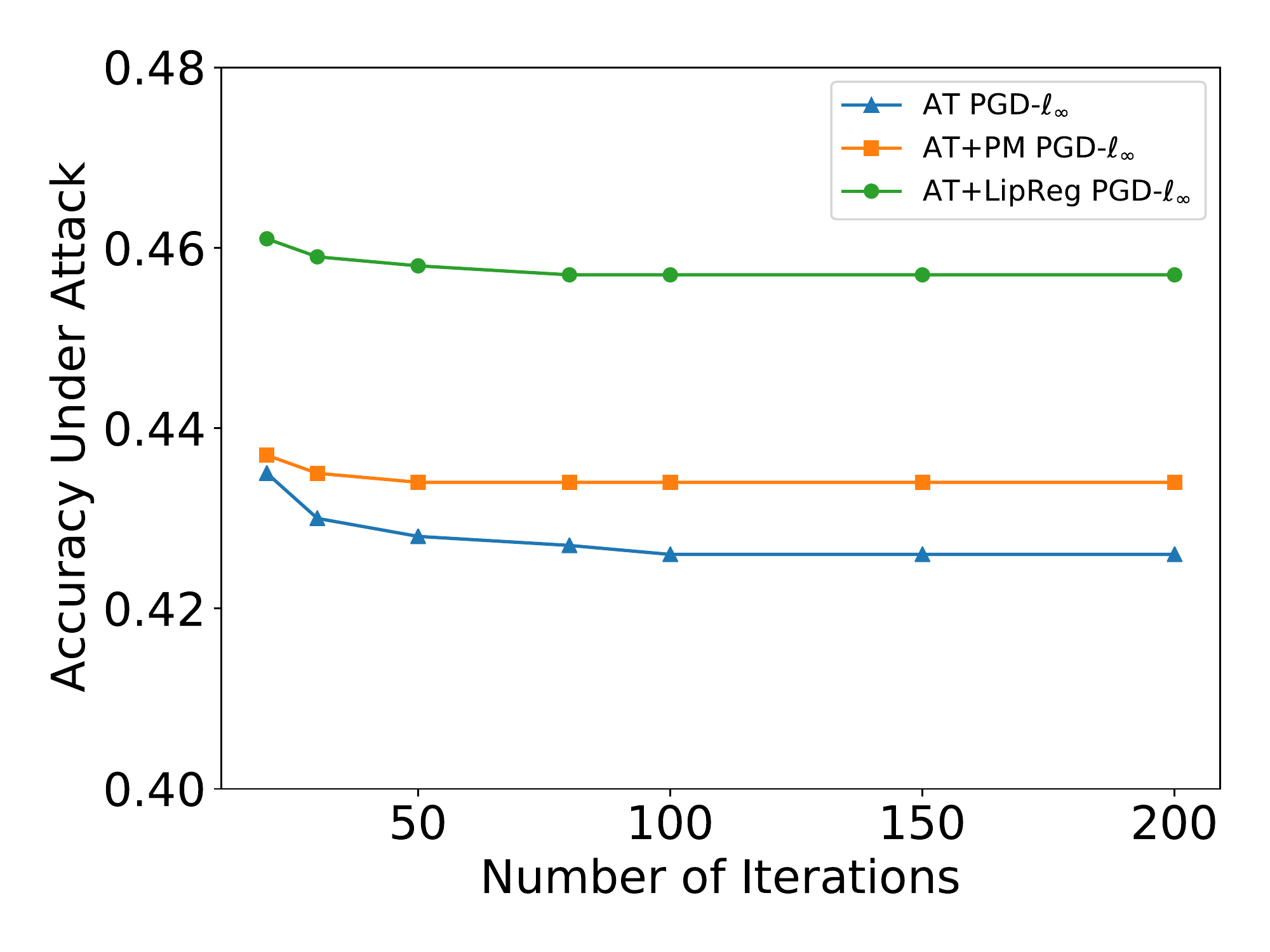}\\{(c)}
  \end{minipage}
  \hfill
  \begin{minipage}{.24\linewidth}
    \centering
    \includegraphics[scale=0.22]{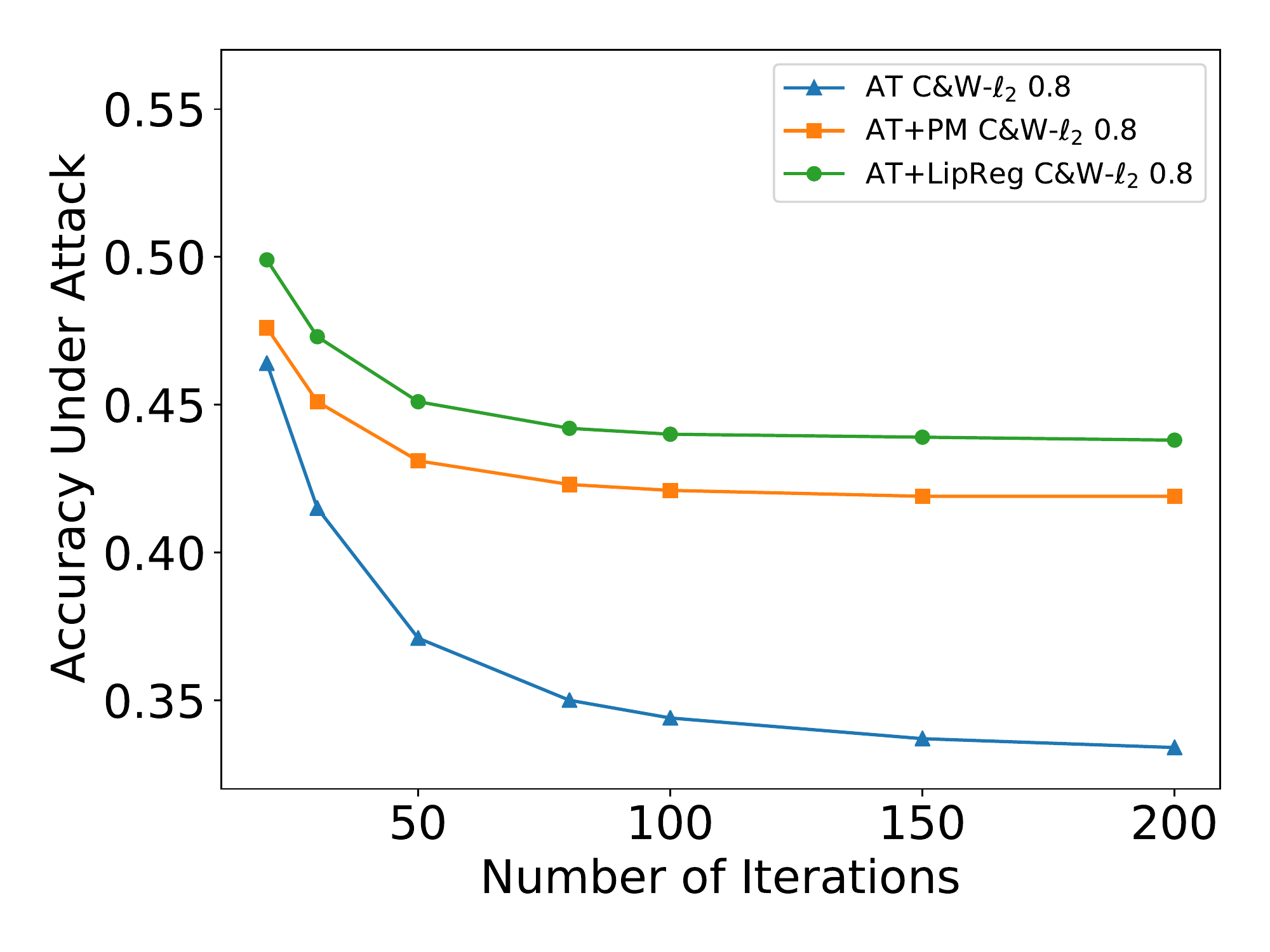}\\{(d)}
  \end{minipage}
  \caption{Figures (a) and (b) show the distribution of the norm of the Jacobian matrix w.r.t the CIFAR10 test set from a Wide Resnet trained with different schemes. Although Lipschitz regularization is not a Jacobian regularization, we can observe a clear shift in the distribution. This suggests that our method does not only work layer-wise, but also at the level of the entire network. Figures (c) and (d) show the Accuracy under attack on CIFAR10 test set with PGD-$\ell_\infty$ and C\&W-$\ell_2$ attacks for several classifiers trained with Adversarial Training given the number of iterations.}
  \label{figure:dist_jacobian_attacks_iter}
\end{figure*}%

\begin{table*}[t]
  \centering
  \caption{This table shows the Accuracy under $\ell_2$ and $\ell_\infty$ attacks of CIFAR10/100 datasets. We compare vanilla Adversarial Training with the combination of Lipschitz regularization and Adversarial Training. We also compare the effectiveness of the power method by~\citet{farnia2018generalizable} and $\LipBound$. The parameters $\lambda_2$ (Eq.~\ref{equation:obj_function}) is equal to $0.008$ for AT+PM and AT+LipReg. It has been chosen from a grid search among 10 values. The attacks below are computed with 200 iterations. }
    \begin{tabular}{llcccc}
    \toprule
    \textbf{Dataset} & \textbf{Model} & \textbf{Accuracy} & \textbf{PGD-$\ell_\infty$} & \textbf{C\&W-$\ell_2$ 0.6} & \textbf{C\&W-$\ell_2$ 0.8} \\
    \midrule
    \multirow{4}[0]{*}{CIFAR10} & \textbf{Baseline} & $\mathbf{0.953}\pm0.001$ & $\phantom{.}0.000\pm0.000$ & $\phantom{.}0.002\pm0.000$ & $\phantom{.}0.000\pm0.000$ \\
    & \textbf{AT} & $\phantom{.}0.864\pm0.001$ & $\phantom{.}0.426\pm0.000$ & $\phantom{.}0.477\pm0.000$ & $\phantom{.}0.334\pm0.000$ \\
    & \textbf{AT+PM} & $\phantom{.}0.788\pm0.010$ & $\phantom{.}0.434\pm0.007$ & $\phantom{.}0.521\pm0.005$	 & $\phantom{.}0.419\pm0.003$ \\
    & \textbf{AT+LipReg} & $\phantom{.}0.808\pm0.022$ & $\mathbf{0.457}\pm0.002$ & $\mathbf{0.547}\pm0.022$ & $\mathbf{0.438}\pm0.020$ \\
    \midrule
    \multirow{3}[0]{*}{CIFAR100} & \textbf{Baseline} & $\mathbf{0.792}\pm0.000$ & $\phantom{.}0.000\pm0.000$ & $\phantom{.}0.001\pm0.000$ & $\phantom{.}0.000\pm0.000$ \\
    & \textbf{AT} & \phantom{.}$0.591\pm0.000$ & $\phantom{.}0.199\pm0.000$ & $\phantom{.}0.263\pm0.000$ & $\phantom{.}0.183\pm0.000$ \\
    & \textbf{AT+LipReg} & \phantom{.}$0.552\pm0.019$ & $\mathbf{0.215}\pm0.004$ & $\mathbf{0.294}\pm0.010$ & $\mathbf{0.226}\pm0.008$ \\
    \bottomrule
    \end{tabular}%
  \label{table:cifar_robustness}%
\end{table*}%

\begin{table*}[htbp]
  \centering
  \caption{This table shows the accuracy and accuracy under $\ell_2$ and $\ell_\infty$ attack of ImageNet dataset. We compare Adversarial Training with the combination of Lipschitz regularization and Adversarial Training \cite{madry2018towards}. }
    \begin{tabular}{llccccccccc}
    \toprule
    \multicolumn{1}{c}{\multirow{2}[4]{*}{\textbf{Dataset}}} & \multicolumn{1}{c}{\multirow{2}[4]{*}{\textbf{Model}}} & \multicolumn{1}{c}{\multirow{2}[4]{*}{\textbf{LipReg} $\lambda_2$}} & \multicolumn{1}{c}{\multirow{2}[4]{*}{\textbf{Natural}}} &   & \multicolumn{2}{c}{\textbf{PGD-}$\ell_\infty$} &   & \multicolumn{3}{c}{\textbf{C\&W-}$\ell_2$} \\
    \cmidrule{6-7}\cmidrule{9-11}      &  &  &  &  & \multicolumn{1}{c}{0.02} & \multicolumn{1}{c}{0.031} &   & \multicolumn{1}{c}{1.00} & \multicolumn{1}{c}{2.00} & \multicolumn{1}{c}{3.00} \\
    \midrule
    \multirow{4}[0]{*}{ImageNet} & \textbf{Baseline} \cite{he2016deep} & -- & \textbf{0.782} & & 0.000 & 0.000 & & 0.000 & 0.000 & 0.000 \\ 
    & \textbf{AT} & \multicolumn{1}{c}{--} & 0.509 &   & 0.251 & 0.118 &   & 0.307 & 0.168 & 0.099 \\
    & \textbf{AT+LipReg} & 0.0006 & \textbf{0.515} &   & \textbf{0.255} & \textbf{0.121} &   & \textbf{0.316} & \textbf{0.177} & \textbf{0.105} \\
    & \textbf{AT+LipReg} & 0.0010 & \textbf{0.519} &   & \textbf{0.259} & \textbf{0.123} &   & \textbf{0.338} & \textbf{0.204} & \textbf{0.129} \\
    \bottomrule
    \end{tabular}%
    \label{table:imagenet_robustness}
\end{table*}%

One promising application of Lipschitz regularization is in the area of adversarial robustness. 
Empirical techniques to improve robustness against adversarial examples such as Adversarial Training only impact the training data,  and often show poor generalization capabilities~\cite{schmidt2018adversarially}. 
\citet{farnia2018generalizable} have shown that the adversarial generalization error depends on the Lipschitz constant of the network, which suggests that the adversarial test error can be improved by applying Lipschitz regularization in addition to adversarial training. 

In this section, we illustrate the usefulness of LipBound by training a state-of-the-art Wide ResNet architecture~\citep{zagoruyko2016wide} with Lipschitz regularization and adversarial training.
Our regularization scheme is inspired by the one used by \citet{yoshida2017spectral} but instead of using the power method, we use our \textbf{PloyGrid} algorithm presented in Section~\ref{subsection:computing_max_modulus_trig_polynomial} which efficiently computes an upper bound on the maximal singular value of convolutional layers. 

We introduce the \textbf{AT+LipReg} loss to combine Adversarial Training and our Lipschitz regularization scheme in which layers with a large Lipschitz constant are penalized.
We consider a neural network $\mathcal N_\theta : \mathcal X \rightarrow \mathcal Y$ with $\ell$ layers $\phi^{(1)}_{\theta_1}, \ldots, \phi^{(\ell)}_{\theta_\ell}$ where $\theta^{(1)}, \ldots, \theta^{(\ell -1)}$ are the kernels of the first $\ell - 1$ convolutional layers and $\theta_\ell$ is the weight matrix of the last fully-connected  layer $\phi^{(\ell)}_{\theta_\ell}$.
Given a distribution $\mathcal D$ over $\mathcal X \times \mathcal Y$, we can train the parameters $\theta$ of the network by minimizing the AT+LipReg loss as follows:
\begin{align} 
  &\min_{\theta} \mathbb E_{x,y \sim \mathcal D} \left[ \max_{\norm{\tau}_\infty \leq \epsilon}  \mathcal{L}(\mathcal{N}_\theta(x + \tau), y)  \right. \notag \\
    &\quad \left.  + \lambda_1 \sum_{i=1}^\ell {\textstyle \norm{\theta_i}_\text{F}} + \lambda_2 \sum_{i=1}^{\ell-1} \log\left( \LipBound\left(\theta_i\right)\right) \right]
  \label{equation:obj_function}
\end{align}
where $\mathcal L$ is the cross-entropy loss function, and $\lambda_1$, $\lambda_2$ are two user-defined hyper-parameters.
Note that regularizing the sum of logs is equivalent to regularizing the product of all the $\LipBound$ which is an upper bound on the global Lipschitz constant.
In practice, we also include the upper bound on the Lipschitz of the batch normalization because we can compute it very efficiently (see C.4.1 of ~\citealt{tsuzuku2018lipschitz}) but we omit the last fully connected layer.

In this section, we compare the robustness of Adversarial Training~\cite{goodfellow2014explaining, madry2018towards} against the combination of Adversarial Training and Lipschitz regularization.
To regularize the Lipschitz constant of the network, we use the objective function defined in Equation~\ref{equation:obj_function}.
We train Lipschitz regularized neural networks with LipBound (Theorem~\ref{theorem:theorem5}) implemented with PolyGrid (Algorithm~\ref{algo:PolyGrid}) (AT+LipBound) with $S = 10$ or with the specific power method for convolutions introduced by~\citet{farnia2018generalizable} with 10 iterations (AT+PM). 

Table~\ref{table:cifar_robustness} shows the gain in robustness against strong adversarial attacks across different datasets.
We can observe that both AT+LipBound and AT+PM offer a better defense against adversarial attacks and that AT+LipBound offers a further improvement over the Power Method.
The Figure~\ref{figure:dist_jacobian_attacks_iter} (c) and (d) shows the Accuracy under attack with different number of iterations. 
Table~\ref{table:imagenet_robustness} presents our results on the ImageNet Dataset.
First, we can observe that the networks AT+LipReg offers a better generalization than with standalone Adversarial Training.
Secondly, we can observe the gain in robustness against strong adversarial attacks. Network trained with Lipschitz regularization and Adversarial Training offer a consistent increase in robustness across $\ell_\infty$ and $\ell_2$ attacks with different $\epsilon$ value. We can also note that increasing the regularization lead to an increase in generalization and robustness.  

Finally, we also conducted an experiment to study the impact of the regularization on the gradients of the whole network by measuring the norm of the Jacobian matrix, averaged over the inputs from the test set.
The results of this experiment are presented in Figure~\ref{figure:dist_jacobian_attacks_iter}(a) and show more concentrated gradients with  Lipschitz regularization, which is the expected effect.
This suggests that our method does not only work layer-wise, but also at the level of the entire network.
A second experiment, using Adversarial Training, presented in Figure~\ref{figure:dist_jacobian_attacks_iter}(b) demonstrates that the effect is even stronger when the two techniques are combined together.
This corroborates the work by~\citet{farnia2018generalizable}.
It also demonstrates that Lipschitz regularization and Adversarial Training (or other Jacobian regularization techniques) are complementary.
Hence they offer an increased robustness to adversarial attacks as demonstrated above.

\paragraph{Experimental Settings CIFAR10/100 Dataset}
For all our experiments, we use the Wide ResNet architecture introduced by~\citet{zagoruyko2016wide} to train our classifiers.
We use Wide Resnet networks with 28 layers and a width factor of 10. We train our networks for 200 epochs with a batch size of $200$. We use Stochastic Gradient Descent with a momentum of $0.9$, an initial learning rate of $0.1$ with exponential decay of 0.1 (MultiStepLR gamma = 0.1) after the epochs $60$, $120$ and $160$. 
For Adversarial Training ~\cite{madry2018towards}, we use Projected Gradient Descent with an $\epsilon = 8/255 (\approx 0.031)$, a step size of $\epsilon/5 (\approx 0.0062)$ and 10 iterations, we use a random initialization but run the attack only once. 
To evaluate the robustness of our classifiers, we rigorously followed the experimental protocol proposed by~\citet{tramer2020adaptive} and~\citet{carlini2019evaluating}.
More precisely, as an $\ell_\infty$ attack, we use PGD with the same parameters ($\epsilon = 8/255$, a step size of $\epsilon/5$) but we increase the number of iterations up to 200 with 10 restarts.
For each image, we select the perturbation that maximizes the loss among all the iterations and the 10 restarts.
As $\ell_2$ attacks, we use a bounded version of the~\citet{carlini2017towards} attack.
We choose $0.6$ and $0.8$ as bounds for the $\ell_2$ perturbation.
Note that the $\ell_2$ ball with a radius of $0.8$ has approximately the same volume as the $\ell_\infty$ ball with a radius of $0.031$ for the dimensionality of CIFAR10/100.

\paragraph{Experimental Settings for ImageNet Dataset}

For all our experiments, we use the Resnet-101 architecture \cite{he2016deep}.
We have used Stochastic Gradient Descent with a momentum of $0.9$, a weight decay of $0.0001$, label smoothing of $0.1$, an initial learning rate of $0.1$ with exponential decay of $0.1$ (MultiStepLR gamma = $0.1$) after the epochs $30$ and $60$.
We have used Exponential Moving Average over the weights with a decay of $0.999$.
We have trained our networks for 80 epochs with a batch size of $4096$.
For Adversarial Training, we have used PGD with 5 iterations, $\epsilon = 8/255 (\approx 0.031)$ and a step size of $\epsilon/5 (\approx 0.0062)$. 
To evaluate the robustness of our classifiers on ImageNet Dataset, we have used an $\ell_\infty$ and an $\ell_2$ attacks.
More precisely, as an $\ell_\infty$ attack, we use PGD with an epsilon of 0.02 and 0.031, a step size of $\epsilon/5$) with a number of iterations to 30 with 5 restarts.
For each image, we select the perturbation that maximizes the loss among all the iterations and the 10 restarts.
As $\ell_2$ attacks, we use a bounded version of the~\citet{carlini2017towards} attack.
We have used $1$, $2$ and $3$ as bounds for the $\ell_2$ perturbation.

\section{Conclusion}
\label{section:conclusion}

In this paper, we introduced a new bound on the Lipschitz constant of convolutional layers that is both accurate and efficient to compute.
We used this bound to regularize the Lipschitz constant of neural networks and demonstrated its computational efficiency in training large neural networks with a regularized Lipschitz constant.
As an illustrative example, we combined our bound with adversarial training, and showed that this increases the robustness of the trained networks to  adversarial attacks.
The scope of our results goes beyond this application and can be used in a wide variety of settings, for example, to stabilize the training of Generative Adversarial Networks (GANs) and invertible networks, or to improve generalization capabilities of classifiers.
Our future work will focus on investigating these fields. 

\section*{Acknowledgements}
We would like to thank Rafael Pinot and Geovani Rizk for their valuable insights. This work was granted access to the HPC resources of IDRIS under the allocation 2020-101141 made by GENCI.

  \bibliography{bibliography}

\begin{thebibliography}{39}
\providecommand{\natexlab}[1]{#1}
\providecommand{\url}[1]{\texttt{#1}}
\providecommand{\urlprefix}{URL }
\expandafter\ifx\csname urlstyle\endcsname\relax
  \providecommand{\doi}[1]{doi:\discretionary{}{}{}#1}\else
  \providecommand{\doi}{doi:\discretionary{}{}{}\begingroup
  \urlstyle{rm}\Url}\fi

\bibitem[{Arjovsky, Chintala, and Bottou(2017)}]{arjovsky2017wasserstein}
Arjovsky, M.; Chintala, S.; and Bottou, L. 2017.
\newblock Wasserstein gan.
\newblock \emph{arXiv preprint arXiv:1701.07875} .

\bibitem[{Bartlett, Foster, and Telgarsky(2017)}]{bartlett2017spectrally}
Bartlett, P.~L.; Foster, D.~J.; and Telgarsky, M.~J. 2017.
\newblock Spectrally-normalized margin bounds for neural networks.
\newblock In \emph{Advances in Neural Information Processing Systems
  (NeurIPS)}.

\bibitem[{Carlini et~al.(2019)Carlini, Athalye, Papernot, Brendel, Rauber,
  Tsipras, Goodfellow, and Madry}]{carlini2019evaluating}
Carlini, N.; Athalye, A.; Papernot, N.; Brendel, W.; Rauber, J.; Tsipras, D.;
  Goodfellow, I.; and Madry, A. 2019.
\newblock On Evaluating Adversarial Robustness.
\newblock \emph{arXiv preprint arXiv:1902.06705} .

\bibitem[{Carlini and Wagner(2017)}]{carlini2017towards}
Carlini, N.; and Wagner, D. 2017.
\newblock Towards evaluating the robustness of neural networks.
\newblock In \emph{2017 ieee symposium on security and privacy (sp)}, 39--57.
  IEEE.

\bibitem[{Cisse et~al.(2017)Cisse, Bojanowski, Grave, Dauphin, and
  Usunier}]{cisse2017parseval}
Cisse, M.; Bojanowski, P.; Grave, E.; Dauphin, Y.; and Usunier, N. 2017.
\newblock Parseval Networks: Improving Robustness to Adversarial Examples.
\newblock In \emph{Proceedings of the 34th International Conference on Machine
  Learning (ICML)}.

\bibitem[{De~La~Chevrotiere(2009)}]{de2009finding}
De~La~Chevrotiere, G. 2009.
\newblock Finding the maximum modulus of a polynomial on the polydisk using a
  generalization of steckins lemma.
\newblock \emph{SIAM Undergraduate Research Online} .

\bibitem[{Dumoulin and Visin(2016)}]{dumoulin2016guide}
Dumoulin, V.; and Visin, F. 2016.
\newblock A guide to convolution arithmetic for deep learning.
\newblock \emph{arXiv preprint arXiv:1603.07285} .

\bibitem[{Farnia, Zhang, and Tse(2019)}]{farnia2018generalizable}
Farnia, F.; Zhang, J.; and Tse, D. 2019.
\newblock Generalizable Adversarial Training via Spectral Normalization.
\newblock In \emph{International Conference on Learning Representations
  (ICLR)}.

\bibitem[{Fazlyab et~al.(2019)Fazlyab, Robey, Hassani, Morari, and
  Pappas}]{NIPS2019_9319}
Fazlyab, M.; Robey, A.; Hassani, H.; Morari, M.; and Pappas, G. 2019.
\newblock Efficient and Accurate Estimation of Lipschitz Constants for Deep
  Neural Networks.
\newblock In \emph{Advances in Neural Information Processing Systems
  (NeurIPS)}.

\bibitem[{Golub and Van~der Vorst(2000)}]{golub2000eigenvalue}
Golub, G.~H.; and Van~der Vorst, H.~A. 2000.
\newblock Eigenvalue computation in the 20th century.
\newblock \emph{Journal of Computational and Applied Mathematics} 123(1-2):
  35--65.

\bibitem[{Goodfellow, Shlens, and Szegedy(2015)}]{goodfellow2014explaining}
Goodfellow, I.; Shlens, J.; and Szegedy, C. 2015.
\newblock Explaining and Harnessing Adversarial Examples.
\newblock In \emph{International Conference on Learning Representations
  (ICLR)}.

\bibitem[{Gouk et~al.(2018)Gouk, Frank, Pfahringer, and
  Cree}]{gouk2018regularisation}
Gouk, H.; Frank, E.; Pfahringer, B.; and Cree, M. 2018.
\newblock Regularisation of neural networks by enforcing lipschitz continuity.
\newblock \emph{arXiv preprint arXiv:1804.04368} .

\bibitem[{Gray et~al.(2006)}]{gray2006toeplitz}
Gray, R.~M.; et~al. 2006.
\newblock Toeplitz and circulant matrices: A review.
\newblock \emph{Foundations and Trends{\textregistered} in Communications and
  Information Theory} 2(3): 155--239.

\bibitem[{Green(1999)}]{green1999calculating}
Green, J. 1999.
\newblock Calculating the maximum modulus of a polynomial using Steckin's
  lemma.
\newblock \emph{SIAM journal on numerical analysis} 36(4): 1022--1029.

\bibitem[{Guti{\'e}rrez-Guti{\'e}rrez, Crespo
  et~al.(2012)}]{gutierrez2012block}
Guti{\'e}rrez-Guti{\'e}rrez, J.; Crespo, P.~M.; et~al. 2012.
\newblock Block Toeplitz matrices: Asymptotic results and applications.
\newblock \emph{Foundations and Trends{\textregistered} in Communications and
  Information Theory} 8(3): 179--257.

\bibitem[{He et~al.(2016)He, Zhang, Ren, and Sun}]{he2016deep}
He, K.; Zhang, X.; Ren, S.; and Sun, J. 2016.
\newblock Deep residual learning for image recognition.
\newblock In \emph{Proceedings of the IEEE Conference on Computer Vision and
  Pattern Recognition (CVPR)}.

\bibitem[{Huang et~al.(2017)Huang, Liu, Van Der~Maaten, and
  Weinberger}]{huang2017densely}
Huang, G.; Liu, Z.; Van Der~Maaten, L.; and Weinberger, K.~Q. 2017.
\newblock Densely connected convolutional networks.
\newblock In \emph{Proceedings of the IEEE Conference on Computer Vision and
  Pattern Recognition (CVPR)}.

\bibitem[{Iandola et~al.(2016)Iandola, Han, Moskewicz, Ashraf, Dally, and
  Keutzer}]{iandola2016squeezenet}
Iandola, F.~N.; Han, S.; Moskewicz, M.~W.; Ashraf, K.; Dally, W.~J.; and
  Keutzer, K. 2016.
\newblock SqueezeNet: AlexNet-level accuracy with 50x fewer parameters and< 0.5
  MB model size.
\newblock \emph{arXiv preprint arXiv:1602.07360} .

\bibitem[{Jain(1989)}]{jain1989fundamentals}
Jain, A.~K. 1989.
\newblock \emph{Fundamentals of digital image processing}.
\newblock Englewood Cliffs, NJ: Prentice Hall,.

\bibitem[{Krizhevsky, Sutskever, and Hinton(2012)}]{krizhevsky2012imagenet}
Krizhevsky, A.; Sutskever, I.; and Hinton, G.~E. 2012.
\newblock Imagenet classification with deep convolutional neural networks.
\newblock In \emph{Advances in Neural Information Processing Systems
  (NeurIPS)}.

\bibitem[{Latorre, Rolland, and Cevher(2020)}]{latorre2020lipschitz}
Latorre, F.; Rolland, P.; and Cevher, V. 2020.
\newblock Lipschitz constant estimation for Neural Networks via sparse
  polynomial optimization.
\newblock In \emph{International Conference on Learning Representations
  (ICLR)}.

\bibitem[{Lehoucq and Sorensen(1996)}]{lehoucq1996deflation}
Lehoucq, R.~B.; and Sorensen, D.~C. 1996.
\newblock Deflation techniques for an implicitly restarted Arnoldi iteration.
\newblock \emph{SIAM Journal on Matrix Analysis and Applications} 17(4):
  789--821.

\bibitem[{Li et~al.(2019)Li, Haque, Anil, Lucas, Grosse, and
  Jacobsen}]{NIPS2019_9673}
Li, Q.; Haque, S.; Anil, C.; Lucas, J.; Grosse, R.~B.; and Jacobsen, J.-H.
  2019.
\newblock Preventing Gradient Attenuation in Lipschitz Constrained
  Convolutional Networks.
\newblock In \emph{Advances in Neural Information Processing Systems
  (NeurIPS)}.

\bibitem[{Madry et~al.(2018)Madry, Makelov, Schmidt, Tsipras, and
  Vladu}]{madry2018towards}
Madry, A.; Makelov, A.; Schmidt, L.; Tsipras, D.; and Vladu, A. 2018.
\newblock Towards Deep Learning Models Resistant to Adversarial Attacks.
\newblock In \emph{International Conference on Learning Representations
  (ICLR)}.

\bibitem[{Miyato et~al.(2018)Miyato, Kataoka, Koyama, and
  Yoshida}]{miyato2018spectral}
Miyato, T.; Kataoka, T.; Koyama, M.; and Yoshida, Y. 2018.
\newblock Spectral normalization for generative adversarial networks.
\newblock \emph{arXiv preprint arXiv:1802.05957} .

\bibitem[{Pfister and Bresler(2018)}]{pfister2018bounding}
Pfister, L.; and Bresler, Y. 2018.
\newblock Bounding multivariate trigonometric polynomials with applications to
  filter bank design.
\newblock \emph{arXiv preprint arXiv:1802.09588} .

\bibitem[{Schmidt et~al.(2018)Schmidt, Santurkar, Tsipras, Talwar, and
  Madry}]{schmidt2018adversarially}
Schmidt, L.; Santurkar, S.; Tsipras, D.; Talwar, K.; and Madry, A. 2018.
\newblock Adversarially robust generalization requires more data.
\newblock In \emph{Advances in Neural Information Processing Systems
  (NeurIPS)}.

\bibitem[{Sedghi, Gupta, and Long(2019)}]{sedghi2018the}
Sedghi, H.; Gupta, V.; and Long, P.~M. 2019.
\newblock The Singular Values of Convolutional Layers.
\newblock In \emph{International Conference on Learning Representations
  (ICLR)}.

\bibitem[{Serra(1994)}]{serra1994preconditioning}
Serra, S. 1994.
\newblock Preconditioning strategies for asymptotically ill-conditioned block
  Toeplitz systems.
\newblock \emph{BIT Numerical Mathematics} 34(4): 579--594.

\bibitem[{Simonyan and Zisserman(2014)}]{simonyan2014very}
Simonyan, K.; and Zisserman, A. 2014.
\newblock Very deep convolutional networks for large-scale image recognition.
\newblock \emph{arXiv preprint arXiv:1409.1556} .

\bibitem[{Singla and Feizi(2019)}]{singla2019bounding}
Singla, S.; and Feizi, S. 2019.
\newblock Bounding Singular Values of Convolution Layers.
\newblock \emph{arXiv preprint arXiv:1911.10258} .

\bibitem[{Tramer et~al.(2020)Tramer, Carlini, Brendel, and
  Madry}]{tramer2020adaptive}
Tramer, F.; Carlini, N.; Brendel, W.; and Madry, A. 2020.
\newblock On adaptive attacks to adversarial example defenses.
\newblock \emph{arXiv preprint arXiv:2002.08347} .

\bibitem[{Tsuzuku, Sato, and Sugiyama(2018)}]{tsuzuku2018lipschitz}
Tsuzuku, Y.; Sato, I.; and Sugiyama, M. 2018.
\newblock Lipschitz-margin training: Scalable certification of perturbation
  invariance for deep neural networks.
\newblock In \emph{Advances in Neural Information Processing Systems
  (NeurIPS)}.

\bibitem[{Virmaux and Scaman(2018)}]{scaman2018lipschitz}
Virmaux, A.; and Scaman, K. 2018.
\newblock Lipschitz regularity of deep neural networks: analysis and efficient
  estimation.
\newblock In \emph{Advances in Neural Information Processing Systems
  (NeurIPS)}.

\bibitem[{Widom(1976)}]{widom1976asymptotic}
Widom, H. 1976.
\newblock Asymptotic behavior of block Toeplitz matrices and determinants. II.
\newblock \emph{Advances in Mathematics} 21(1): 1--29.

\bibitem[{Yi(2020)}]{yi2020asymptotic}
Yi, X. 2020.
\newblock Asymptotic Singular Value Distribution of Linear Convolutional
  Layers.
\newblock \emph{arXiv preprint arXiv:2006.07117} .

\bibitem[{Yoshida and Miyato(2017)}]{yoshida2017spectral}
Yoshida, Y.; and Miyato, T. 2017.
\newblock Spectral norm regularization for improving the generalizability of
  deep learning.
\newblock \emph{arXiv preprint arXiv:1705.10941} .

\bibitem[{Zagoruyko and Komodakis(2016)}]{zagoruyko2016wide}
Zagoruyko, S.; and Komodakis, N. 2016.
\newblock Wide residual networks.
\newblock \emph{arXiv preprint arXiv:1605.07146} .

\bibitem[{Zhang(2011)}]{zhang2011matrix}
Zhang, F. 2011.
\newblock \emph{Matrix theory: basic results and techniques}.
\newblock Springer Science \& Business Media.

\end{thebibliography}

  \begin{testenv}
    
\appendix
\part*{Supplementary Material}

\sectionapp{Notations}

Below are the notations we will use for the theorems and proofs. 
\begin{itemize}
\itemsep0em 
  \item Let $\ci = \sqrt{-1}$.
  \item We denote $\sigma_1(\mathbf{A}) = \sigma_{max}(\mathbf{A})$ the maximum singular value of the matrix $\mathbf{A}$. 
  \item We denote $\lambda_1(\mathbf{A}) = \lambda_{max}(\mathbf{A})$ the maximum eigenvalue of the Hermitian matrix $\mathbf{A}$. 
  \item For any function $f: \mathcal{X} \rightarrow \mathbb{C}$, we denote $f^*$ the conjugate function of $f$.
  \item Let $\mathbf{A}$ be a $n \times n$ symmetric real matrix, we say that 
  \begin{itemize}
    \item[] $\mathbf{A}$ is positive definite, and we note $\mathbf{A} > 0$ if $\mathbf{x}^{\top} \mathbf{A} \mathbf{x} > 0$ for all non-zero $\mathbf{x}$ in $\mathbb{R}^{n}$.
    \item[] $\mathbf{A}$ is positive semi-definite, and we note $\mathbf{A} \geq 0$ if $\mathbf{x}^{\top} \mathbf{A} \mathbf{x} \geq 0$ for all non-zero $\mathbf{x}$ in $\mathbb{R}^{n}$.
  \end{itemize}
  \item Let $N = \{ -n+1, \dots, n-1 \}$ and $M = \{ -m+1, \dots, m-1 \}$
\end{itemize}

\sectionapp{Discussion on the Convolution Operation}

\subsectionapp{Convolution as Matrix Multiplication}\label{appendix-sec:conv_matrix_multiplication}

A discrete convolution between a signal $\mathbf{x}$ and a kernel $\mathbf{k}$ can be expressed as a  product between the vectorization of $\mathbf{x}$ and a doubly-block Toeplitz matrix $\textbf{M}$, whose coefficients have been chosen to match the convolution $\mathbf{x} * \mathbf{k}$. For a 2-dimensional signal $\mathbf{x} \in \mathbb{R}^{n \times n}$ and a kernel $\mathbf{k} \in \mathbb{R}^{m \times m}$ with $m$ odd, the convolution operation can be written as follows:
\begin{equation} \label{appendix-eq:equation_conv_as_matrix}
    \reshape(\mathbf{y}) = \reshape(\pad(\mathbf{x}) * \mathbf{k}) = \mathbf{M} \reshape(\mathbf{x})
\end{equation}
where $\mathbf{M}$ is a $n^2$-by-$n^2$  doubly-block Toeplitz matrix, {\em i.e.} a block Toeplitz matrix where the blocks are also Toeplitz. (Note that this is not a doubly-block circulant matrix because of the padding.), $\mathbf{y}$ is the output of size $q \times q$ with $q = n - m + 2p + 1$, (see {\em e.g.} \citealt{dumoulin2016guide}). The $\reshape: \mathbb{R}^{n \times n} \rightarrow \mathbb{R}^{n^2}$ operator is defined as follows: $\reshape(\mathbf{x})_q = \mathbf{x}_{\lfloor q/n \rfloor,\ q\mod n}$. The $\pad: \mathbb{R}^{n \times n} \rightarrow \mathbb{R}^{(n+2p) \times (n+2p)}$ operator is a zero-padding operation which takes a signal $\mathbf{x}$ of shape $\mathbb{R}^{n \times n}$ and adds $0$ on the edges so as to obtain a new signal $\mathbf{y}$ of shape $\mathbb{R}^{(n+2p) \times (n+2p)}$. In order to have the same shape between the convoluted signal and the signal, we set $ p = \lfloor m/2 \rfloor$ \footnote{We take a square signal and an odd size square kernel to simplify the notation but the same applies for any input and kernel size. Also, we take a specific padding in order to have the same size between the input and output signal. But everything in the paper can be generalized to any paddings.}.

We now present an example of the convolution operation with doubly-block Toeplitz matrix. Let us define a kernel $\mathbf{k} \in \mathbb{R}^{3\times3}$ as follows:
\begin{equation}
  \mathbf{k} = \begin{pmatrix}
    k_{0} & k_{1} & k_{2} \\
    k_{3} & k_{4} & k_{5} \\
    k_{6} & k_{7} & k_{8} 
  \end{pmatrix}
\end{equation}
If we set the padding to 1, then, the matrix $\mathbf{M}$ is a tridiagonal doubly-block Toeplitz matrix of size $n \times n$ and has the following form:
\begin{equation}
    \mathbf{M} = \begin{pmatrix}
    \mathbf{T}_0 & \mathbf{T}_{1} &  &  & 0  \\
    \mathbf{T}_{2} & \mathbf{T}_0 & \mathbf{T}_{1} &  &  \\
     & \mathbf{T}_{2} & \scalebox{.70}{$\ddots$} & \scalebox{.70}{$\ddots$} &   \\
     &  & \scalebox{.70}{$\ddots$} & \mathbf{T}_0 & \mathbf{T}_{1}  \\
    0 &  &  & \mathbf{T}_{2} & \mathbf{T}_0  \\
    \end{pmatrix}
    \label{appendix-eq:operator_matrix}
\end{equation}
where $\mathbf{T}_j$ are banded Toeplitz matrices and the values of $\mathbf{k}$ are distributed in the Toeplitz blocks as follow:
\begin{align}
\mathbf{T}_0 = \begin{psmallmatrix}
    k_{4} & k_{3} &  &  &  0 \\
    k_{5} & k_{4} & k_{3} &  &   \\
     & k_{5} & \scalebox{.40}{$\ddots$} & \scalebox{.40}{$\ddots$}  \\
     &  &  \scalebox{.40}{$\ddots$} & k_{4} & k_{3}  \\
    0 &  &  & k_{5} & k_{4}  \\
    \end{psmallmatrix} &&
\mathbf{T}_{1} = \begin{psmallmatrix}
    k_{7} & k_{6} &  &  &  0 \\
    k_{8} & k_{7} & k_{6} &  &   \\
     & k_{8} & \scalebox{.40}{$\ddots$} & \scalebox{.40}{$\ddots$} &    \\
     &  &  \scalebox{.40}{$\ddots$} & k_{7} & k_{6}  \\
    0 &  &  & k_{8} & k_{7}  \\
    \end{psmallmatrix} &&
\mathbf{T}_{2} = \begin{psmallmatrix}
    k_{1} & k_{0} &  &  &  0 \\
    k_{2} & k_{1} & k_{0} &  &   \\
     & k_{2} & \scalebox{.40}{$\ddots$} & \scalebox{.40}{$\ddots$} &    \\
     &  &  \scalebox{.40}{$\ddots$} & k_{1} & k_{0}  \\
    0 &  &  & k_{2} & k_{1}  \\
    \end{psmallmatrix} 
\end{align}

\paragraph{Remark 1: } Note that the size of the operator matrix $\mathbf{M}$ of a convolution operation depends on the size of the signal. If a signal $\mathbf{x}$ has size $n \times n$, the vectorized signal will be of size $n^2$ and the operator matrix will be of size $n^2 \times n^2$ which can be very large. Indeed, in deep learning practice the size of the images used for training can range from 32 (CIFAR-10) to hundred for high definition images (ImageNet). Therefore, with classical methods, computing the singular values of this operator matrix can be very expensive.

\paragraph{Remark 2: } In the particular case of zero padding convolution operation, the operator matrix is a Toeplitz block with circulant block (i.e. each block of the Toeplitz block is a circulant matrix) which is a particular case of doubly-block Toeplitz matrices. 

\subsectionapp{Generating a Toeplitz matrix and block Toeplitz matrix from a trigonometric polynomial}
An $n\times n$ Toeplitz matrix $\mathbf A$ is fully determined by a two-sided sequence of scalars: $\{a_\seqidx\}_{\seqidx \in \seqsetN}$, whereas an $nm\times nm$ block Toeplitz matrix $\mathbf B$ is fully determined by a two-sided sequence of blocks $\{\mathbf B_\seqidx\}_{\seqidx \in \seqsetN}$ and where each block $\mathbf B_\seqidx$ is an $m \times m$ matrix.  

\begin{equation}
    \mathbf{A} \triangleq  \begin{psmallmatrix}
      a_0 & a_{1}   & a_{2} & \cdots & \cdots & a_{n-1}  \\
      a_{-1} & a_0 & a_{1} & \ddots & & \vdots \\
      a_{-2} & a_{-1} & \ddots & \ddots & \ddots & \vdots \\ 
     \vdots & \ddots & \ddots & \ddots & a_{1} & a_{2}\\
     \vdots & & \ddots & a_{-1} & a_{0} & a_{1} \\
    a_{-n+1} & \cdots & \cdots & a_{-2} & a_{-1} & a_0
    \end{psmallmatrix} \quad \quad
    \mathbf{B} \triangleq  \begin{psmallmatrix}
      \mathbf{B}_0 & \mathbf{B}_{1}   & \mathbf{B}_{2} & \cdots & \cdots & \mathbf{B}_{n-1}  \\
      \mathbf{B}_{-1} & \mathbf{B}_0 & \mathbf{B}_{1} & \ddots & & \vdots \\
      \mathbf{B}_{-2} & \mathbf{B}_{-1} & \ddots & \ddots & \ddots & \vdots \\ 
     \vdots & \ddots & \ddots & \ddots & \mathbf{B}_{1} & \mathbf{B}_{2}\\
     \vdots & & \ddots & \mathbf{B}_{-1} & \mathbf{B}_{0} & \mathbf{B}_{1} \\
    \mathbf{B}_{-n+1} & \cdots & \cdots & \mathbf{B}_{-2} & \mathbf{B}_{-1} & \mathbf{B}_0
    \end{psmallmatrix}
\end{equation}

The trigonometric polynomial that {\em generates} the Toeplitz matrix $\mathbf{A}$ can be defined as follows:
\begin{equation}
    f_{\mathbf{A}}(\omega) \triangleq \sum_{h \in N} a_h e^{\ci h \omega}
\end{equation}
The function $f_{\mathbf{A}}$ is said to be the \emph{generating function} of $\mathbf{A}$. To recover the Toeplitz matrix from its generating function, we have the following operator presented in the main paper:
\begin{equation} \label{appendix-eq:toeplitz_operator}
    \leftmat \mathbf{T}(f) \rightmat_{i, j} \triangleq  \frac{1}{2\pi} \int_0^{2\pi} e^{-\ci (i - j)\omega} f(\omega) \,d\omega .
\end{equation}
We can now show that $\mathbf{T}(f_{\mathbf{A}}) = \mathbf{A}$: 
\begin{align}
    \leftmat \mathbf{T}(f_\mathbf{A}) \rightmat_{i, j} &= \frac{1}{2\pi} \int_0^{2\pi} e^{-\ci (i-j)\omega} f_{\mathbf{A}}(\omega) \,d\omega  \\
    &= \frac{1}{2\pi} \int_0^{2\pi} e^{-\ci (i-j) \omega} \sum_{h \in N} a_h e^{\ci h \omega} \,d\omega  \\
    &= \frac{1}{2\pi} \int_0^{2\pi} \sum_{h \in N} a_h e^{\ci (j - i + h) \omega} \,d\omega  \\
    &= \sum_{h \in N} a_h \frac{1}{2\pi} \int_0^{2\pi} e^{\ci (j - i + h) \omega} \,d\omega 
    = a_{j-i} .
\end{align}
Because:
\begin{equation}
    \frac{1}{2\pi} \int_0^{2\pi} e^{\ci k \omega} \,d\omega = \left\{
                \begin{array}{ll}
                  1, & \text{if}\ k = 0, \\
                  0, & \text{if}\ k\ \text{is a non-zero integer number.}
                \end{array}
                \right.
\end{equation}

The same reasoning can be applied to block Toeplitz matrices. Instead of being complex-valued, the trigonometric polynomial that {\em generates} the block Toeplitz $\mathbf{B}$ is matrix-valued and can be defined as follows:
\begin{equation}
    f_{\mathbf{B}}(\omega) \triangleq \sum_{h \in N} \mathbf{B}_h e^{\ci h \omega}
\end{equation}
The function $f_{\mathbf{B}}$ is said to be the \emph{generating function} of $\mathbf{B}$. To recover the block Toeplitz matrix from its generating function, we use the Toeplitz operator defined in Equation~\ref{appendix-eq:toeplitz_operator}. We can show that $\mathbf{T}(f_\mathbf{B}) = \mathbf{B}$:
\begin{align}
    \leftmat \mathbf{T}(f_\mathbf{B}) \rightmat_{i, j} &= \frac{1}{2\pi} \int_0^{2\pi} e^{-\ci (i-j)\omega} f_{\mathbf{B}}(\omega) \,d\omega  \\
    &= \frac{1}{2\pi} \int_0^{2\pi} e^{-\ci (i-j) \omega} \sum_{h \in N} \mathbf{B}_h e^{\ci h \omega} \,d\omega  \\
    &= \frac{1}{2\pi} \int_0^{2\pi} \sum_{h \in N} \mathbf{B}_h e^{\ci (j - i + h) \omega} \,d\omega  \\
    &= \sum_{h \in N} \mathbf{B}_h \frac{1}{2\pi} \int_0^{2\pi} e^{\ci (j - i + h) \omega} \,d\omega 
    = \mathbf{B}_{j-i} .
\end{align}

\sectionapp{Main proofs}

\subsectionapp{Proof of Theorem~\ref{th:doubly_block_teoplitz_sup_singular} -- Bound on the Maximal Singular Value of Doubly-Block Toeplitz matrices}

As presented in the main paper, the Toeplitz operator can be extended to doubly-block Toeplitz matrices. 
The operator $\mathbf{D}$ maps a function $f: \mathbb{R}^2 \rightarrow \mathbb{C}$ to a doubly-block Toeplitz matrix of size $nm \times nm$. For the sake of clarity, the dependence of $\mathbf{D}(f)$  on $m$ and $n$ is omitted. Let $\mathbf{D}(f) =\leftmat\mathbf{D}_{i,j}(f)\rightmat_{i,j\in\{0, \ldots, n-1 \}}$ where $\mathbf{D}_{i,j}$ is defined as:
\begin{equation} 
    \mathbf{D}_{i,j}(f) =\leftmat\frac{1}{4\pi^{2}}\int_{[0,2\pi]^{2}}e^{-\mathbf{i}\left((i-j)\omega_{1}+(k-l)\omega_{2}\right)}f(\omega_{1},\omega_{2})\,d(\omega_{1},\omega_{2})\rightmat_{k,l\in\{0, \ldots, m-1\}} .
\end{equation}

Note that in the following, we only consider generating functions as trigonometric polynomials with real coefficients therefore the matrices generated by $\mathbf{D}(f)$ are real. We can now combine Theorems 1 and 2 to bound the maximal singular value of a doubly-block Toeplitz Matrix. 

\begin{customtheorem}{3}[Bound on the Maximal Singular Value of a Doubly-Block Toeplitz Matrix] \label{th:doubly_block_teoplitz_sup_singular}
Let $\mathbf{D}(f) \in \mathbb{R}^{nm \times nm}$ be a doubly-block Toeplitz matrix generated by the function $f$, then:
\begin{align}
\sigma_{1} \left( \mathbf{D}(f) \right) &\leq \sup_{\omega_1, \omega_2 \in [0, 2\pi]^2}|f(\omega_1,\omega_2)|
\end{align}
where the function $f: \mathbb{R}^2 \rightarrow \mathbb{C}$, is a multivariate trigonometric polynomial of the form
\begin{equation} \label{eq:muli_variate_poly_on_M}
    f(\omega_1, \omega_2) \triangleq \sum_{h_1 \in \seqsetN} \sum_{h_2 \in \seqsetM} d_{h_1, h_2} e^{\ci (h_1 \omega_1 + h_2 \omega_2)},
\end{equation}
where $d_{h_{1},h_{2}}$
is the ${h_2}^\textrm{th}$ scalar of the ${h_1}^\textrm{th}$ block of the doubly-Toeplitz matrix $\mathbf{D}(f)$.
\end{customtheorem}

\begin{proof}

A doubly-block Toeplitz matrix is by definition a block matrix where each block is a Toeplitz matrix. We can then express a doubly-block Toeplitz matrix with the operator $\mathbf{T}(F)$ where the matrix-valued generating function $F$ has Toeplitz coefficient. Let us define a matrix-valued trigonometric polynomial $F:\mathbb{R}\rightarrow\mathbb{C}^{n \times n}$ of the form:
\begin{equation}
    F(\omega_1) = \sum_{h_1 \in N} \mathbf{A}_{h_1}e^{\ci h_1\omega_1}
\end{equation}
where $\mathbf{A}_{h_1}$ are Toeplitz matrices of size $m \times m$ determined by the sequence $\{d_{h_1, -m+1}, \dots, d_{h_1, m-1} \}$. 
From Theorem 2, we have:
\begin{equation}
  \sigma_1\left(\mathbf{T}(F) \right) \leq \sup_{\omega_1 \in [0,2\pi] } \sigma_{1}\left( F(\omega_1) \right) \label{appendix-eq:th_bound_block_toeplitz}
\end{equation}

Because Toeplitz matrices are closed under addition and scalar product, $F(\omega_1)$ is also a Toeplitz matrix of size $m \times m$. 
We can thus define a function $f:\mathbb{R}^{2} \rightarrow \mathbb{C}$ such that $f(\omega_1,\ \cdot\ )$ is the generating function of $F(\omega_1)$. From Theorem 1, we can write:
\begin{align}
    \sigma_{1}\left( F(\omega_1) \right) &\leq \sup_{\omega_2 \in [0,2\pi]} \left| f(\omega_1, \omega_2) \right| \\
    \Leftrightarrow \sup_{\omega_1 \in [0,2\pi]} \sigma_{1}\left( F(\omega_1) \right) &\leq  \sup_{\omega_1, \omega_2 \in [0,2\pi]^2} \left| f(\omega_1, \omega_2) \right| \\
    \Leftrightarrow \sigma_1\left(\mathbf{T}(F) \right) &\leq \sup_{\omega_1, \omega_2 \in [0,2\pi]^2} \left| f(\omega_1, \omega_2) \right|
\end{align}
where the function $f$ is of the form:
\begin{equation}
f(\omega_1,\omega_2) = \sum_{h_1 \in N} \sum_{h_2 \in M} d_{h_1, h_2} e^{\ci \left( h_1\omega_1 + h_2 \omega_2\right)}
\end{equation}
Because the function $f(\omega_1,\ \cdot\ )$ is the generating function of $F(\omega_1)$ is it easy to show that the function $f$ is the generating function of $\mathbf{T}(F)$. Therefore, $\mathbf{T}(F) = \mathbf{D}(f)$ which concludes the proof. 
\end{proof}

\subsectionapp{Proof of Theorem~\ref{th:theorem4} -- Bound on the Maximal Singular Value of Stacked Doubly-Block Toeplitz Matrices}

In order to prove Theorem~\ref{th:theorem4}, we will need the following lemmas:

\begin{lemma}[\citealp{zhang2011matrix}] \label{appendix-th:diff_positive_semidefinite_matrices}
Let $\mathbf{A}$ and $\mathbf{B}$ be Hermitian positive semi-definite matrices. If $\mathbf{A} - \mathbf{B}$ is positive semi-definite, then:
\begin{equation*}
    \lambda_1 \left( \mathbf{B} \right) \leq \lambda_1 \left( \mathbf{A} \right)
\end{equation*}
\end{lemma}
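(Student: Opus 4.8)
The statement to prove is Lemma~\ref{appendix-th:diff_positive_semidefinite_matrices}: if $\mathbf{A}$ and $\mathbf{B}$ are Hermitian positive semi-definite and $\mathbf{A} - \mathbf{B}$ is positive semi-definite, then $\lambda_1(\mathbf{B}) \leq \lambda_1(\mathbf{A})$. The plan is to use the variational (Rayleigh quotient / Courant--Fischer) characterization of the largest eigenvalue of a Hermitian matrix, namely $\lambda_1(\mathbf{X}) = \max_{\|\mathbf{x}\| = 1} \mathbf{x}^* \mathbf{X} \mathbf{x}$ for Hermitian $\mathbf{X}$.

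First I would fix a unit vector $\mathbf{v}$ achieving the maximum for $\mathbf{B}$, so that $\lambda_1(\mathbf{B}) = \mathbf{v}^* \mathbf{B} \mathbf{v}$; such a $\mathbf{v}$ exists because the unit sphere is compact and the Rayleigh quotient is continuous. Next, since $\mathbf{A} - \mathbf{B}$ is positive semi-definite, $\mathbf{v}^* (\mathbf{A} - \mathbf{B}) \mathbf{v} \geq 0$, hence $\mathbf{v}^* \mathbf{B} \mathbf{v} \leq \mathbf{v}^* \mathbf{A} \mathbf{v}$. Finally, $\mathbf{v}^* \mathbf{A} \mathbf{v} \leq \max_{\|\mathbf{x}\| = 1} \mathbf{x}^* \mathbf{A} \mathbf{x} = \lambda_1(\mathbf{A})$ by the variational characterization applied to $\mathbf{A}$. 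Chaining these inequalities gives $\lambda_1(\mathbf{B}) = \mathbf{v}^* \mathbf{B} \mathbf{v} \leq \mathbf{v}^* \mathbf{A} \mathbf{v} \leq \lambda_1(\mathbf{A})$, which is the claim.

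There is essentially no obstacle here — this is a textbook fact (it is exactly the statement cited from \citet{zhang2011matrix}), and the only things to be careful about are stating the Rayleigh quotient characterization for the correct (largest, as opposed to smallest) eigenvalue and noting that positive semi-definiteness of $\mathbf{A}$ and $\mathbf{B}$ individually is not even needed for this direction, though it is harmless to keep it in the hypotheses as stated. If one wanted to avoid invoking the max characterization twice, an alternative is to note that $\mathbf{A} \succeq \mathbf{B}$ is equivalent to the eigenvalue interlacing-type statement $\lambda_i(\mathbf{A}) \geq \lambda_i(\mathbf{B})$ for all $i$ (Weyl's monotonicity theorem), of which our claim is the $i=1$ special case; but the direct Rayleigh-quotient argument above is the shortest self-contained route and is the one I would write out.
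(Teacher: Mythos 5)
Your proof is correct. The paper does not actually prove this lemma---it cites it directly from \citet{zhang2011matrix} and uses it as a black box in the proof of Theorem 4---so there is no in-paper argument to compare against; your Rayleigh-quotient argument is the standard and complete justification, and your side remark that positive semi-definiteness of $\mathbf{A}$ and $\mathbf{B}$ individually is not needed (only that they are Hermitian and $\mathbf{A} - \mathbf{B} \geq 0$) is also accurate.
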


\begin{lemma}[\citealp{serra1994preconditioning}] \label{appendix-th:block_toeplitz_positive_definite}
If the doubly-block Toeplitz matrix $\mathbf{D}(f)$ is generated by a non-negative function $f$ not identically zero, then the matrix $\mathbf{D}(f)$ is positive definite. 
\end{lemma}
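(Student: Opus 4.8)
The plan is to establish positive definiteness directly from the definition of $\mathbf{D}(f)$, by evaluating the quadratic form $\mathbf{x}^\top\mathbf{D}(f)\,\mathbf{x}$ for an arbitrary nonzero real vector $\mathbf{x}$ and rewriting it as the integral of $f$ against a non-negative weight. Index the $nm$ coordinates of $\mathbf{x}$ by pairs $(j,l)$, with $j\in\{0,\ldots,n-1\}$ the block index and $l\in\{0,\ldots,m-1\}$ the index inside the block, and associate to $\mathbf{x}$ the bivariate trigonometric polynomial $q_{\mathbf{x}}(\omega_1,\omega_2)\triangleq\sum_{j,l}x_{j,l}\,e^{\ci(j\omega_1+l\omega_2)}$, whose coefficients are exactly the entries of $\mathbf{x}$. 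Substituting the definition of $\mathbf{D}_{i,j}(f)$ (Equation~\ref{equation:doubly_block_toeplitz_operator}) into $\mathbf{x}^\top\mathbf{D}(f)\,\mathbf{x}$, exchanging the finite sums with the integral, and factoring $e^{-\ci((i-j)\omega_1+(k-l)\omega_2)}=e^{-\ci(i\omega_1+k\omega_2)}\,e^{\ci(j\omega_1+l\omega_2)}$, the quadruple sum factors into $\overline{q_{\mathbf{x}}(\omega_1,\omega_2)}\,q_{\mathbf{x}}(\omega_1,\omega_2)$, so that
\begin{equation*}
  \mathbf{x}^\top\mathbf{D}(f)\,\mathbf{x}=\frac{1}{4\pi^2}\int_{[0,2\pi]^2}f(\omega_1,\omega_2)\,\bigl|q_{\mathbf{x}}(\omega_1,\omega_2)\bigr|^2\,d(\omega_1,\omega_2).
\end{equation*}
Since $f\geq 0$ everywhere, this quantity is non-negative; this already gives positive semi-definiteness of $\mathbf{D}(f)$.

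To obtain strict positivity it suffices to show that, for $\mathbf{x}\neq 0$, the integrand $f\,|q_{\mathbf{x}}|^2$ is strictly positive on a set of positive Lebesgue measure. This rests on two observations. First, because $\mathbf{x}\neq 0$, $q_{\mathbf{x}}$ is a nonzero trigonometric polynomial; such a function is real-analytic and not identically zero, hence its zero set has Lebesgue measure zero (equivalently, via the substitution $z_i=e^{\ci\omega_i}$, $q_{\mathbf{x}}$ becomes a nonzero bivariate Laurent polynomial, and Fubini together with the fact that a nonzero univariate polynomial has finitely many roots shows $\{q_{\mathbf{x}}=0\}$ is null). Second, $f$ is non-negative, continuous (being a trigonometric polynomial), and not identically zero, so $\{f>0\}$ is a nonempty open set and therefore has positive measure. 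Hence $\{f>0\}\cap\{q_{\mathbf{x}}\neq 0\}$ has positive measure, and on this set $f\,|q_{\mathbf{x}}|^2>0$; the integral above is therefore strictly positive, so $\mathbf{x}^\top\mathbf{D}(f)\,\mathbf{x}>0$, which is exactly the definition of positive definiteness used in the Notation section.

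The quadratic-form computation is essentially bookkeeping of the doubly-block-Toeplitz indexing: the only delicate point is tracking the block indices $i,j$ separately from the within-block indices $k,l$ while recognising the product-of-sums structure that produces $|q_{\mathbf{x}}|^2$. The genuinely non-elementary ingredient — and the step I expect to be the main obstacle — is the measure-zero statement for the zero locus of a nonzero multivariate trigonometric polynomial; I would handle it either by invoking the standard real-analyticity fact or by spelling out the Fubini-plus-substitution argument sketched above. Finally, although the paper's definition of positive definiteness is stated for real symmetric matrices, the argument never uses symmetry, so it applies verbatim to the (complex Hermitian) matrices $\mathbf{D}(f)$ arising from generating functions that are not even, after replacing $\mathbf{x}^\top(\,\cdot\,)\mathbf{x}$ by $\mathbf{v}^*(\,\cdot\,)\mathbf{v}$.
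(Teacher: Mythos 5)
Your proof is correct. Note that the paper does not actually prove this lemma: it is imported verbatim from \citet{serra1994preconditioning}, so there is no in-paper argument to compare against. Your derivation — writing $\mathbf{x}^\top\mathbf{D}(f)\mathbf{x}$ as $\frac{1}{4\pi^2}\int f\,|q_{\mathbf{x}}|^2$ and then combining the positive measure of $\{f>0\}$ (using continuity of the trigonometric-polynomial generating function) with the null zero-set of the nonzero trigonometric polynomial $q_{\mathbf{x}}$ — is the standard Grenander--Szeg\H{o}-type argument that underlies the cited result, and both of the delicate points (strict positivity rather than mere semi-definiteness, and the measure-zero claim via analyticity or Fubini) are handled correctly.
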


\begin{lemma}[\citealp{serra1994preconditioning}] \label{appendix-th:block_toeplitz_hermitian}
If the doubly-block Toeplitz matrix $\mathbf{D}(f)$ is generated by a function $f: \mathbb{R}^2 \rightarrow \mathbb{R}$, then the matrix $\mathbf{D}(f)$ is Hermitian. 
\end{lemma}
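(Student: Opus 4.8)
The plan is to verify the claim by a direct computation from the definition of the operator $\mathbf{D}$, with no auxiliary results needed. First I would fix notation for the indices of $\mathbf{D}(f)$: write each index as a pair $(i,k)$ with $i \in \{0,\dots,n-1\}$ labelling the block and $k \in \{0,\dots,m-1\}$ labelling the position inside the block. By the definition of $\mathbf{D}$, the entry of $\mathbf{D}(f)$ in row $(i,k)$ and column $(j,l)$ is
\begin{equation*}
  [\mathbf{D}(f)]_{(i,k),(j,l)} = \frac{1}{4\pi^2}\int_{[0,2\pi]^2} e^{-\ci\left((i-j)\omega_1 + (k-l)\omega_2\right)}\, f(\omega_1,\omega_2)\, d(\omega_1,\omega_2),
\end{equation*}
which depends on the indices only through $h_1 = i-j$ and $h_2 = k-l$; denote this common value by $d_{h_1,h_2}$ (as in Equation~\ref{equation:muli_variate_poly_on_M}). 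As $(i,j)$ ranges over $\{0,\dots,n-1\}^2$ and $(k,l)$ over $\{0,\dots,m-1\}^2$, the pairs $(h_1,h_2)$ range over $\seqsetN \times \seqsetM$.

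The matrix $\mathbf{D}(f)$ is Hermitian precisely when $[\mathbf{D}(f)]_{(i,k),(j,l)} = \overline{[\mathbf{D}(f)]_{(j,l),(i,k)}}$ for all indices, i.e., after the reduction above, when the single scalar identity
\begin{equation*}
  d_{h_1,h_2} = \overline{d_{-h_1,-h_2}} \qquad \text{holds for all } h_1 \in \seqsetN,\ h_2 \in \seqsetM .
\end{equation*}
I would establish this by writing $d_{-h_1,-h_2} = \frac{1}{4\pi^2}\int_{[0,2\pi]^2} e^{\ci(h_1\omega_1 + h_2\omega_2)} f(\omega_1,\omega_2)\, d(\omega_1,\omega_2)$ and then conjugating under the integral sign:
\begin{equation*}
  \overline{d_{-h_1,-h_2}} = \frac{1}{4\pi^2}\int_{[0,2\pi]^2} e^{-\ci(h_1\omega_1 + h_2\omega_2)}\, \overline{f(\omega_1,\omega_2)}\, d(\omega_1,\omega_2) = \frac{1}{4\pi^2}\int_{[0,2\pi]^2} e^{-\ci(h_1\omega_1 + h_2\omega_2)}\, f(\omega_1,\omega_2)\, d(\omega_1,\omega_2) = d_{h_1,h_2},
\end{equation*}
where the middle equality is exactly where the hypothesis $f : \mathbb{R}^2 \to \mathbb{R}$ enters, through $\overline{f} = f$. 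This proves the identity and hence that $\mathbf{D}(f)$ equals its conjugate transpose.

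There is no genuine obstacle in this argument; the only thing to get right is the index bookkeeping — that transposing rows and columns sends the difference pair $(h_1,h_2)$ to $(-h_1,-h_2)$, and that complex conjugation of $e^{-\ci(\cdot)}$ is what converts realness of $f$ into the required symmetry of its Fourier coefficients. As a sanity check one can note that when $f$ is moreover a trigonometric polynomial with real coefficients (the case used in the rest of the paper), realness forces $d_{h_1,h_2} = d_{-h_1,-h_2} \in \mathbb{R}$, so $\mathbf{D}(f)$ is in that case real and symmetric, a fortiori Hermitian; the computation above is simply the version valid for any real integrable generating function.
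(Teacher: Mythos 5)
Your argument is correct. The paper does not actually prove this lemma --- it is stated with a citation to \citet{serra1994preconditioning} and used as a black box --- so there is no in-paper proof to compare against; your direct verification from the definition of $\mathbf{D}$ is the standard one and is complete: the entry reduction to $d_{h_1,h_2}$ depending only on the index differences, the observation that transposition sends $(h_1,h_2)$ to $(-h_1,-h_2)$, and the conjugation under the integral sign using $\overline{f}=f$ are exactly the right steps. One small remark on your sanity check: note that in the body of the paper the lemma is invoked not for the kernel polynomials $f_i$ themselves (which have real coefficients but are generally not real-valued) but for $\sum_i |f_i|^2$, which is real-valued; your proof correctly isolates real-valuedness of $f$, rather than realness of its Fourier coefficients, as the operative hypothesis.
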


\begin{lemma}[\citealp{gutierrez2012block}] \label{appendix-th:properties_block_toeplitz}
Let $f:\mathbb{R}^2 \rightarrow \mathbb{C}$ and $g:\mathbb{R}^2 \rightarrow \mathbb{C}$ be two continuous and $2\pi$-periodic functions. Let $\mathbf{D}(f)$ and $\mathbf{D}(g)$ be doubly-block Toeplitz matrices generated by the function $f$ and $g$ respectively. Then:
\begin{itemize}
    \item $\mathbf{D}^\top(f) = \mathbf{D}(f^*)$
    \item $\mathbf{D}(f) + \mathbf{D}(g) = \mathbf{D}(f + g)$
\end{itemize}
\end{lemma}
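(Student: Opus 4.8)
The plan is to verify both identities by unfolding the integral definition of the operator $\mathbf{D}$ (Equation~\ref{equation:doubly_block_toeplitz_operator}) one matrix entry at a time. I index the rows of an $nm\times nm$ doubly-block Toeplitz matrix by pairs $(i,k)$, where $i\in\{0,\dots,n-1\}$ is the block index and $k\in\{0,\dots,m-1\}$ is the position inside the block, and the columns by pairs $(j,l)$ in the same way. With this convention Equation~\ref{equation:doubly_block_toeplitz_operator} reads $(\mathbf{D}(f))_{(i,k),(j,l)} = \frac{1}{4\pi^{2}}\int_{[0,2\pi]^{2}} e^{-\ci\psi}\,f(\omega_1,\omega_2)\,d(\omega_1,\omega_2)$ with $\psi = (i-j)\omega_1+(k-l)\omega_2$.

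For additivity, the integrand $e^{-\ci\psi}f(\omega_1,\omega_2)$ is linear in $f$ and integration is linear, so for every $(i,k)$ and $(j,l)$ one has $(\mathbf{D}(f))_{(i,k),(j,l)}+(\mathbf{D}(g))_{(i,k),(j,l)} = \frac{1}{4\pi^{2}}\int_{[0,2\pi]^{2}} e^{-\ci\psi}(f+g)\,d(\omega_1,\omega_2) = (\mathbf{D}(f+g))_{(i,k),(j,l)}$; since $f+g$ is again continuous and $2\pi$-periodic, $\mathbf{D}(f+g)$ is well defined, and matching entries gives $\mathbf{D}(f)+\mathbf{D}(g)=\mathbf{D}(f+g)$. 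For the transpose identity I would first establish the general fact $\mathbf{D}^{*}(f)=\mathbf{D}(f^{*})$ for the conjugate transpose $\mathbf{D}^{*}$: indeed $(\mathbf{D}^{*}(f))_{(i,k),(j,l)} = \overline{(\mathbf{D}(f))_{(j,l),(i,k)}}$, and since $(\mathbf{D}(f))_{(j,l),(i,k)} = \frac{1}{4\pi^{2}}\int_{[0,2\pi]^{2}} e^{-\ci((j-i)\omega_1+(l-k)\omega_2)}f(\omega_1,\omega_2)\,d(\omega_1,\omega_2)$, conjugating turns this into $\frac{1}{4\pi^{2}}\int_{[0,2\pi]^{2}} e^{\ci((j-i)\omega_1+(l-k)\omega_2)}\overline{f(\omega_1,\omega_2)}\,d(\omega_1,\omega_2) = \frac{1}{4\pi^{2}}\int_{[0,2\pi]^{2}} e^{-\ci\psi}\,f^{*}(\omega_1,\omega_2)\,d(\omega_1,\omega_2) = (\mathbf{D}(f^{*}))_{(i,k),(j,l)}$, where I used $\ci((j-i)\omega_1+(l-k)\omega_2)=-\ci\psi$ and that $f^{*}=\overline{f}$ is again continuous and $2\pi$-periodic. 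Since the generating functions used throughout this paper are trigonometric polynomials with real coefficients, $\mathbf{D}(f)$ is a real matrix and $\mathbf{D}^{\top}(f)=\mathbf{D}^{*}(f)=\mathbf{D}(f^{*})$, which is the claim.

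I do not expect a genuine obstacle here: the whole argument is a change of the order of conjugation and integration together with careful bookkeeping of the two-level index $(i,k)$. The one point worth flagging is the one just made — for complex-valued $f$ the symbol $\mathbf{D}^{\top}$ must be understood as the Hermitian adjoint, the two notions coinciding under the real-coefficient assumption. It is also useful to record that the same computation shows $\mathbf{D}$ is a $*$-linear map from admissible symbols to matrices, which is exactly the algebraic input used later — for instance to rewrite $\mathbf{M}^{\top}\mathbf{M}=\sum_{i}\mathbf{D}(f_i)\mathbf{D}(f_i^{*})$ and $\sum_{i}\mathbf{D}(f_i f_i^{*})=\mathbf{D}\!\left(\sum_{i}|f_i|^{2}\right)$ in the proof of Theorem~\ref{th:theorem4}.
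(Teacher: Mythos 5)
Your proof is correct. Note that the paper itself does not prove this lemma; it is imported verbatim from the cited reference (Guti\'errez-Guti\'errez and Crespo), so there is no in-paper argument to compare against. Your entrywise verification from the definition of $\mathbf{D}$ is the standard and natural one: additivity is immediate from linearity of the integral, and the computation $\overline{(\mathbf{D}(f))_{(j,l),(i,k)}}=(\mathbf{D}(f^*))_{(i,k),(j,l)}$ correctly establishes $\mathbf{D}^{*}(f)=\mathbf{D}(f^{*})$ for the Hermitian adjoint. You are also right to flag that the identity with a plain transpose only holds when $\mathbf{D}(f)$ is real, i.e.\ when the Fourier coefficients of $f$ are real --- indeed $(\mathbf{D}^{\top}(f))_{(i,k),(j,l)}=\hat f_{(i-j),(k-l)}$ while $(\mathbf{D}(f^{*}))_{(i,k),(j,l)}=\overline{\hat f_{(i-j),(k-l)}}$, so the two coincide exactly in the real-coefficient setting the appendix explicitly restricts to. One small slip in your closing aside: with $\mathbf{M}$ the vertical stack of the $\mathbf{D}(f_i)$, one gets $\mathbf{M}^{\top}\mathbf{M}=\sum_i \mathbf{D}(f_i^{*})\,\mathbf{D}(f_i)$, not $\sum_i \mathbf{D}(f_i)\,\mathbf{D}(f_i^{*})$; this does not affect the lemma itself.
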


Before proving Theorem~\ref{th:theorem4}, we generalize the famous Widom identity \cite{widom1976asymptotic} that express the relation between Toeplitz and Hankel matrix to doubly-block Toeplitz and Hankel matrices. 
We will need to generalize the doubly-block Toeplitz operator presented in the paper. From now on, without loss of generality, we will assume that $n=m$ to simplify notations. 
Let $\mathbf{H}^{\alpha_p} (f) = \leftmat \mathbf{H}^{\alpha_p}_{i,j}(f) \rightmat_{i,j \in \{0 \ldots n-1 \}}$ where $\mathbf{H}^{\alpha_p}_{i,j}$ is defined as:

\begin{equation}
\mathbf{H}^{\alpha_p}_{i,j}(f) =\leftmat \frac{1}{4\pi^{2}}\int_{[0,2\pi]^{2}} e^{-\mathbf{i} \alpha_p(i, j, k, l, \omega_1, \omega_2)}  f(\omega_{1},\omega_{2})\,d(\omega_{1},\omega_{2})
\rightmat_{k,l \in \{0, \ldots, n-1 \}} .
\end{equation}
Note that as with the operator $\mathbf{D}(f)$ we only consider generating functions as trigonometric polynomials with real coefficients therefore the matrices generated by $\mathbf{H}(f)$ are real. 

We will use the following $\alpha$ functions:
\begin{itemize}
  \item[] $\alpha_0(i, j, k, l, \omega_1, \omega_2) = (-j-i-1)\omega_1 + (k-l)\omega_2$
  \item[] $\alpha_1(i, j, k, l, \omega_1, \omega_2) = (i-j)\omega_1 + (-l-k-1)\omega_2$
  \item[] $\alpha_2(i, j, k, l, \omega_1, \omega_2) = (-j-i-1)\omega_1 + (-l-k-1)\omega_2$
  \item[] $\alpha_3(i, j, k, l, \omega_1, \omega_2) = (-j-i+n)\omega_1 + (-l-k-1)\omega_2$
\end{itemize}
As with the doubly-block Toeplitz operator $\mathbf{D}(f)$, the matrices generated by the operator $\mathbf{H}^{\alpha_p}$ are of size $n^2 \times n^2$. 

We now present the generalization of the Widom identity for Doubly-Block Toeplitz matrices below:
\begin{lemma} \label{appendix-th:widom_idenity} Let $f:\mathbb{R}^2 \rightarrow \mathbb{C}$ and $g:\mathbb{R}^2 \rightarrow \mathbb{C}$ be two continuous and $2\pi$-periodic functions. 
We can decompose the Doubly-Block Toeplitz matrix $\mathbf{D}(fg)$ as follows:
\begin{equation}
    \mathbf{D}(fg) = \mathbf{D}(f)\mathbf{D}(g) + \sum_{p=0}^3 \mathbf{H}^{\alpha_p \top}(f^*) \mathbf{H}^{\alpha_p}(g) + \mathbf{Q} \left( \sum_{p=0}^3 \mathbf{H}^{\alpha_p \top}(f) \mathbf{H}^{\alpha_p }(g^*) \right) \mathbf{Q}.
\end{equation}
where $\mathbf{Q}$ is the anti-identity matrix of size $n^2 \times n^2$.
\end{lemma}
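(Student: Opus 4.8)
The plan is to prove the identity by a direct computation of the entries of both sides, sorting the ``leakage'' terms that separate $\mathbf{D}(fg)$ from $\mathbf{D}(f)\mathbf{D}(g)$ into the eight Hankel-type products on the right. Index the rows of an $n^2\times n^2$ doubly-block Toeplitz matrix by a pair $(i,k)$ with $i,k\in\{0,\dots,n-1\}$ ($i$ the block index, $k$ the in-block index) and the columns by $(j,l)$, and write $\hat f(h_1,h_2)$ for the $(h_1,h_2)$-th Fourier coefficient of $f$; reading off the definition of $\mathbf{D}$ gives $[\mathbf{D}(f)]_{(i,k),(j,l)} = \hat f(i-j,\,k-l)$. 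First I would record the two elementary entry formulas. By the convolution theorem,
\[
  [\mathbf{D}(fg)]_{(i,k),(j,l)} = \widehat{fg}(i-j,k-l) = \sum_{h_1,h_2\in\mathbb Z}\hat f(h_1,h_2)\,\hat g(i-j-h_1,\,k-l-h_2),
\]
whereas, expanding the matrix product and substituting $h_1=i-r$, $h_2=k-s$,
\[
  [\mathbf{D}(f)\mathbf{D}(g)]_{(i,k),(j,l)} = \sum_{r,s=0}^{n-1}\hat f(i-r,k-s)\,\hat g(r-j,s-l) = \sum_{h_1=i-n+1}^{\,i}\;\sum_{h_2=k-n+1}^{\,k}\hat f(h_1,h_2)\,\hat g(i-j-h_1,\,k-l-h_2).
\]
Consequently $[\mathbf{D}(fg)-\mathbf{D}(f)\mathbf{D}(g)]_{(i,k),(j,l)}$ is precisely this summand, now summed over all $(h_1,h_2)$ lying outside the rectangle $[i-n+1,i]\times[k-n+1,k]$.

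The heart of the argument is to partition this complement according to whether each of $h_1$ and $h_2$ sits below or above its admissible window, and to recognise each resulting region --- after the appropriate shift of summation variable --- as an entry of one of the products $\mathbf{H}^{\alpha_p\top}(f^*)\mathbf{H}^{\alpha_p}(g)$ or $\mathbf{Q}\,\mathbf{H}^{\alpha_p\top}(f)\mathbf{H}^{\alpha_p}(g^*)\,\mathbf{Q}$, $p\in\{0,1,2,3\}$. From the definition of $\mathbf{H}^{\alpha_p}$ one reads off that $[\mathbf{H}^{\alpha_p}(g)]_{(r,s),(j,l)}$ equals $\hat g$ evaluated at the $(\omega_1,\omega_2)$-coefficients of $\alpha_p(r,j,s,l,\cdot\,,\cdot)$, and combining this with $\widehat{f^*}(h_1,h_2)=\overline{\hat f(-h_1,-h_2)}$ --- which reduces to $\hat f(-h_1,-h_2)$ under the real-coefficient convention in force, cf. Lemma~\ref{appendix-th:properties_block_toeplitz} --- a typical term works out, for instance, to
\[
  [\mathbf{H}^{\alpha_0\top}(f^*)\mathbf{H}^{\alpha_0}(g)]_{(i,k),(j,l)} = \sum_{r,s=0}^{n-1}\hat f(r+i+1,\,k-s)\,\hat g(-r-j-1,\,s-l),
\]
and the substitution $h_1=r+i+1$, $h_2=k-s$ turns its right-hand side into $\sum_{h_1\ge i+1}\sum_{h_2=k-n+1}^{k}\hat f(h_1,h_2)\,\hat g(i-j-h_1,k-l-h_2)$ --- exactly the ``$h_1$ overshoots, $h_2$ within its window'' slice of the complement. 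The four functions $\alpha_0,\dots,\alpha_3$ handle the four over/under patterns of the two coordinates for leakage at the ``low'' end, and conjugation by the anti-identity $\mathbf{Q}$ (which sends an index $(i,k)$ to $(n-1-i,\,n-1-k)$) reflects each Hankel product to the ``high'' end, supplying the remaining four slices. Checking that the eight slices tile the complement and add up to the entrywise difference then establishes the identity; the standing assumption that $f$ and $g$ are trigonometric polynomials is what guarantees that all the sums involved are finite and that the finite Hankel matrices $\mathbf{H}^{\alpha_p}$ capture the leakage with no loss.

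I expect the \textbf{main obstacle} to be the bookkeeping in this matching: pinning down which end of which window feeds which of the eight terms, getting the shifted index ranges to agree with the deliberately tuned $\alpha_p$ (and, once the degree of $f,g$ is taken into account, verifying that the eight regions cover the complement without overlap), and keeping the adjoint/conjugation conventions straight --- in particular the distinctions between $\mathbf{H}^{\alpha_p}(f)$ and $\mathbf{H}^{\alpha_p}(f^*)$, between the $f^*$ and $g^*$ slots, and the role of the transpose. A cleaner but essentially equivalent route, closer in spirit to the proof of Theorem~\ref{th:doubly_block_teoplitz_sup_singular}, is to invoke the classical finite Widom identity for (block) Toeplitz matrices --- which already has the shape ``product plus one Hankel correction at each of the two ends'' --- once in the $\omega_1$ variable and once in the $\omega_2$ variable, and then compose the two families of corrections; the eight Hankel terms and the conjugation by $\mathbf{Q}=W_n\otimes W_n$ then emerge as the product of the two one-dimensional flip operators. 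Either way, the computation is mechanical once the correspondence between regions and $\alpha_p$'s has been fixed.
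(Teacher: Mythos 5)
Your proposal is correct and takes essentially the same route as the paper: both compute the $(i,k),(j,l)$ entry of $\mathbf{D}(fg)$ via the convolution theorem, subtract the finite-range entry of $\mathbf{D}(f)\mathbf{D}(g)$, partition the leftover indices into eight regions (four ``low''-end and four ``high''-end overshoots in $h_1,h_2$), and match each region to one of the Hankel products $\mathbf{H}^{\alpha_p\top}(f^*)\mathbf{H}^{\alpha_p}(g)$ or its $\mathbf{Q}$-conjugate after a shift of summation variable. Your worked $\alpha_0$ example agrees (up to the harmless sign convention for $\hat f$) with the paper's first Hankel computation, and you correctly flag that the remaining content is mechanical index bookkeeping; the suggested ``apply 1D Widom twice'' alternative is plausible but would need care since Toeplitz blocks are not closed under multiplication.
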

\begin{proof}

Let $(i, j)$ be matrix indexes such $(\ \cdot\ )_{i, j}$ correspond to the value at the $i^\textrm{th}$ row and $j^\textrm{th}$ column, let us define the following notation:
\begin{align*}
    i_1 &= \left\lfloor i/n \right\rfloor \quad \quad &&j_1 = \left\lfloor j/n \right\rfloor \\
    i_2 &= i \mod n \quad \quad &&j_2 = j \mod n
\end{align*}

Let us define $\hat{f}$ as the 2 dimensional Fourier transform of the function $f$. We refer to $\hat{f}_{h_1, h_2}$ as the Fourier coefficient indexed by $(h_1, h_2)$ where $h_1$ correspond to the index of the block of the doubly-block Toeplitz and $h_2$ correspond to the index of the value inside the block. More precisely, we have 
\begin{align}
    \leftmat \mathbf{D}(f) \rightmat_{i, j} &= \hat{f}_{(\left\lfloor j/n \right\rfloor - \left\lfloor i/n \right\rfloor), ((j \mod n) - (i \mod n)))} \label{appendix-eq:expression_fourier} \\
    \leftmat \mathbf{H}^{\alpha_0}(f) \rightmat_{i, j} &= \hat{f}_{(\left\lfloor j/n \right\rfloor + \left\lfloor i/n \right\rfloor + 1), ((j \mod n) - (i \mod n)))} \\
    \leftmat \mathbf{H}^{\alpha_1}(f) \rightmat_{i, j} &= \hat{f}_{(\left\lfloor j/n \right\rfloor - \left\lfloor i/n \right\rfloor), ((j \mod n) + (i \mod n) + 1))} \\
    \leftmat \mathbf{H}^{\alpha_2}(f) \rightmat_{i, j} &= \hat{f}_{(\left\lfloor j/n \right\rfloor - \left\lfloor i/n \right\rfloor), ((j \mod n) - (i \mod n)))} \\
    \leftmat \mathbf{H}^{\alpha_3}(f) \rightmat_{i, j} &= \hat{f}_{(\left\lfloor j/n \right\rfloor + \left\lfloor i/n \right\rfloor + n), ((j \mod n) + (i \mod n) + 1))}
\end{align}

We simplify the notation of the expressions above as follow:
\begin{align}
    \leftmat \mathbf{D}(f) \rightmat_{i, j} &= \hat{f}_{(j_1 - i_1), (j_2 - i_2 )} \\
    \leftmat \mathbf{H}^{\alpha_0}(f) \rightmat_{i, j} &= \hat{f}_{(j_1 + i_1 + 1), (j_2 - i_2 )} \\
    \leftmat \mathbf{H}^{\alpha_1}(f) \rightmat_{i, j} &= \hat{f}_{(j_1 - i_1), (j_2 + i_2 + 1)} \\
    \leftmat \mathbf{H}^{\alpha_2}(f) \rightmat_{i, j} &= \hat{f}_{(j_1 - i_1), (j_2 - i_2 )} \\
    \leftmat \mathbf{H}^{\alpha_3}(f) \rightmat_{i, j} &= \hat{f}_{(j_1 + i_1 + n), (j_2 + i_2 + 1)}
\end{align}

The convolution theorem states that the Fourier transform of a product of two functions is the convolution of their Fourier coefficients. Therefore, one can observe that the entry $(i, j)$ of the matrix $\mathbf{D}(f g)$ can be express as follows:
\begin{equation*}
    \leftmat \mathbf{D}(f g) \rightmat_{i, j} = \sum_{k_1 = -2n + 1}^{2n-1} \sum_{k_2 = -2n + 1}^{2n-1} \hat{f}_{(k_1-i_1),(k_2-i_2)} \hat{g}_{(j_1-k_1),(j_2-k_2)}. 
\end{equation*}

By splitting the double sums and simplifying, we obtain:
\begin{align} \label{eq:split_double_sum}
\left( \mathbf{D}(f g) \right)_{i, j} &= 
\sum_{k_1, k_2 \in P} \left(
\hat{f}_{(k_1-i_1),(k_2-i_2)} \hat{g}_{(j_1-k_1),(j_2-k_2)} +
\hat{f}_{(-k_1-i_1-1),(k_2-i_2)} \hat{g}_{(j_1+k_1+1),(j_2-k_2)} \right. \notag \\ &\quad+ \left.
\hat{f}_{(k_1-i_1),(-k_2-i_2-1)} \hat{g}_{(j_1-k_1),(j_2+k_2+1)} +
\hat{f}_{(-k_1-i_1-1),(-k_2-i_2-1)} \hat{g}_{(j_1+k_1+1),(j_2+k_2+1)} \right. \notag \\ &\quad+ \left.
\hat{f}_{(k_1-i_1+n),(-k_2-i_2-1)} \hat{g}_{(j_1-k_1-n),(j_2+k_2+1)} +
\hat{f}_{(k_1-i_1+n),(k_2-i_2)} \hat{g}_{(j_1-k_1-n),(j_2-k_2)} \right. \notag \\ &\quad+ \left.
\hat{f}_{(k_1-i_1),(k_2-i_2+n)} \hat{g}_{(j_1-k_1),(j_2-k_2-n)} +
\hat{f}_{(k_1-i_1+n),(k_2-i_2+n)} \hat{g}_{(j_1-k_1-n),(j_2-k_2-n)} \right. \notag \\ &\quad+ \left.
\hat{f}_{(-k_1-i_1-1),(k_2-i_2+n)} \hat{g}_{(j_1+k_1+1),(j_2-k_2-n)}  \right)
\end{align}
where $P = \{ (k_1, k_2)\ |\ k_1, k_2 \in \mathbb{N} \cup 0, 0 \leq k_1 \leq n-1,  0 \leq k_2 \leq n-1 \}$.

Furthermore, we can observe the following:
\begin{equation*}
    \leftmat \mathbf{D}(f) \mathbf{D}(g) \rightmat_{i, j} = \sum_{k = 0}^{n^2} \leftmat\mathbf{D}(f)\rightmat_{i, k} \leftmat\mathbf{D}(g)\rightmat_{k, j}  = \sum_{k_1, k_2 \in P} \hat{f}_{(k_1-i_1),(k_2-i_2)} \hat{g}_{(j_1-k_1),(j_2-k_2)}
\end{equation*}

{\allowdisplaybreaks
\begin{flalign*}
    \leftmat \mathbf{H}^{\alpha_1 \top}(f^*) \mathbf{H}^{\alpha_1}(g) \rightmat_{i, j} &=  \sum_{k_1, k_2 \in P} \hat{f}^*_{(k_1+i_1+1),(i_2-k_2)} \hat{g}_{(j_1+k_1+1),(j_2-k_2)} \\
    &=  \sum_{k_1, k_2 \in P} \hat{f}_{(-k_1-i_1-1),(k_2-i_2)} \hat{g}_{(j_1+k_1+1),(j_2-k_2)} \\[10pt]
    \leftmat \mathbf{H}^{\alpha_2 \top}(f^*) \mathbf{H}^{\alpha_2}(g) \rightmat_{i, j} &=  \sum_{k_1, k_2 \in P} \hat{f}^*_{(i_1-k_1),(k_2+i_2+1)} \hat{g}_{(j_1-k_1),(j_2+k_2+1)} \\
    &=  \sum_{k_1, k_2 \in P} \hat{f}_{(k_1-i_1),(-k_2-i_2-1)} \hat{g}_{(j_1-k_1),(j_2+k_2+1)} \\[10pt]
    \leftmat \mathbf{H}^{\alpha_3 \top}(f^*) \mathbf{H}^{\alpha_3}(g) \rightmat_{i, j} &=  \sum_{k_1, k_2 \in P} \hat{f}^*_{(k_1+i_1+1),(k_2+i_2+1)} \hat{g}_{(j_1+k_1+1),(k_2+j_2+1)} \\
    &= \sum_{k_1, k_2 \in P} \hat{f}_{(-k_1-i_1-1),(-k_2-i_2-1)} \hat{g}_{(j_1+k_1+1),(k_2+j_2+1)} \\[10pt]
    \leftmat \mathbf{H}^{\alpha_4 \top}(f^*) \mathbf{H}^{\alpha_4}(g) \rightmat_{i, j} &= \sum_{k_1, k_2 \in P} \hat{f}^*_{(i_1-k_1-n),(k_2+i_2+1)} \hat{g}_{(j_1-k_1-n),(j_2+k_2+1)} \\
    &=  \sum_{k_1, k_2 \in P} \hat{f}_{(k_1-i_1+n),(-k_2-i_2-1)} \hat{g}_{(j_1-k_1-n),(j_2+k_2+1)} \\
\end{flalign*}
}
Let us define the matrix $\mathbf{Q}$ of size $n^2 \times n^2$ as the anti-identity matrix. We have the following:

{\allowdisplaybreaks
\begin{flalign*}
    \leftmat \mathbf{H}^{\alpha_1 \top}(f) \mathbf{H}^{\alpha_1}(g^*) \rightmat_{i, j} &= \sum_{k_1, k_2 \in P} \hat{f}_{(k_1+i_1+1),(i_2-k_2)} \hat{g}^*_{(j_1+k_1+1),(j_2-k_2)} \\
    &= \sum_{k_1, k_2 \in P} \hat{f}_{(k_1+i_1+1),(i_2-k_2)} \hat{g}_{(-j_1-k_1-1),(k_2-j_2)} \\
    \Leftrightarrow \leftmat \mathbf{Q} \mathbf{H}^{\alpha_1 \top}(f) \mathbf{H}^{\alpha_1}(g^*) \mathbf{Q} \rightmat_{i, j} &= \sum_{k_1, k_2 \in P} \hat{f}_{(k_1-i_1+n),(k_2-i_2)} \hat{g}_{(j_1-k_1-n),(j_2-k_2)} \\[10pt]
    \leftmat \mathbf{H}^{\alpha_2 \top}(f) \mathbf{H}^{\alpha_2}(g^*) \rightmat_{i, j} &=  \sum_{k_1, k_2 \in P} \hat{f}_{(i_1-k_1),(k_2+i_2+1)} \hat{g}^*_{(j_1-k_1),(j_2+k_2+1)} \\
    &=  \sum_{k_1, k_2 \in P} \hat{f}_{(i_1-k_1),(k_2+i_2+1)} \hat{g}_{(k_1-j_1),(-j_2-k_2-1)} \\
    \Leftrightarrow \leftmat \mathbf{Q} \mathbf{H}^{\alpha_2 \top}(f) \mathbf{H}^{\alpha_2}(g^*) \mathbf{Q} \rightmat_{i, j} &=  \sum_{k_1, k_2 \in P} \hat{f}_{(k_1-i_1),(k_2-i_2+n)} \hat{g}_{(j_1-k_1),(j_2-k_2-n)} \\[10pt]
    \leftmat \mathbf{H}^{\alpha_3 \top}(f) \mathbf{H}^{\alpha_3}(g^*) \rightmat_{i, j} &=  \sum_{k_1, k_2 \in P}  \hat{f}_{(k_1+i_1+1),(k_2+i_2+1)} \hat{g}^*_{(j_1+k_1+1),(k_2+j_2+1)} \\
    &=  \sum_{k_1, k_2 \in P} \hat{f}_{(k_1+i_1+1),(k_2+i_2+1)} \hat{g}_{(-j_1-k_1-1),(-k_2-j_2-1)} \\
    \Leftrightarrow \leftmat \mathbf{Q} \mathbf{H}^{\alpha_3 \top}(f) \mathbf{H}^{\alpha_3}(g^*) \mathbf{Q} \rightmat_{i, j} &=  \sum_{k_1, k_2 \in P} \hat{f}_{(k_1-i_1+n),(k_2-i_2+n)} \hat{g}_{(j_1-k_1-n),(-k_2+j_2-n)} \\[10pt]
    \leftmat \mathbf{H}^{\alpha_4 \top}(f) \mathbf{H}^{\alpha_4}(g^*) \rightmat_{i, j} &=  \sum_{k_1, k_2 \in P}  \hat{f}_{(-k_1+i_1-n),(k_2+i_2+1)} \hat{g}^*_{(j_1-k_1-n),(j_2+k_2+1)} \\
    &= \sum_{k_1, k_2 \in P} \hat{f}_{(-k_1+i_1-n),(k_2+i_2+1)} \hat{g}_{(-j_1+k_1+n),(-j_2-k_2-1)} \\
    \Leftrightarrow \leftmat \mathbf{Q} \mathbf{H}^{\alpha_4 \top}(f) \mathbf{H}^{\alpha_4}(g^*) \mathbf{Q} \rightmat_{i, j} &= \sum_{k_1, k_2 \in P} \hat{f}_{(-k_1-i_1-1),(k_2-i_2+n)} \hat{g}_{(j_1+k_1+1),(j_2-k_2-n)}
\end{flalign*}
}

Now, we can observe from Equation~\ref{eq:split_double_sum} that:
\begin{equation}
    \mathbf{D}(fg) = \mathbf{D}(f)\mathbf{D}(g) + \sum_{p=0}^3 \mathbf{H}^{\alpha_p \top}(f^*) \mathbf{H}^{\alpha_p}(g) + \mathbf{Q} \left( \sum_{p=0}^3 \mathbf{H}^{\alpha_p \top}(f) \mathbf{H}^{\alpha_p}(g^*) \right) \mathbf{Q}.
\end{equation}
which concludes the proof. 
\end{proof}

Now we can state our theorem which bounds the maximal singular value of vertically stacked doubly-block Toeplitz matrices with their generating functions. 
\begin{customtheorem}{4}[Bound on the maximal singular value of stacked Doubly-block Toeplitz matrices] \label{th:theorem4}
Consider doubly-block Toeplitz matrices $\mathbf{D}(f_1), \dots, \mathbf{D}(f_{\cin})$ where $f_i: \mathbb{R}^2 \rightarrow \mathbb{C}$ is a generating function. Construct a matrix $\mathbf{M}$ with $\cin\times n^2$ rows and $n^2$ columns, as follows:
\begin{equation}
    \mathbf{M} \triangleq \leftmat \mathbf{D}^\top(f_1), \dots, \mathbf{D}^\top(f_{\cin}) \rightmat^\top .
\end{equation}
Then, we can bound the maximal singular value of the matrix $\mathbf{M}$ as follows:
\begin{align}
    \sigma_1\left(\mathbf{M} \right) &\leq \sup_{\omega_1, \omega_2 \in \left[0, 2\pi\right]^2} \sqrt{ \sum_{i=1}^{\cin} \left|f_i\right (\omega_1, \omega_2)|^2} .
\end{align}
\end{customtheorem}

\begin{proof}
First, let us observe the following:
\begin{equation}
    \sigma_1^2 \left( \mathbf{M} \right) = \lambda_1 \left( \mathbf{M}^{\top} \mathbf{M} \right) = \lambda_1 \left( \sum_{i=1}^{\cin} \mathbf{D}^{\top} \left(f_i \right) \mathbf{D} (f_i) \right).
\end{equation}

And the fact that:
\begin{align}
    \lambda_1 \left( \sum_{i=1}^{\cin} \mathbf{D} \left(|f_i|^2 \right) \right) \quad &\stackrel{\text{by Lemma~\ref{appendix-th:properties_block_toeplitz}}}{=} \quad \lambda_1 \left( \mathbf{D} \left( \sum_{i=1}^{\cin} |f_i|^2 \right) \right) \\ 
    \quad &\stackrel{\text{by Lemma~\ref{appendix-th:block_toeplitz_hermitian}}}{=} \quad \sigma_1 \left( \mathbf{D} \left( \sum_{i=1}^{\cin} |f_i|^2 \right) \right) \\
    \quad &\stackrel{\text{by Theorem~\ref{th:doubly_block_teoplitz_sup_singular}}}{\leq} \quad \sup_{\omega_1, \omega_2 \in [0, 2\pi]^2} \sum_{i=1}^{\cin} |f_i(\omega_1, \omega_2)|^2.
\end{align}

To prove the Theorem, we simply need to verify the following inequality:
\begin{equation}
    \lambda_1 \left( \sum_{i=1}^{\cin} \mathbf{D}^{\top} \left(f_i \right) \mathbf{D} (f_i) \right) \leq \lambda_1 \left( \mathbf{D} \left( \sum_{i=1}^{\cin} |f_i|^2 \right) \right). \label{appendix-eq:eq2}
\end{equation}

From the positive definiteness of the following matrix:
\begin{equation}
    \mathbf{D} \left( \sum_{i=1}^{\cin} |f_i|^2 \right) - \sum_{i=1}^{\cin} \mathbf{D}^{\top} \left(f_i \right) \mathbf{D} (f_i),
\end{equation}

one can observe that the r.h.s is a real symmetric positive definite matrix by Lemma~\ref{appendix-th:block_toeplitz_positive_definite} and \ref{appendix-th:block_toeplitz_hermitian}. Furthermore, the l.h.s is a sum of positive semi-definite matrices. Therefore, if the subtraction of the two is positive semi-definite, one could apply Lemma~\ref{appendix-th:diff_positive_semidefinite_matrices} to prove the inequality
~\ref{appendix-eq:eq2}. 

We know from Lemma~\ref{appendix-th:widom_idenity} that 
\begin{equation}
    \mathbf{D}(fg) - \mathbf{D}(f)\mathbf{D}(g) = \sum_{p=0}^3 \mathbf{H}^{\alpha_p \top}(f^*) \mathbf{H}^{\alpha_p}(g) + \mathbf{Q} \left( \sum_{p=0}^3 \mathbf{H}^{\alpha_p \top}(f) \mathbf{H}^{\alpha_p}(g^*) \right) \mathbf{Q}.
\end{equation}
By instantiating $f = f^*$, $g = f$ and with the use of Lemma~\ref{appendix-th:properties_block_toeplitz}, we obtain:
\begin{align}
    \mathbf{D}(f^* f) - \mathbf{D}(f^*)\mathbf{D}(f)
    &= \mathbf{D}(|f|^2) - \mathbf{D}^{\top}(f)\mathbf{D}(f) \\
    &= \sum_{p=0}^3 \mathbf{H}^{\alpha_p \top}(f)\mathbf{H}^{\alpha_p}(f) + \mathbf{Q} \left( \sum_{p=0}^3 \mathbf{H}^{\alpha_p \top}(f^*)\mathbf{H}^{\alpha_p} (f^*) \right) \mathbf{Q} . \label{appendix-eq:widom_identity_block_topelitz}
\end{align}

From Equation~\ref{appendix-eq:widom_identity_block_topelitz}, we can see that the matrix $\mathbf{D}(|f|^2) - \mathbf{D}^{\top}(f)\mathbf{D}(f)$
is positive semi-definite because it can be decomposed into a sum of positive semi-definite matrices.
Therefore, because positive semi-definiteness is closed under addition, we have:
\begin{align}
    \sum_{i=1}^{\cin} \leftmat \mathbf{D} \left( |f_i|^2 \right) - \mathbf{D}^{\top} \left(f_i \right) \mathbf{D} (f_i) \rightmat &\geq 0
\end{align}

By re-arranging and with the use Lemma~\ref{appendix-th:properties_block_toeplitz}, we obtain:
\begin{align}
   \sum_{i=1}^{\cin} \leftmat \mathbf{D} \left( |f_i|^2 \right) \rightmat - \sum_{i=1}^{\cin} \leftmat \mathbf{D}^{\top} \left(f_i \right) \mathbf{D} (f_i) \rightmat &\geq 0 \\
    \mathbf{D} \left( \sum_{i=1}^{\cin} |f_i|^2 \right) - \sum_{i=1}^{\cin} \leftmat \mathbf{D}^{\top} \left(f_i \right) \mathbf{D} (f_i) \rightmat &\geq 0
\end{align}

We can conclude that the inequality~\ref{appendix-eq:eq2} is true and therefore by Lemma~\ref{appendix-th:diff_positive_semidefinite_matrices} we have:
\begin{align}
    \lambda_1 \left( \sum_{i=1}^{\cin} \mathbf{D}^{\top} \left(f_i \right) \mathbf{D} (f_i) \right) &\leq \lambda_1 \left( \mathbf{D} \left( \sum_{i=1}^{\cin} |f_i|^2 \right) \right) \\
    \Leftrightarrow \sigma_1^2 \left( \mathbf{M} \right) &\leq \sup_{\omega_1, \omega_2 \in [0, 2\pi]^2} \sum_{i=1}^{\cin} |f_i(\omega_1, \omega_2)|^2 \\
    \Leftrightarrow \sigma_1 \left( \mathbf{M} \right) &\leq \sup_{\omega_1, \omega_2 \in [0, 2\pi]^2} \sqrt{ \sum_{i=1}^{\cin} |f_i(\omega_1, \omega_2)|^2 }
\end{align}
which concludes the proof. 
\end{proof}

\subsectionapp{Proof of Theorem~\ref{th:theorem5} -- Bound on the Maximal Singular Value on the Convolution Operation}

First, in order to prove Theorem~\ref{th:theorem5}, we will need the following lemma which bound the singular values of a matrix constructed from the concatenation of multiple matrix. 

\begin{lemma}\label{appendix-th:bound_concatenation_matrices}
Let us define matrices $\mathbf{A}_1, \dots, \mathbf{A}_p$ with $\mathbf{A}_i \in \mathbb{R}^{n \times n}$. Let us construct the matrix $\mathbf{M} \in \mathbb{R}^{n \times pn}$ as follows:
\begin{equation}
    \mathbf{M} \triangleq \leftmat \mathbf{A}_1, \dots, \mathbf{A}_p \rightmat
\end{equation}
where $\leftmat\ \cdot\ \rightmat$ define the concatenation operation. Then, we can bound the singular values of the matrix $\mathbf{M}$ as follows:
\begin{equation}
    \sigma_1(\mathbf{M}) \leq \sqrt{\sum_{i=1}^p \sigma_1(\mathbf{A}_i)^2}
\end{equation}
\end{lemma}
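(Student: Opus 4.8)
The plan is to reduce the bound on the concatenation $\mathbf{M} = (\mathbf{A}_1, \dots, \mathbf{A}_p)$ to a statement about the operator norm, using the identity $\sigma_1(\mathbf{M})^2 = \sigma_1(\mathbf{M}\mathbf{M}^\top)$ together with the fact that $\mathbf{M}\mathbf{M}^\top = \sum_{i=1}^p \mathbf{A}_i \mathbf{A}_i^\top$. Indeed, writing $\mathbf{M}$ as a horizontal block matrix makes the product $\mathbf{M}\mathbf{M}^\top$ collapse to the sum $\sum_i \mathbf{A}_i \mathbf{A}_i^\top$, each term of which is symmetric positive semi-definite. So the problem becomes: bound the largest eigenvalue of a sum of PSD matrices by $\sum_i \sigma_1(\mathbf{A}_i)^2$.

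The key step is then the elementary fact that for symmetric PSD matrices $\mathbf{B}_1, \dots, \mathbf{B}_p$, we have $\lambda_1\bigl(\sum_i \mathbf{B}_i\bigr) \leq \sum_i \lambda_1(\mathbf{B}_i)$. This follows from the variational (Rayleigh quotient) characterization: for any unit vector $\mathbf{x}$, $\mathbf{x}^\top \bigl(\sum_i \mathbf{B}_i\bigr) \mathbf{x} = \sum_i \mathbf{x}^\top \mathbf{B}_i \mathbf{x} \leq \sum_i \lambda_1(\mathbf{B}_i)$, and taking the supremum over $\mathbf{x}$ gives the claim. Applying this with $\mathbf{B}_i = \mathbf{A}_i \mathbf{A}_i^\top$, and noting $\lambda_1(\mathbf{A}_i \mathbf{A}_i^\top) = \sigma_1(\mathbf{A}_i)^2$, yields $\sigma_1(\mathbf{M})^2 = \lambda_1\bigl(\sum_i \mathbf{A}_i \mathbf{A}_i^\top\bigr) \leq \sum_i \sigma_1(\mathbf{A}_i)^2$, and taking square roots finishes the proof.

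I do not anticipate a real obstacle here — the result is standard. The only points requiring a little care are (i) confirming the block-matrix computation $\mathbf{M}\mathbf{M}^\top = \sum_i \mathbf{A}_i \mathbf{A}_i^\top$ (one should be careful about whether the $\mathbf{A}_i$ are stacked horizontally, giving $\mathbf{M}\mathbf{M}^\top$, versus vertically, which would instead give $\mathbf{M}^\top\mathbf{M} = \sum_i \mathbf{A}_i^\top \mathbf{A}_i$ — either way the eigenvalue argument applies); and (ii) making sure the identity $\sigma_1(\mathbf{M})^2 = \lambda_1(\mathbf{M}\mathbf{M}^\top)$ is invoked in the correct orientation for a non-square $\mathbf{M} \in \mathbb{R}^{n \times pn}$. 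An alternative, slightly slicker route avoiding the PSD-sum lemma is to argue directly on vectors: for a unit vector $\mathbf{x} \in \mathbb{R}^{pn}$ split into blocks $\mathbf{x} = (\mathbf{x}_1, \dots, \mathbf{x}_p)$, we have $\mathbf{M}\mathbf{x} = \sum_i \mathbf{A}_i \mathbf{x}_i$, so by the triangle inequality and Cauchy–Schwarz $\|\mathbf{M}\mathbf{x}\| \leq \sum_i \sigma_1(\mathbf{A}_i)\|\mathbf{x}_i\| \leq \sqrt{\sum_i \sigma_1(\mathbf{A}_i)^2}\,\sqrt{\sum_i \|\mathbf{x}_i\|^2} = \sqrt{\sum_i \sigma_1(\mathbf{A}_i)^2}$; taking the supremum over $\mathbf{x}$ gives the bound directly. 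I would present whichever of the two is shorter in context, likely the eigenvalue version since the surrounding proofs already work with $\sigma_1^2 = \lambda_1$ of Gram matrices.
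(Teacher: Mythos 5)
Your proposal is correct and follows essentially the same route as the paper's proof: $\sigma_1(\mathbf{M})^2 = \lambda_1(\mathbf{M}\mathbf{M}^\top) = \lambda_1\bigl(\sum_i \mathbf{A}_i\mathbf{A}_i^\top\bigr) \leq \sum_i \lambda_1(\mathbf{A}_i\mathbf{A}_i^\top) = \sum_i \sigma_1(\mathbf{A}_i)^2$. You merely add an explicit Rayleigh-quotient justification for the eigenvalue subadditivity step and sketch an alternative direct vector argument, neither of which changes the substance.
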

\begin{proof}
\begin{align}
    \sigma_1\left(\mathbf{M}\right)^2 &= \lambda_1\left(\mathbf{M} \mathbf{M}^\top\right) \\
    &= \lambda_1\left( \sum_{i=1}^p\mathbf{A}_i \mathbf{A}_i^\top  \right) \\
    &\leq \sum_{i=1}^p \lambda_1\left( \mathbf{A}_i \mathbf{A}_i^\top  \right) \\
    &\leq \sum_{i=1}^p \sigma_1\left( \mathbf{A}_i \right)^2 \\
    \Leftrightarrow \sigma_1\left(\mathbf{M}\right) &\leq \sqrt{\sum_{i=1}^p \sigma_1(\mathbf{A}_i)^2}
\end{align}
which concludes the proof.
\end{proof}

\begin{customtheorem}{5}[\textbf{Main Result}: Bound on the maximal singular value on the convolution operation] \label{th:theorem5}
Let us define doubly-block Toeplitz matrices $\mathbf{D}(f_{11}), \dots, \mathbf{D}(f_{\cin\times \cout})$ where $f_{ij}: \mathbb{R}^2 \rightarrow \mathbb{C}$ is a generating function. Construct a matrix $\mathbf{M}$ with $\cin\times n^2$ rows and $\cout\times n^2$ columns such as
{\small
\begin{equation}
    \mathbf{M} \triangleq  \leftmat\begin{array}{ccc}
    \mathbf{D}(f_{11}) & \cdots & \mathbf{D}(f_{1,\cout})   \\
    \vdots & & \vdots   \\
    \mathbf{D}(f_{\cin,1}) & \cdots & \mathbf{D}(f_{\cin,\cout}) \\
    \end{array}\rightmat .
\end{equation}}
Then, with $f_{ij}$ a multivariate polynomial of the same form as Equation~\ref{eq:muli_variate_poly_on_M}, we have:
\begin{equation}
   \sigma_1(\mathbf{M}) \leq \sqrt{ \sum_{i=1}^{\cout} \sup_{\omega_1, \omega_2 \in [0, 2\pi]^2} \sum_{j = 1}^{\cin} \left|f_{ij}(\omega_1, \omega_2) \right|^2 } .
\end{equation}
\end{customtheorem}

\begin{proof}
Let us define the matrix $\mathbf{M}_i$ as follows:
\begin{equation}
    \mathbf{M}_i = \leftmat \mathbf{D}(f_{1, i})^\top, \dots, \mathbf{D}(f_{\cin, i})^\top \rightmat^\top .
\end{equation}
We can express the matrix $\mathbf{M}$ as the concatenation of multiple $\mathbf{M}_i$ matrices:
\begin{equation}
    \mathbf{M} = \leftmat \mathbf{M}_1, \dots, \mathbf{M}_{\cout} \rightmat
\end{equation}
Then, we can bound the singular values of the matrix $\mathbf{M}$ as follows:
\begin{align}
    \sigma_1\left(\mathbf{M}\right) &\stackrel{\text{by Lemma~\ref{appendix-th:bound_concatenation_matrices}}}{\leq} \sqrt{\sum_{i=1}^{\cout} \sigma_1(\mathbf{M}_i)^2} \\
    \sigma_1\left(\mathbf{M}\right) &\stackrel{\text{by Theorem~\ref{th:theorem4}}}{\leq} \sqrt{\sum_{j=1}^{\cout} \sup_{\omega_1, \omega_2 \in [0, 2\pi]^2} \sum_{i=1}^{\cin} |f_{ij}(\omega_1, \omega_2)|^2 }
\end{align}
which concludes the proof. 
\end{proof}

\newpage
\sectionapp{Additional Results and Discussions on the Experiments}

\begin{table}[h]
  \centering
  \caption{This table shows the efficiency of LipBound computation vs the Power Method with 10 iterations on the full networks.}
  {\footnotesize
    \begin{tabular}{llllr}
    \toprule
      &   & \multicolumn{1}{c}{\textbf{LipBound (ms)}} & \multicolumn{1}{c}{\textbf{Power Method (ms)}} & \textbf{Ratio} \\
    \midrule
     \citealt{krizhevsky2012imagenet} & \textbf{AlexNet} & \phantom{....}$4.75\pm1.1$ & \phantom{....}$38.75\pm2.52$ & \textbf{8.14 }\\
    \midrule
    \multirow{5}[2]{*}{\citealt{he2016deep}} & \textbf{ResNet 18} & \phantom{..}$29.88\pm1.73$ & \phantom{..}$148.35\pm14.92$ & \textbf{4.96 }\\
      & \textbf{ResNet 34} & \phantom{..}$54.73\pm3.62$ & \phantom{..}$266.85\pm25.35$ & \textbf{4.87 }\\
      & \textbf{ResNet 50} & \phantom{..}$60.77\pm4.62$ & \phantom{..}$467.61\pm36.52$ & \textbf{7.69 }\\
      & \textbf{ResNet 101} & $102.72\pm11.53$ & \phantom{..}$817.06\pm102.87$ & \textbf{7.95 }\\
      & \textbf{ResNet 152} & $158.80\pm20.84$ & $1373.57\pm164.37$ & \textbf{8.64} \\
    \midrule
    \multirow{4}[2]{*}{\citealt{huang2017densely}} & \textbf{DenseNet 121} & $125.55\pm14.59$ & \phantom{..}$937.35\pm11.52$ & \textbf{7.46 }\\
      & \textbf{DenseNet 161} & $176.11\pm19.13$ & $1292.61\pm30.5$ & \textbf{7.33} \\
      & \textbf{DenseNet 169} & $188.03\pm19.74$ & $1372.62\pm21.16$ & \textbf{7.29} \\
      & \textbf{DenseNet 201} & $281.13\pm23.41$ & $1930.19\pm170.79$ & \textbf{6.86 }\\
    \midrule
    \multirow{4}[2]{*}{\citealt{simonyan2014very}} & \textbf{VGG 11} & \phantom{..}$13.73\pm1.19$ & \phantom{....}$81.78\pm4.45$ & \textbf{5.95 }\\
      & \textbf{VGG 13} & \phantom{..}$14.96\pm1.99$ & \phantom{..}$102.04\pm4.2$ & \textbf{6.82 }\\
      & \textbf{VGG 16} & \phantom{..}$21.92\pm1.94$ & \phantom{..}$132.29\pm5.99$ & \textbf{6.03 }\\
      & \textbf{VGG 19} & \phantom{..}$29.05\pm0.66$ & \phantom{..}$162.28\pm4.87$ & \textbf{5.58 }\\
    \midrule
    \citealt{zagoruyko2016wide} & \textbf{WideResnet 50-2} & $113.28\pm45.44$ & \phantom{..}$468.74\pm6.54$ & \textbf{4.13 }\\
    \midrule
    \multirow{2}[2]{*}{\citealt{iandola2016squeezenet}} & \textbf{SqueezeNet 1-0} & \phantom{..}$18.44\pm5.93$ & \phantom{....}$222.4\pm25.49$ & \textbf{12.05} \\
      & \textbf{SqueezeNet 1-1} & \phantom{..}$18.26\pm6.65$ & \phantom{....}$209.8\pm3.59$ & \textbf{11.48} \\
    \bottomrule
    \end{tabular}%
    }
  \label{tab:efficiency_lipbound_full_model}%
\end{table}%

The comparison of Table 1 of the main paper has been made with the following code provided by the authors: 
\begin{itemize}
  \item \makebox[4.6cm]{\citet{sedghi2018the}\hfill} \url{https://github.com/brain-research/conv-sv}
  \item \makebox[4.6cm]{\citet{singla2019bounding}\hfill} \url{https://github.com/singlasahil14/CONV-SV}
  \item \makebox[4.6cm]{\citet{farnia2018generalizable}\hfill} \url{https://github.com/jessemzhang/dl_spectral_normalization}
\end{itemize}
We translated the code of \citet{sedghi2018the} from TensorFlow to PyTorch in order to use the PyTorch CUDA Profiler.
We extended the experiments presented in Table 1 with Table~\ref{tab:efficiency_lipbound_full_model}.
This table shows the efficiency of LipBound computation vs the Power Method with 10 iterations on the full network (\ie on all the convolutions of each network). The Ratio represents the \emph{speed gain} between our proposed method and the Power Method.

  \end{testenv}

\end{document}